\newcommand{\bi}{\begin{itemize}}
\newcommand{\ei}{\end{itemize}}
\newcommand{\bal}{\begin{align}}
\newcommand{\eal}{\end{align}}
\newcommand{\EE}{\mathbb{E}}
\newcommand{\PP}{\mathbb{P}}
\newcommand{\bX}{\mathbf{X}}
\newcommand{\bZ}{\mathbf{Z}}
\newcommand{\bW}{\mathbf{W}}
\newcommand{\bx}{\mathbf{x}}
\newcommand{\by}{\mathbf{y}}
\newcommand{\bR}{\mathbf{R}}
\newcommand{\bw}{\mathbf{w}}
\newcommand{\bv}{\mathbf{v}}
\newcommand{\bu}{\mathbf{u}}
\newcommand{\bs}{\mathbf{s}}
\newcommand{\bGamma}{\mathbf{\Gamma}}
\newcommand{\bA}{\mathbf{A}}
\newcommand{\bC}{\mathbf{C}}
\newcommand{\bU}{\mathbf{U}}
\newcommand{\bV}{\mathbf{V}}
\newcommand{\bS}{\mathbf{S}}
\newcommand{\bI}{\mathbf{I}}
\newcommand{\bz}{\mathbf{z}}
\newcommand{\be}{\mathbf{e}}
\newcommand{\bH}{\mathbf{H}}
\newcommand{\bM}{\mathbf{M}}
\newcommand{\bq}{\mathbf{q}}
\newcommand{\bD}{\mathbf{D}}
\newcommand{\bzeta}{\mathbf{\zeta}}
\newcommand{\bK}{\mathbf{K}}
\newcommand{\bDelta}{\mathbf{\Delta}}
\newcommand{\tbDelta}{\tilde{\mathbf{\Delta}}}
\newcommand{\cN}{\mathcal{N}}
\newcommand{\cO}{\mathcal{O}}
\newcommand{\cG}{\mathcal{G}}
\newcommand{\cF}{\mathcal{F}}
\newcommand{\cL}{\mathcal{L}}
\newcommand{\cW}{\mathcal{W}}
\newcommand{\eps}{\epsilon}
\newcommand{\relu}{\text{relu}}
\newcommand{\rank}{{\rm{rank}}}
\newcommand{\bSigma}{\mathbf{\Sigma}}
\newcommand{\bsigma}{\mathbf{\sigma}}
\def\reals{\mathbb{R}}
\def\<{\langle}
\def\>{\rangle}
\definecolor{darkred}{RGB}{150,0,0}
\definecolor{darkgreen}{RGB}{0,150,0}
\definecolor{darkblue}{RGB}{0,0,200}
\numberwithin{equation}{section}
\def \endprf{\hfill {\vrule height6pt width6pt depth0pt}\medskip}
\newenvironment{proof}{\noindent {\bf Proof} }{\endprf\par}
\newtheorem{theorem}{\textbf{Theorem}}
\newtheorem{lemma}{\textbf{Lemma}}
\newtheorem{corollary}{\textbf{Corollary}}
\newtheorem{example}{\textbf{Example}}
\newtheorem{definition}{\textbf{Definition}}
 \newtheorem{conjecture}{\textbf{Conjecture}}
\newtheorem{proposition}{\textbf{Proposition}}
\newtheorem{assumption}{\textbf{Assumption}}
\newcommand{\diag}{{\rm{diag}}}
\title{{\huge Porcupine Neural Networks:\\ (Almost) All Local Optima are Global}}
\author{Soheil~Feizi, Hamid Javadi, Jesse Zhang and David Tse\\\\
Stanford University}
\date{}
\begin{document}
\maketitle

\begin{abstract}
Neural networks have been used prominently in several machine learning and statistics applications. In general, the underlying optimization of neural networks is non-convex which makes their performance analysis challenging. In this paper, we take a novel approach to this problem by asking whether one can constrain neural network weights to make its optimization landscape have good theoretical properties while at the same time, be a good approximation for the unconstrained one. For two-layer neural networks, we provide affirmative answers to these questions by introducing Porcupine Neural Networks (PNNs) whose weight vectors are constrained to lie over a finite set of lines. We show that most local optima of PNN optimizations are global while we have a characterization of regions where bad local optimizers may exist. Moreover, our theoretical and empirical results suggest that an unconstrained neural network can be approximated using a polynomially-large PNN.
\end{abstract}

\section{Introduction}
Neural networks have been used in several machine learning and statistical inference problems including regression and classification tasks. Some successful applications of neural networks and deep learning include speech recognition \cite{mohamed2012acoustic}, natural language processing \cite{collobert2008unified}, and image classiﬁcation \cite{krizhevsky2012imagenet}. The underlying neural network optimization is non-convex in general which makes its training NP-complete even for small networks \cite{blum1989training}. In practice, however, different variants of local search methods such as the gradient descent algorithm show excellent performance. Understanding the reason behind the success of such local search methods is still an open problem in the general case.

There has been several recent work in the theoretical literature aiming to study risk landscapes of neural networks and deep learning under various modeling assumptions. We review these work in Section \ref{subsec:prior}. In this paper, we study a key question whether an unconstrained neural network can be approximated with a constrained one whose optimization landscape has good theoretical properties. For two-layer neural networks, we provide an affirmative answer to this question by introducing a family of constrained neural networks which we refer to as {\it Porcupine Neural Networks (PNNs)} (Figure \ref{fig:PNN}). In PNNs, an incoming weight vector to a neuron is constrained to lie over a fixed line. For example, a neural network with multiple inputs and multiple neurons where each neuron is connected to one input is a PNN since input weight vectors to neurons lie over lines parallel to standard axes.

\begin{figure}[t]
\centering
  \includegraphics[width=0.7\linewidth]{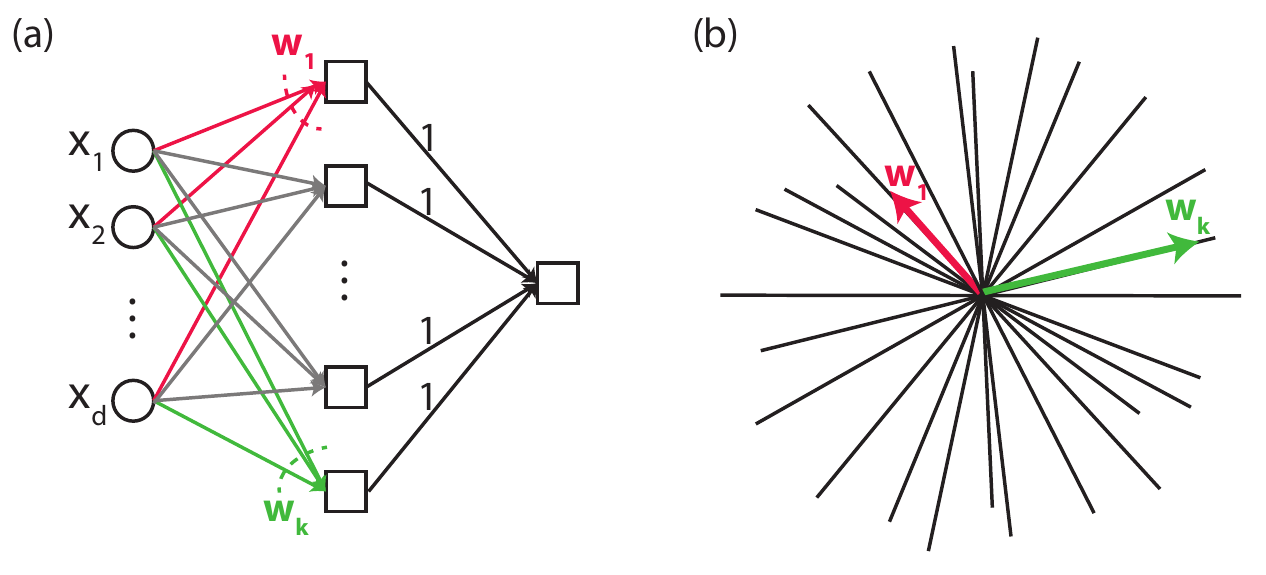}
\caption{(a) A two-layer Porcupine Neural Network (PNN). (b) In PNN, an incoming weight vector to a neuron is constrained to lie over a line in a $d$-dimensional space.}
\label{fig:PNN}
\end{figure}

We analyze population risk landscapes of two-layer PNNs with jointly Gaussian inputs and relu activation functions at hidden neurons. We show that under some modeling assumptions, most local optima of PNN optimizations are also global optimizers. Moreover, we characterize the parameter regions where bad local optima (i.e., local optimizers that are not global) may exist. In our analysis, we observe that a particular kernel function depicted in Figure \ref{fig:psi} plays an important role in characterizing population risk landscapes of PNNs. This kernel function is resulted from the computation of the covariance matrix of Gaussian variables restricted to a dual convex cone (Lemma \ref{lem:cov-truncated}). We will explain this observation in more detail.

Next, we study whether one can approximate an unconstrained (fully-connected) neural network function with a PNN whose number of neurons are polynomially-large in dimension. Our empirical results offer an affirmative answer to this question. For example, suppose the output data is generated using an unconstrained two-layer neural network with $d=15$ inputs and $k^*=20$ hidden neurons \footnote{Note that for both unconstrained and constrained neural networks, the second layer weights are assumed to be equal to one. The extension of the results to a more general case is an interesting direction for future work.}. Using this data, we train a random two-layer PNN with $k$ hidden neurons. We evaluate the PNN approximation error as the mean-squared error (MSE) normalized by the $L_2$ norm of the output samples in a two-fold cross validation setup. As depicted in Figure \ref{fig:PNN-intro}, by increasing the number of neurons of PNN, the PNN approximation error decreases. Notably, to obtain a relatively small approximation error, PNN's number of hidden neurons does not need to be exponentially large in dimension. We explain details of this experiment in Section \ref{sec:exp}.

\begin{figure}[t]
\centering
  \includegraphics[width=0.6\linewidth]{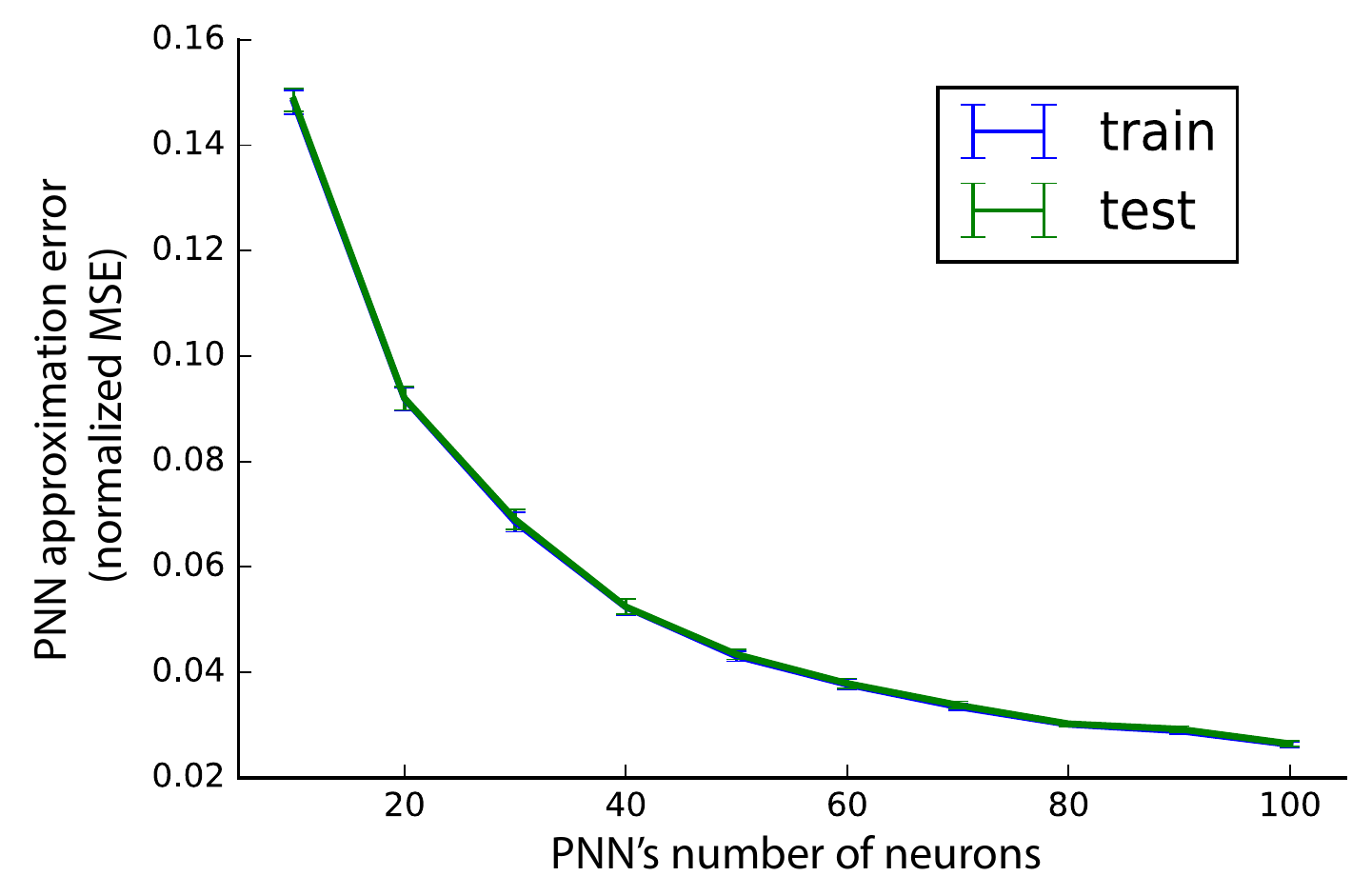}
\caption{Approximations of an unconstrained two-layer neural network with $d=15$ inputs and $k^*=20$ hidden neurons using random two-layer PNNs.}
\label{fig:PNN-intro}
\end{figure}

In Section \ref{sec:apx}, we study a characterization of the PNN approximation error with respect to the input dimension and the complexity of the unconstrained neural network function. We show that under some modeling assumptions, the PNN approximation error can be bounded by the spectral norm of the generalized Schur complement of a kernel matrix. We analyze this bound for random PNNs in the high-dimensional regime when the ground-truth data is generated using an unconstrained neural network with random weights. For the case where the dimension of inputs and the number of hidden neurons increase with the same rate, we compute the asymptotic limit. Moreover, we provide numerical results for the case when the number of hidden neurons grows with a polynomial rate in dimension. We also analyze a naive minimax approximation bound which requires PNN's number of neurons to be exponentially large in dimension.

Finally, in Section \ref{sec:conc}, we discuss how the proposed PNN framework can potentially be used to {\it explain} the success of local search methods such as gradient descent in solving the unconstrained neural network optimization.

\subsection{Notation}\label{sec:notation}
For matrices we use bold-faced upper case letters, for vectors we use bold-faced lower case letters, and for scalars we use regular lower case letters. For example, $\bX$ represents a matrix, $\bx$ represents a vector, and $x$ represents a scalar number. $\bI_{n}$ is the identity matrix of size $n\times n$. $\be_j$ is a vector whose $j$-th element is non-zero and its other elements are zero. $\mathbf{1}_{n_1,n_2}$ is the all one matrix of size $n_1\times n_2$. When no confusion arises, we drop the subscripts. $\mathbf{1}\{x=y\}$ is the indicator function which is equal to one if $x=y$, otherwise it is zero. $\relu(x)=\max(x,0)$. $Tr(\bX)$ and $\bX^t$ represent the trace and the transpose of the matrix $\bX$, respectively. $\|\bx\|_{2}=\bx^t \bx$ is the second norm of the vector $\bx$. When no confusion arises, we drop the subscript. $\|\bx\|_{1}$ is the $l_1$ norm of the vector $\bx$. $\|\bX\|$ is the operator (spectral) norm of the matrix $\bX$. $\|\bx\|_{0}$ is the number of non-zero elements of the vector $\bx$. $<\bx,\by>$ is the inner product between vectors $\bx$ and $\by$. $\bx \perp \by$ indicates that vectors $\bx$ and $\by$ are orthogonal. $\theta_{\bx,\by}$ is the angle between vectors $\bx$ and $\by$. $\cN(\mu,\bGamma)$ is the Gaussian distribution with mean $\mu$ and the covariance $\bGamma$. $f[\bA]$ is a matrix where the function $f(.)$ is applied to its components, i.e., $f[\bA](i,j)=f(\bA(i,j))$. $\bA^{\dagger}$ is the pseudo inverse of the matrix $\bA$. The eigen decomposition of the matrix $\bA\in\mathbb{R}^{n\times n}$ is denoted by $\bA=\sum_{i=1}^{n} \lambda_i(\bA) \bu_i(\bA) \bu_i(\bA)^t$, where $\lambda_i(\bA)$ is the $i$-th largest eigenvalue of the matrix $\bA$ corresponding to the eigenvector $\bu_i(\bA)$. We have $\lambda_1(\bA)\geq \lambda_2(\bA)\geq \cdots$.

%The eigen decomposition of the matrix $\bX\in\mathbb{R}^{n\times n}$ is denoted by $\bX=\sum_{i=1}^{n} \lambda_i(\bX) \bu_i(\bX) \bu_i(\bX)^t$, where $\lambda_i(\bX)$ is the $i$-th largest eigenvalue of the matrix $\bX$ corresponding to the eigenvector $\bu_i(\bX)$. We have $\lambda_1(\bX)\geq \lambda_2(\bX)\geq \cdots$. $\lambda(\bX)=(\lambda_1(\bX),\lambda_2(\bX),\cdots)^t$. $\bu_i(\bX)$ has a unit norm. Similarly the singular value decomposition of the matrix $\bY\in \mathbb{R}^{n\times m}$ is denoted by $\bY=\sum_{i=1}^{min(n,m)} \sigma_{i}(\bY) \bu_i(\bY) \bv_i(\bY)^t$ where $\sigma_i(\bY)$ is the $i$-th largest singular value of the matrix $\bY$ corresponding to the left and right singular eigenvectors $\bu_i(\bY)$ and $\bv_i(\bY)$, respectively. We have $\sigma_1(\bY)\geq \sigma_2(\bY)\geq \cdots$. $\sigma(\bY)=(\sigma_1(\bY),\sigma_2(\bY),\cdots)^t$. $\bu_i(\bY)$ and $\bv_i(\bY)$ are unit norm vectors.

\section{Unconstrained Neural Networks}\label{sec:formulation}
Consider a two-layer neural network with $k$ neurons where the input is in $\mathbb{R}^d$ (Figure \ref{fig:PNN}-a). The weight vector from the input to the $i$-th neuron is denoted by $\bw_i\in \mathbb{R}^d$. For simplicity, we assume that second layer weights are equal to one another. Let
\begin{align}\label{eq:output}
h(\bx; \bW):= \sum_{i=1}^{k} \phi\left(\bw_i^t \bx\right),
\end{align}
where $\bx=(x_1,...,x_d)^t$ and $\bW:=(\bw_1,\bw_2,...,\bw_k)\in\cW\subseteq \reals^{d\times k}$. The activation function at each neuron is assumed to be $\phi(z):=\relu(z)=\max(z,0)$.

Consider $\mathcal{F}$, the set of all functions
$f: \reals^d \to \reals$ where $f$ can be realized with a neural network described in \eqref{eq:output}. In other words,
\begin{align}\label{eq:Fdef}
\mathcal F := \left\{ f: \reals^d \to \reals;\;\; \exists \bW \in \cW, \; f(\bx) = h(\bx; \bW),\; \forall \bx \in \reals^d\right\}.
\end{align}
In a fully connected neural network structure, $\cW=\mathbb{R}^{d\times k}$. We refer to this case as the unconstrained neural network. Note that particular network architectures can impose constraints on $\cW$.

Let $\bx\sim \cN(0,\bI)$. We consider the population risk defined as the mean squared error (MSE):
\begin{align}\label{eq:loss-function}
L(\bW):= \EE\left[\left(h(\bx; \bW)-y\right)^2\right],
\end{align}
where $y$ is the output variable. If $y$ is generated by a neural network with the same architecture as of \eqref{eq:output}, we have $y=h(\bx; \bW_{true})$.

Understanding the population risk function is an important step towards characterizing the empirical risk landscape \cite{mei2016landscape}. In this paper, for simplicity, we only focus on the population risk.

The neural network optimization can be written as follows:
\begin{align}\label{opt:neural-network}
\min_{\bW}\quad &L(\bW)\\
&\bW\in \cW.\nonumber
\end{align}

Let $\bW^*$ be a global optimum of this optimization. $L(\bW^*)=0$ means that $y$ can be generated by a neural network with the same architecture (i.e., $\bW_{true}$ is a global optimum.). We refer to this case as the {\it matched} neural network optimization. Moreover, we refer to the case of $L(\bW^*)>0$ as the {\it mismatched} neural network optimization. Optimization \eqref{opt:neural-network} in general is non-convex owing to nonlinear activation functions in neurons.

\section{Porcupine Neural Networks}\label{sec:PNN}
Characterizing the landscape of the objective function of optimization \eqref{opt:neural-network} is challenging in general. In this paper, we consider a constrained version of this optimization where weight vectors belong to a finite set of lines in a $d$-dimensional space (Figure \ref{fig:PNN}). This constraint may arise either from the neural network architecture or can be imposed by design.

Mathematically, let $\cL=\{L_1,...,L_r\}$ be a set of lines in a $d$-dimensional space. Let $\cG_{i}$ be the set of neurons whose incoming weight vectors lie over the line $L_i$. Therefore, we have $\cG_1\cup ... \cup \cG_r =\{1,...,k\}$. Moreover, we assume $\cG_i\neq \emptyset$ for $1\leq i\leq r$ otherwise that line can be removed from the set $\cL$. For every $j\in \cG_i$, we define the function $g(.)$ such that $g(j)=i$.

For a given set $\cL$ and a neuron-to-line mapping $\cG$, we define $\cF_{\cL,\cG}\subseteq \cF$ as the set of all functions that can be realized with a neural network \eqref{eq:output} where $\bw_i$ lies over the line $L_{g(i)}$. Namely,

\begin{align}\label{eq:PNN-fun}
\cF_{\cL,\cG} := \left\{ f: \reals^d \to \reals;\;\; \exists \bW=(\bw_1,...,\bw_k), \bw_i\in L_{g(i)}, \; f(\bx) = h(\bx; \bW),\; \forall \bx \in \reals^d\right\}.
\end{align}
We refer to this family of neural networks as Porcupine Neural Networks (PNNs).

\begin{figure}
\centering
  \includegraphics[width=0.7\linewidth]{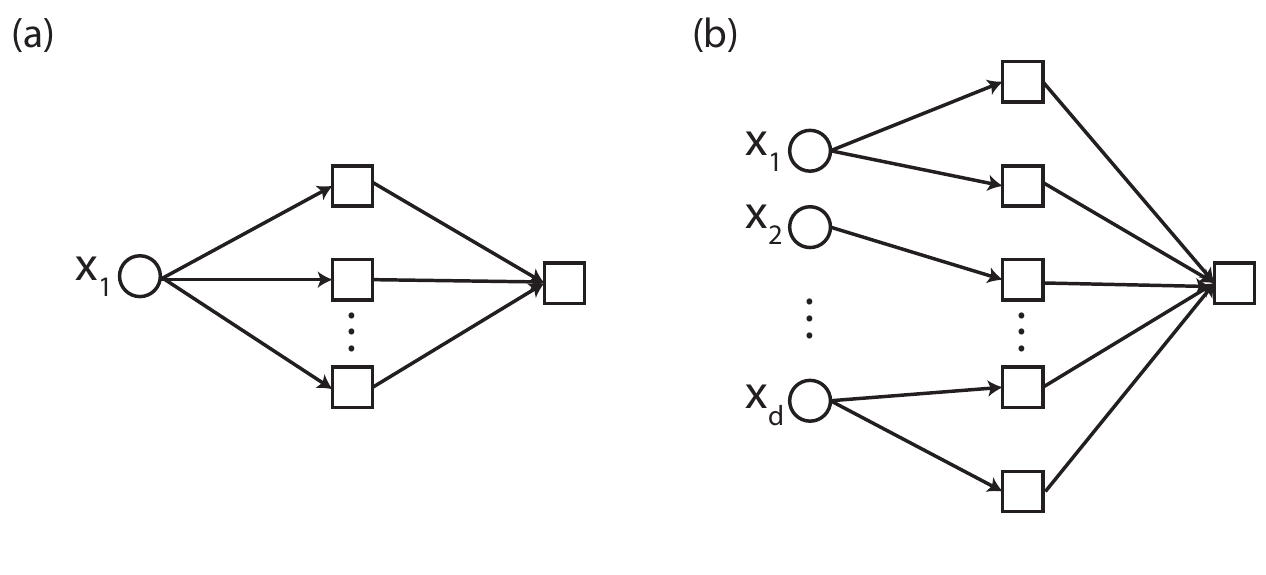}
\caption{Examples of (a) scalar PNN, and (b) degree-one PNN structures.}
\label{fig:PNN-ex}
\end{figure}

In some cases, the PNN constraint is imposed by the neural network architecture. For example, consider the neural network depicted in Figure \ref{fig:PNN-ex}-a, which has a single input and $k$ neurons. In this network structure, $\bw_i$'s are scalars. Thus, every realizable function with this neural network can be realized using a PNN where $\cL$ includes a single line. We refer to this family of neural networks as scalar PNNs. Another example of porcupine neural networks is depicted in Figure \ref{fig:PNN-ex}-b. In this case, the neural network has multiple inputs and multiple neurons. Each neuron in this network is connected to one input. Every realizable function with this neural network can be described using a PNN whose lines are parallel to standard axes. We refer to this family of neural networks as degree-one PNNs. Scalar PNNs are also degree-one PNNs. However, since their analysis is simpler, we make such a distinction.

In general, functions described by PNNs (i.e., $\cF_{\cL,\cG}$) can be viewed as angular discretizations of functions described by unconstrained neural networks (i.e., $\cF$). By increasing the size of $|\cL|$ (i.e., the number of lines), we can approximate every $f\in \cF$ by $\hat{f}\in \cF_{\cL,\cG}$ arbitrarily closely. Thus, characterizing the landscape of the loss function over PNNs can help us to understand the landscape of the unconstrained loss function.

The PNN optimization can be written as
\begin{align}\label{opt:neural-network-PNN}
\min_{\bW}\quad &L(\bW)\\
&\bw_i\in L_{g(i)}\quad 1\leq i\leq k.\nonumber
\end{align}
Matched and mismatched PNN optimizations are defined similar to the unconstrained ones. In this paper, we characterize the population risk landscape of the PNN optimization \eqref{opt:neural-network-PNN} in both matched and mismatched cases. In Section \ref{sec:matched}, we consider the matched PNN optimization, while in Section \ref{sec:mismatched}, we study the mismatched one. Then, in Section \ref{sec:apx}, we study approximations of unconstrained neural network functions with PNNs.

Note that a PNN can be viewed as a neural network whose feature vectors (i.e., input weight vectors to neurons) are fixed up to scalings due to the PNN optimization. This view can relate a random PNN (i.e., a PNN whose lines are random) to the application of random features in kernel machines \cite{rahimi2008random}. Although our results in Sections \ref{sec:matched}, \ref{sec:mismatched} and \ref{sec:apx} are for general PNNs, we study them for random PNNs in Section \ref{sec:apx} as well.

\section{Related Work}\label{subsec:prior}
To explain the success of neural networks, some references study their ability to approximate smooth functions \cite{barron1993universal,yukich1995sup,klusowski2016uniform,klusowski2017minimax,lee2017ability,ferrari2005smooth,giryes2016deep}, while some other references focus on benefits of having more layers \cite{telgarsky2016benefits,liang2016deep}. Over-parameterized networks where the number of parameters are larger than the number of training samples have been studied in \cite{soltanolkotabi2017theoretical,nguyen2017loss}. However, such architectures can cause generalization issues in practice \cite{zhang2016understanding}.

References \cite{mei2016landscape,hazan2015beyond,kakade2011efficient,soltanolkotabi2017learning} have studied the convergence of the local search algorithms such as gradient descent methods to the global optimum of the neural network optimization with zero hidden neurons and a single output. In this case, the loss function of the neural network optimization has a single local optimizer which is the same as the global optimum. However, for neural networks with hidden neurons, the landscape of the loss function is more complicated than the case with no hidden neurons.

Several work has studied the risk landscape of neural network optimizations for more complex structures under various model assumptions \cite{kawaguchi2016deep,yun2017global,soudry2016no,choromanska2015loss,tian2016symmetry,tian2017analytical,zhong2017recovery,brutzkus2017globally,zhang2017electron,li2017convergence,janzamin2015beating}. Reference \cite{kawaguchi2016deep} shows that in the linear neural network optimization, the population risk landscape does not have any bad local optima. Reference \cite{yun2017global} extends these results and provides necessary and sufficient conditions for a critical point of the loss function to be a global minimum. Reference \cite{soudry2016no} shows that for a two-layer neural network with leaky activation functions, the gradient descent method on a modified loss function converges to a global optimizer of the modified loss function which can be different from the original global optimum. Under an independent activations assumption, reference \cite{choromanska2015loss} simplifies the loss function of a neural network optimization to a polynomial and shows that local optimizers obtain approximately the same objective values as the global ones. This result has been extended by reference \cite{kawaguchi2016deep} to show that all local minima are global minima in a nonlinear network. However, the underlying assumption of having independent activations at neurons usually are not satisfied in practice.

References \cite{tian2016symmetry,tian2017analytical,zhong2017recovery} consider a two-layer neural network with Gaussian inputs under a matched (realizable) model where the output is generated from a network with planted weights. Moreover, they assume the number of neurons in the hidden layer is smaller than the dimension of inputs. This critical assumption makes the loss function positive-definite in a small neighborhood near the global optimum. Then, reference \cite{zhong2017recovery} provides a tensor-based method to initialize the local search algorithm in that neighborhood which guarantees its convergence to the global optimum. In our problem formulation, the number of hidden neurons can be larger than the dimension of inputs as it is often the case in practice. Moreover, we characterize risk landscapes for a certain family of neural networks in all parameter regions, not just around the global optimizer. This can guide us towards understanding the reason behind the success of local search methods in practice.

For a neural network with a single non-overlapping convolutional layer, reference \cite{brutzkus2017globally} shows that all local optimizers of the loss function are global optimizers as well. They also show that in the overlapping case, the problem is NP-hard when inputs are not Gaussian. Moreover, reference \cite{zhang2017electron} studies this problem with non-standard activation functions, while reference \cite{li2017convergence} considers the case where the weights from the hidden layer to the output are close to the identity. Other related works include improper learning models using kernel based approaches \cite{goel2016reliably,zhang2016l1} and a method of moments estimator using tensor decomposition \cite{janzamin2015beating}.

\section{Population Risk Landscapes of Matched PNNs}\label{sec:matched}
In this section, we analyze the population risk landscape of matched PNNs. In the matched case, the set of lines $\cL$ and the neuron-to-line mapping $\cG$ of a PNN used for generating the data are assumed to be known in training as well. We consider the case where these are unknowns in training in Section \ref{sec:mismatched}.

\begin{figure}
\centering
  \includegraphics[width=0.4\linewidth]{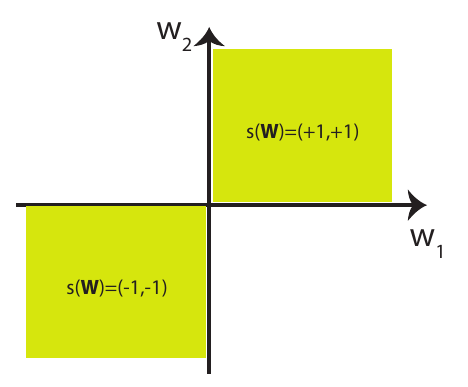}
\caption{For the scalar PNN, parameter regions where $s(\bW)=\pm \mathbf{1}$ may include bad local optima. In other regions, all local optima are global. This figure highlights regions where $s(\bW)=\pm \mathbf{1}$ for a scalar PNN with two neurons.}
\label{fig:region-ex}
\end{figure}

\subsection{Scalar PNNs}\label{sec:scalar}
In this section, we consider a neural network structure with a single input and multiple neurons (i.e., $d=1$, $k>1$). Such neural networks are PNNs with $\cL$ containing a single line. Thus, we refer to them as scalar PNNs. An example of a scalar PNN is depicted in Figure \ref{fig:PNN-ex}-a. In this case, every $\bw_i$ for $1\leq i\leq k$ is a single scalar. We refer to that element by $w_i$. We assume $w_i$'s are non-zero, otherwise the neural network structure can be reduced to another structure with fewer neurons.

\begin{theorem}\label{thm:scalar-global}
The loss function \eqref{eq:loss-function} for a scalar PNN can be written as
\begin{align}\label{eq:loss-scalar}
L(\bW)=\frac{1}{4}\left(\sum_{i=1}^{k} w_i-\sum_{i=1}^{k} w_i^*\right)^2+\frac{1}{4}\left(\sum_{i=1}^{k} |w_i|-\sum_{i=1}^{k} |w_i^*|\right)^2.
\end{align}
\end{theorem}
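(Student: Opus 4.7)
The plan is a direct computation, exploiting the fact that in the scalar case ($d=1$), the relu activation becomes essentially piecewise linear with only two pieces ($x>0$ and $x<0$), which drastically simplifies the model.

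First I would rewrite each neuron's output. For scalar $w_i$ and $x$, observe that $\relu(w_ix)=w_i^+\,\relu(x)+w_i^-\,\relu(-x)$, where $w_i^+:=\max(w_i,0)$ and $w_i^-:=\max(-w_i,0)$. Summing over $i$, define $A:=\sum_i w_i^+$ and $B:=\sum_i w_i^-$, so that
\begin{align*}
h(x;\bW)=A\,\relu(x)+B\,\relu(-x),
\end{align*}
and similarly $y=h(x;\bW^*)=A^*\relu(x)+B^*\relu(-x)$ with $A^*,B^*$ analogously defined from $\bW^*$. Hence
\begin{align*}
h(x;\bW)-y=(A-A^*)\,\relu(x)+(B-B^*)\,\relu(-x).
\end{align*}

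Next I would square and take expectations. The crucial observation is that $\relu(x)\cdot\relu(-x)\equiv 0$, so the cross term vanishes, leaving only the diagonal contributions. Since $x\sim\cN(0,1)$, symmetry gives $\EE[\relu(x)^2]=\EE[\relu(-x)^2]=\frac{1}{2}\EE[x^2]=\frac{1}{2}$. Therefore
\begin{align*}
L(\bW)=\tfrac{1}{2}\bigl[(A-A^*)^2+(B-B^*)^2\bigr].
\end{align*}

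Finally I would translate back to $\sum_i w_i$ and $\sum_i|w_i|$. Since $w_i=w_i^+-w_i^-$ and $|w_i|=w_i^++w_i^-$, we have $A-B=\sum_i w_i$ and $A+B=\sum_i|w_i|$, and likewise for the starred quantities. Setting $\alpha:=A-A^*$, $\beta:=B-B^*$, the polarization identity $\alpha^2+\beta^2=\tfrac{1}{2}\bigl[(\alpha+\beta)^2+(\alpha-\beta)^2\bigr]$ then yields
\begin{align*}
L(\bW)=\tfrac{1}{4}\Bigl[\bigl(\textstyle\sum_i w_i-\sum_i w_i^*\bigr)^2+\bigl(\sum_i|w_i|-\sum_i|w_i^*|\bigr)^2\Bigr],
\end{align*}
which is the claimed identity. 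There is no real obstacle here; the only mildly non-obvious step is noticing that $\relu(x)\relu(-x)\equiv 0$ kills the cross term so that the loss cleanly splits into the $x>0$ and $x<0$ half-line contributions, and that the two natural coordinates $A\pm B$ are exactly the signed-sum and absolute-sum appearing in the theorem.
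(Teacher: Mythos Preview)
Your proof is correct and is actually cleaner than the paper's, but it takes a different route. The paper expands the square $\bigl(\sum_i\relu(w_ix)-\sum_i\relu(w_i^*x)\bigr)^2$ into three double sums, then evaluates each pairwise term using the identity $\EE\bigl[\mathbf{1}\{w_ix>0,w_jx>0\}\,x^2\bigr]=\tfrac{1+s(w_i)s(w_j)}{4}$ (their Lemma~\ref{lem:scalar}), and regroups the resulting sign expressions $s(w_i)s(w_j)$ into the two squared sums. You instead first observe that the whole network collapses to a two-parameter model $A\,\relu(x)+B\,\relu(-x)$ in an orthogonal basis (since $\relu(x)\relu(-x)\equiv 0$), compute the loss directly in those coordinates, and only at the end rotate $(A,B)$ to $(\sum w_i,\sum|w_i|)$ via polarization. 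Your argument is shorter and sidesteps the sign bookkeeping entirely. The paper's pairwise-expansion approach, however, is the one that generalizes: for degree-one and general PNNs (Theorems~\ref{thm:degree-one-global} and~\ref{thm:global-quan}) the relevant basis functions $\relu(\bu_i^t\bx)$ are no longer orthogonal, and the cross terms produce exactly the kernel matrix $\psi[\bK_{\cL}]$, so the pairwise computation via Lemma~\ref{lem:cov-truncated} becomes essential there.
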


\begin{proof}
See Section \ref{subsec:proof-thm-global}.
\end{proof}

Since for a scalar PNN, the loss function $L(\bW)$ can be written as sum of squared terms, we have the following corollary:

\begin{corollary}\label{cor:scalar-global}
For a scalar PNN, $\bW$ is the global optimizer of optimization \eqref{opt:neural-network-PNN} if and only if
\begin{align}\label{eq:global}
 \sum_{i=1}^{k} w_i&= \sum_{i=1}^{k} w_i^*,\\
 \sum_{i=1}^{k} |w_i|&= \sum_{i=1}^{k} |w_i^*|.\nonumber
\end{align}
\end{corollary}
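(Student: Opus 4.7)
The plan is to derive the corollary as a direct consequence of Theorem \ref{thm:scalar-global}, which gives the closed-form expression $L(\bW)=\frac{1}{4}(\sum_i w_i-\sum_i w_i^*)^2+\frac{1}{4}(\sum_i |w_i|-\sum_i |w_i^*|)^2$. The central observation is that this expression is a sum of two non-negative squared terms, so it is globally minimized exactly when both terms vanish simultaneously.

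First, I would record that the setting is the matched PNN optimization, so by definition $\bW^*$ (the planted weight configuration) lies in the feasible set of \eqref{opt:neural-network-PNN} and achieves $L(\bW^*)=0$ upon substitution into \eqref{eq:loss-scalar}. Combined with the obvious inequality $L(\bW)\geq 0$ coming from the squared-sum form, this identifies the global minimum value of $L$ as $0$ and identifies the set of global optimizers with the zero set of $L$.

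Next, I would characterize that zero set. Since $L(\bW)$ is a sum of two squares, $L(\bW)=0$ if and only if each square is individually zero, i.e.,
\begin{equation*}
\sum_{i=1}^{k} w_i-\sum_{i=1}^{k} w_i^*=0 \quad \text{and} \quad \sum_{i=1}^{k} |w_i|-\sum_{i=1}^{k} |w_i^*|=0.
\end{equation*}
Rearranging yields precisely the two equalities in \eqref{eq:global}. The converse is immediate by substitution: if $\bW$ satisfies both equalities, then \eqref{eq:loss-scalar} evaluates to zero, matching the global minimum value established above, hence $\bW$ is a global optimizer.

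There is no real obstacle here beyond invoking the theorem correctly and justifying that the infimum of $L$ over the feasible set equals $0$ in the matched case (so that the two equalities characterize global optima rather than merely characterizing critical points of some sort). The argument is essentially a one-line consequence of the sum-of-squares form, and the entire proof amounts to stating that a sum of two nonnegative terms vanishes iff each vanishes, together with feasibility of $\bW^*$.
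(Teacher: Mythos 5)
Your proof is correct and mirrors the paper's reasoning exactly: the paper derives the corollary directly from the sum-of-two-squares form of \eqref{eq:loss-scalar} (noting this explicitly before the corollary and at the end of the proof of Theorem \ref{thm:scalar-global}), with the implicit observation that the matched $\bW^*$ is feasible and achieves $L=0$. No meaningful differences.
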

Next, we characterize local optimizers of optimization \eqref{opt:neural-network-PNN}.

Let $s(w_i)$ be the sign variable of $w_i$, i.e., $s(w_i)=1$ if $w_i>0$, otherwise $s(w_i)=-1$. Let $s(\bW)\triangleq (s(w_1),...,s(w_k))^t$. Let $R(\bs)$ denote the space of all $\bW$ where $s_i=s(w_i)$, i.e., $R(\bs)\triangleq \{(w_1,...,w_k): s(w_i)=s_i\}$.

%\begin{comment}
%\end{comment}

\begin{theorem}\label{thm:local}
If $s(\bW^*)\neq \pm \mathbf{1}$:
\begin{itemize}
  \item [-] In every region $R(\bs)$ whose $\bs \neq \pm \mathbf{1}$, optimization \eqref{opt:neural-network-PNN} only has global optimizers without any bad local optimizers.
  \item [-] In two regions $R(\mathbf{1})$ and $R(-\mathbf{1})$, optimization \eqref{opt:neural-network-PNN} does not have global optimizers and only has bad local optimizers.
\end{itemize}
If $s(\bW^*)= \pm \mathbf{1}$:
\begin{itemize}
  \item [-] In regions $R(\bs)$ where $\bs \neq \pm \mathbf{1}$ and in the region $R(-s(\bW^*))$, optimization \eqref{opt:neural-network-PNN} neither has global nor bad local optimizers.
  \item [-] In the region $R(s(\bW^*))$, optimization \eqref{opt:neural-network-PNN} only has global optimizers without any bad local optimizers.
\end{itemize}
\end{theorem}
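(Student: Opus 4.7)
The plan is to work interior-by-interior on each sign region, using the closed form of $L$ from Theorem \ref{thm:scalar-global}. Set $a:=\sum_{i=1}^{k} w_i^*$ and $b:=\sum_{i=1}^{k}|w_i^*|$, and on the open region $R(\bs)$ introduce the linear functionals $u(\bW):=\sum_i w_i$ and $v(\bW):=\sum_i s_i w_i=\sum_i |w_i|$. Theorem \ref{thm:scalar-global} then gives
\begin{align*}
L(\bW)\;=\;\tfrac{1}{4}(u-a)^2+\tfrac{1}{4}(v-b)^2 \quad \text{on} \quad R(\bs),
\end{align*}
which is smooth and convex in $(u,v)$. Since $R(\bs)$ is open, any local optimizer of $L$ that lies in $R(\bs)$ must satisfy the first-order condition $\partial L/\partial w_i = \tfrac{1}{2}(u-a)+\tfrac{s_i}{2}(v-b)=0$ for every $i$, and conversely convexity ensures that any such critical point is a genuine local minimum of $L$ on $\reals^k$.

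When $\bs$ has both $+1$ and $-1$ entries, evaluating the stationarity condition at two indices of opposite sign forces $u=a$ and $v=b$ simultaneously. The image of $R(\bs)$ under $(u,v)$ is $\{(u,v):v>|u|\}$, because the sum of the positive weights equals $(u+v)/2$ and the sum of the absolute values of the negative weights equals $(v-u)/2$, both required to be strictly positive. Hence the critical locus is nonempty in the interior of $R(\bs)$ iff $b>|a|$, which is exactly the condition $s(\bW^*)\neq\pm\mathbf{1}$. In that situation the critical set is a $(k-2)$-dimensional submanifold on which $L=0$, yielding a set of global (hence non-bad) optima; in the complementary case $s(\bW^*)=\pm\mathbf{1}$ one has $b=|a|$, the pair $(a,b)$ lies on the boundary of the feasible image, and $R(\bs)$ contains no local optimizer at all.

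Second, for $\bs=\mathbf{1}$ (the case $\bs=-\mathbf{1}$ is symmetric) we have $v=u$, the stationarity condition collapses to $u=(a+b)/2$, and the resulting $(k-1)$-dimensional critical manifold carries the loss value $(a-b)^2/8$. It sits inside $R(\mathbf{1})=\{u>0\}$ iff $a+b>0$, equivalently $s(\bW^*)\neq-\mathbf{1}$; and its loss vanishes iff $a=b$, equivalently $s(\bW^*)=\mathbf{1}$. This gives the three needed subcases in $R(\mathbf{1})$: if $s(\bW^*)=\mathbf{1}$ the critical set consists of global optima; if $s(\bW^*)$ is mixed the critical set consists of bad local optima; and if $s(\bW^*)=-\mathbf{1}$ there are no critical points, and hence no local optima (the would-be minimizer is pushed to the boundary $u=0$). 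The symmetric computation handles $R(-\mathbf{1})$, and assembling the four region types against the two cases for $s(\bW^*)$ reproduces every bullet of the theorem.

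The main subtlety I would have to handle carefully is the boundary accounting: whenever the $(u,v)$-quadratic's unique minimizer falls outside the feasible image of $R(\bs)$, one must verify that no local optimum hides inside. This reduces to the observation that strict convexity in $(u,v)$ makes the gradient of $L|_{R(\bs)}$ nonvanishing throughout the open region in that scenario, so the infimum of $L|_{R(\bs)}$ is approached only along sequences that exit through some coordinate face $w_i\to 0$ and is never attained in the interior.
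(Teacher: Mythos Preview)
Your argument is correct and follows the same core strategy as the paper: within each open sign region $R(\bs)$ you analyze the first-order stationarity conditions and exploit a pair of opposite-sign coordinates to decouple the two linear equations. The paper computes $\nabla_{w_j}L$ directly from the expectation via Lemma~\ref{lem:scalar}, whereas you differentiate the closed form of Theorem~\ref{thm:scalar-global}; both routes yield the identical gradient $\tfrac12(u-a)+\tfrac{s_j}{2}(v-b)$.

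Your treatment is in fact more complete than the paper's. The $(u,v)$ reparametrization and the accompanying image analysis (that $R(\bs)$ maps onto $\{v>|u|\}$ when $\bs$ is mixed, and onto the ray $u=v>0$ when $\bs=\mathbf{1}$) let you verify \emph{feasibility} of the critical set in every subcase, which the paper's proof only sketches. In particular, the paper does not explicitly argue the second half of the theorem (the $s(\bW^*)=\pm\mathbf{1}$ case) or confirm that the bad local minima in $R(\pm\mathbf{1})$ actually lie in the interior; your computation that the critical value $u=(a+b)/2$ is positive iff $s(\bW^*)\neq -\mathbf{1}$, and carries loss $(a-b)^2/8$, fills this gap cleanly.
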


\begin{proof}
See Section \ref{subsec:proof-thm-local}.
\end{proof}

Theorem \ref{thm:local} indicates that optimization \eqref{opt:neural-network-PNN} can have bad local optimizers. However, this can occur only in two parameter regions, out of $2^k$ regions, which can be checked separately (Figure \ref{fig:region-ex}). Thus, a variant of the gradient descent method which checks these cases separately converges to a global optimizer.

\begin{figure}
\centering
  \includegraphics[width=\linewidth]{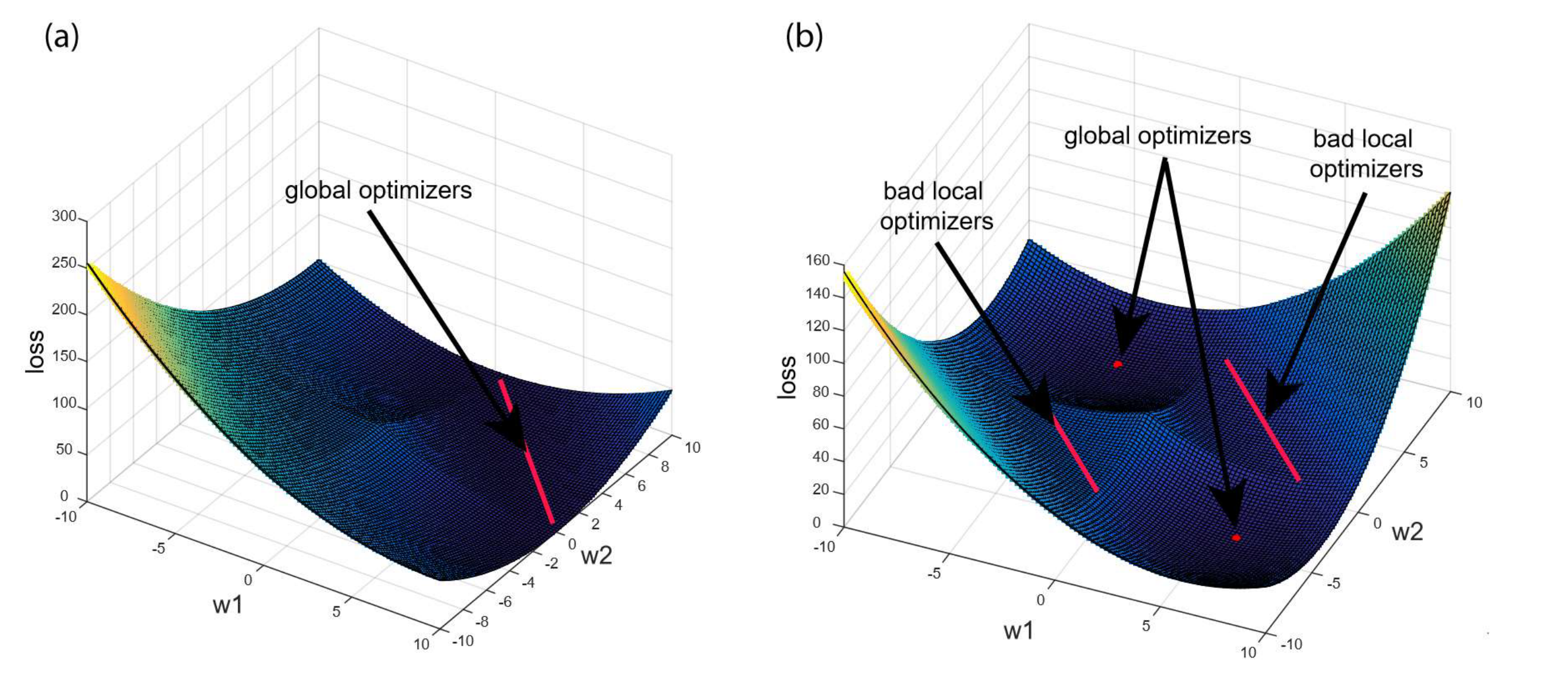}
\caption{The landscape of the loss function for a scalar PNN with two neurons. In panel (a), we consider $w_1^*=6$ and $w_2^*=4$, while in panel (b), we have $w_1^*=6$ and $w_2^*=-4$. According to Theorem \ref{thm:local}, in the case of panel (a), the loss function does not have bad local optimizers, while in the case of panel (b), it has bad local optimizers in regions $R\left((-1,-1)\right)$ and $R\left((1,1)\right)$.}
\label{fig:scalar}
\end{figure}

Next, we characterize the Hessian of the loss function:
\begin{theorem}\label{thm:hessian-scalar}
For a scalar PNN, in every region $R(\bs)$, the Hessian matrix of the loss function $L(\bW)$ is positive semidefinite, i.e.,  in every region $R(\bs)$, the loss function is convex. In regions $R(\bs)$ where $\bs\neq \pm \mathbf{1}$, the rank of the Hessian matrix is two, while in two regions $R(\pm \mathbf{1})$, the rank of the Hessian matrix is equal to one.
\end{theorem}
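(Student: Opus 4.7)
The plan is to work directly from the closed-form expression for $L(\bW)$ given in Theorem \ref{thm:scalar-global} and differentiate it twice after the sign pattern has been fixed. Inside a region $R(\bs)$, every $w_i$ keeps its sign, so $|w_i| = s_i w_i$. Substituting this into \eqref{eq:loss-scalar} gives, on $R(\bs)$,
\begin{equation*}
L(\bW)\;=\;\tfrac{1}{4}\bigl(\mathbf{1}^t\bW - c_1\bigr)^2 \;+\; \tfrac{1}{4}\bigl(\bs^t\bW - c_2(\bs)\bigr)^2,
\end{equation*}
where $c_1 = \sum_i w_i^*$ and $c_2(\bs) = \sum_i |w_i^*|$ are constants that depend only on $\bW^*$ and on the fixed sign pattern $\bs$, not on $\bW$. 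So inside the open region $R(\bs)$, $L$ is a smooth quadratic in $\bW$.

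Next I would differentiate twice to obtain the Hessian explicitly. Since each of the two terms is of the form $\tfrac{1}{4}(\bv^t \bW - c)^2$ whose Hessian is $\tfrac{1}{2}\bv\bv^t$, I get
\begin{equation*}
\nabla^2 L(\bW) \;=\; \tfrac{1}{2}\,\mathbf{1}\mathbf{1}^t \;+\; \tfrac{1}{2}\,\bs\bs^t.
\end{equation*}
This expression is the sum of two rank-one positive semidefinite matrices, hence is itself PSD; this immediately proves convexity of $L$ on each region $R(\bs)$.

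For the rank claim, I would simply note that the range of $\nabla^2 L(\bW)$ is spanned by $\mathbf{1}$ and $\bs$. When $\bs = \pm \mathbf{1}$ these two vectors are collinear, so $\bs\bs^t = \mathbf{1}\mathbf{1}^t$ and the Hessian reduces to $\mathbf{1}\mathbf{1}^t$, a rank-one matrix. When $\bs \ne \pm \mathbf{1}$, the coordinate structure guarantees that $\mathbf{1}$ and $\bs$ are linearly independent (some components of $\bs$ are $+1$ and some are $-1$, while $\mathbf{1}$ has all equal components), so the Hessian has rank exactly two.

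The proof has no real obstacle once Theorem \ref{thm:scalar-global} is in hand: the key observation is just that fixing the sign pattern linearizes the absolute-value terms and turns $L$ into a sum of two squared linear forms, whose Hessian is a sum of two rank-one matrices. The only mild subtlety is being careful that $R(\bs)$ is an open region, so that differentiating termwise through $|w_i| = s_i w_i$ is valid; at the region boundaries (where some $w_i = 0$) $L$ is only piecewise quadratic and the Hessian is not defined in the classical sense, but the statement is only about the interior of each $R(\bs)$.
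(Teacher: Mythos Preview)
Your proof is correct and lands on exactly the same Hessian formula as the paper, $\bH=\tfrac{1}{2}\mathbf{1}\mathbf{1}^t+\tfrac{1}{2}\bs\bs^t$, with the same PSD and rank analysis. The only cosmetic difference is that the paper obtains this by differentiating the original expectation directly (via Lemma~\ref{lem:scalar}), whereas you differentiate the already-simplified closed form from Theorem~\ref{thm:scalar-global}; since Theorem~\ref{thm:scalar-global} itself rests on Lemma~\ref{lem:scalar}, the two routes are equivalent reorderings of the same computation.
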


\begin{proof}
See Section \ref{subsection:proof-hessian-scalar}.
\end{proof}

Finally, for a scalar PNN, we illustrate the landscape of the loss function with an example. Figure \ref{fig:scalar} considers the case with a single input and two neurons (i.e., $d=1$, $k=2$). In Figure \ref{fig:scalar}-a, we assume $w_1^*=6$ and $w_2^*=4$. According to Theorem \ref{thm:local}, only the region $R\left((1,1)\right)$ contains global optimizers (all points in this region on the line $w_1+w_2=10$ are global optimizers.). In Figure \ref{fig:scalar}-b, we consider $w_1^*=6$ and $w_2^*=-4$. According to Theorem \ref{thm:local}, regions $R\left((1,-1)\right)$ and $R\left((-1,1)\right)$ have global optimizers, while regions $R\left((1,1)\right)$ and $R\left((-1,-1)\right)$ include bad local optimizers.

\subsection{Degree-One PNNs}\label{sec:degree-one}
In this section, we consider a neural network structure with more than one input and multiple neurons ($d\geq 1$ and $k\geq 1$) such that each neuron is connected to one input. Such neural networks are PNNs whose lines are parallel to standard axes. Thus, we refer to them as degree-one PNNs.

Similar to the scalar PNN case, in the case of the degree-one PNN, every $\bw_i$ has one non-zero element. We refer to that element by $w_i$. Let $\cG_{r}$ be the set of neurons that are connected to the variable $x_r$, i.e., $\cG_{r}=\{j:\bw_j(r)\neq 0\}$. Therefore, we have $\cG_1\cup ... \cup \cG_d =\{1,...,k\}$. Moreover, we assume $\cG_i\neq \emptyset$ for $1\leq i\leq d$, i.e., there is at least one neuron connected to each input variable. For every $j\in \cG_r$, we define the function $g(.)$ such that $g(j)=r$ \footnote{These definitions match with definitions of $\cG$ and $g(.)$ for a general PNN.}. Moreover, we define
\begin{align}\label{eq:q}
 q_r&:=\sum_{i\in \cG_r} \|\bw_i\|, \\
 q_r^*&:=\sum_{i\in \cG_r} \|\bw_i^*\|. \nonumber
\end{align}
Finally, we define $\bq :=(q_1,...,q_d)^t$ and $\bq^* :=(q_1^*,...,q_d^*)^t$.

\begin{theorem}\label{thm:degree-one-global}
The loss function \eqref{eq:loss-function} for a degree-one PNN can be written as
\begin{align}\label{eq:loss-degree-one}
L(\bW)=\frac{1}{4}\|\sum_{i=1}^{k} \bw_i-\sum_{i=1}^{k}\bw_i^*\|^2+\frac{1}{4}(\bq-\bq^*)^t \bC (\bq-\bq^*),
\end{align}
where
\begin{align}\label{eq:C1}
\bC=\left( {\begin{array}{cccc}
   1 & \frac{2}{\pi} & \cdots & \frac{2}{\pi} \\
   \frac{2}{\pi} & 1 & \cdots & \frac{2}{\pi} \\
  \vdots &  & \ddots & \vdots \\
  \frac{2}{\pi} & &\cdots & 1
  \end{array} } \right).
\end{align}
\end{theorem}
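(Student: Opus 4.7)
The plan is to expand $L(\bW)=\EE[(h(\bx;\bW)-h(\bx;\bW^*))^2]$ into pairwise expectations of ReLU products and then repackage the sums as a quadratic form in $\bp:=\sum_i \bw_i$ and $\bq$. Because every weight vector $\bw_i$ of a degree-one PNN points along a standard axis $\be_{g(i)}$ with scalar coefficient $w_i$ (so that $\|\bw_i\|=|w_i|$ and $\bw_i^t\bx=w_i x_{g(i)}$), the only two-variable Gaussian computations that appear are (i) $\EE[\relu(au)\relu(bu)]$ with $u\sim\cN(0,1)$ and (ii) $\EE[\relu(au)\relu(bv)]$ with independent $u,v\sim\cN(0,1)$. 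For (i), splitting by signs of $a,b$ and using $\EE[u^2\mathbf{1}(u>0)]=\EE[u^2\mathbf{1}(u<0)]=1/2$ yields $\tfrac14(ab+|a||b|)$; for (ii), independence together with $\EE[\relu(cu)]=|c|/\sqrt{2\pi}$ gives $|a||b|/(2\pi)$.

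Next, I would write the three expectations making up $L(\bW)$ (from $h(\bx;\bW)^2$, $h(\bx;\bW^*)^2$, and their cross term) as double sums over neuron indices $i,j$, and split each double sum into a ``same line'' part ($g(i)=g(j)$) and a ``different line'' part. The same-line contributions collapse into $\tfrac14\sum_r(p_r^{(\cdot)}p_r^{(\cdot)}+q_r^{(\cdot)}q_r^{(\cdot)})$ terms, where I set $p_r:=\sum_{i\in\cG_r}w_i$ and recall $q_r=\sum_{i\in\cG_r}|w_i|$; the different-line contributions collapse into $\tfrac{1}{2\pi}\sum_{r\neq r'}q_r^{(\cdot)}q_{r'}^{(\cdot)}$ terms. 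The observation that $\sum_i\bw_i=\sum_r p_r\be_r$, hence $\|\sum_i\bw_i-\sum_i\bw_i^*\|^2=\|\bp-\bp^*\|^2$, ties the $\bp$-piece to the first term of the claim.

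For the $\bq$-piece I would collect the coefficients carefully. The ``same-line'' contribution gives $\tfrac14\|\bq-\bq^*\|^2=\tfrac14(\bq-\bq^*)^t\bI(\bq-\bq^*)$. The ``different-line'' contribution, after using $\sum_{r\neq r'}q_rq_{r'}^*=\|\bq\|_1\|\bq^*\|_1-\bq^t\bq^*$ and its analogues with $\bq^*$, simplifies to $\tfrac{1}{2\pi}\big[(\|\bq\|_1-\|\bq^*\|_1)^2-\|\bq-\bq^*\|^2\big]=\tfrac{1}{2\pi}(\bq-\bq^*)^t(\mathbf{1}\mathbf{1}^t-\bI)(\bq-\bq^*)$. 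Adding them,
\begin{equation*}
\tfrac14\bI+\tfrac{1}{2\pi}(\mathbf{1}\mathbf{1}^t-\bI)=\tfrac14\Big[(1-\tfrac{2}{\pi})\bI+\tfrac{2}{\pi}\mathbf{1}\mathbf{1}^t\Big]=\tfrac14\bC,
\end{equation*}
which identifies exactly the matrix $\bC$ in \eqref{eq:C1}.

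The main obstacle is purely the bookkeeping in step two: one has to keep track of three double sums (from $\bW\bW$, $\bW^*\bW^*$ and $\bW\bW^*$) and of the two regimes $g(i)=g(j)$ vs.\ $g(i)\neq g(j)$, then verify that the signed sums of the $(\|\bq\|_1-\|\bq^*\|_1)^2$ piece and the $\|\bq-\bq^*\|^2$ piece combine into the $\bC$-quadratic form. Apart from this combinatorial care, the Gaussian computations themselves are elementary, and the separation into the $\bp$-part and $\bq$-part is clean because degree-one PNN weights are supported on disjoint standard axes, making $\bw_i^t\bx$ and $\bw_j^t\bx$ independent whenever $g(i)\neq g(j)$.
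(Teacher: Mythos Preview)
Your proposal is correct and follows essentially the same route as the paper's proof: expand $L(\bW)$ into three double sums, split each by whether $g(i)=g(j)$ or $g(i)\neq g(j)$, compute the pairwise expectations, and assemble. The only (minor) difference is that the paper obtains the pairwise expectations by invoking the general truncated-Gaussian covariance formula (Lemma~\ref{lem:cov-truncated}) specialized to $\theta\in\{0,\pi/2,\pi\}$, whereas you compute them directly from one-dimensional identities $\EE[\relu(au)\relu(bu)]=\tfrac14(ab+|a||b|)$ and $\EE[\relu(au)\relu(bv)]=|a||b|/(2\pi)$ using independence of coordinates; your explicit identification $\tfrac14\bI+\tfrac{1}{2\pi}(\mathbf{1}\mathbf{1}^t-\bI)=\tfrac14\bC$ is a clean way to finish.
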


\begin{proof}
See Section \ref{subsec:proof-degree-one-global}.
\end{proof}

Since $\bC$ is a positive definite matrix, we have the following corollary:
\begin{corollary}\label{cor:degree-one-global}
$\bW^*$ is a global optimizer of optimization \eqref{opt:neural-network-PNN} for a degree-one PNN if and only if
\begin{align}\label{eq:degree-one-global-cond}
\sum_{i\in \cG_r} w_i&= \sum_{i\in \cG_r} w_i^*,\quad 1\leq r\leq d\\
q_i&= q_i^*,\quad 1\leq r\leq d.\nonumber
\end{align}
\end{corollary}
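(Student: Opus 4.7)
The plan is to deduce this corollary as an immediate consequence of the decomposition of $L(\bW)$ given in Theorem \ref{thm:degree-one-global}. Since both summands on the right-hand side of \eqref{eq:loss-degree-one} are manifestly nonnegative (the first is a squared Euclidean norm; the second is a quadratic form in $\bC$ that I will show is positive definite), we have $L(\bW)\geq 0$ for all feasible $\bW$. Evaluating at $\bW=\bW^*$ makes both summands vanish, so the global minimum value is $0$, and $\bW$ is a global optimizer if and only if both summands equal zero simultaneously.

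The main technical point is positive definiteness of $\bC$. I would decompose
\begin{align*}
\bC = \Bigl(1-\tfrac{2}{\pi}\Bigr)\bI_d + \tfrac{2}{\pi}\,\mathbf{1}_{d,1}\mathbf{1}_{d,1}^t,
\end{align*}
and observe that $\mathbf{1}_{d,1}\mathbf{1}_{d,1}^t$ is positive semidefinite with eigenvalues $d$ and $0$, while $(1-2/\pi)\bI$ contributes $1-2/\pi>0$ to every eigenvalue; hence the spectrum of $\bC$ is $\{1+2(d-1)/\pi,\,1-2/\pi,\dots,1-2/\pi\}$, all strictly positive. Consequently $(\bq-\bq^*)^t\bC(\bq-\bq^*)=0$ iff $\bq=\bq^*$, i.e.\ $q_r=q_r^*$ for every $1\leq r\leq d$.

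For the first summand, I would use the degree-one structure: each $\bw_i$ has its only nonzero coordinate in position $g(i)$, so the $r$-th coordinate of $\sum_i \bw_i$ equals $\sum_{i\in \cG_r} w_i$, and analogously for $\bw_i^*$. Therefore $\|\sum_i \bw_i-\sum_i \bw_i^*\|^2=0$ iff $\sum_{i\in\cG_r}w_i=\sum_{i\in\cG_r}w_i^*$ for every $1\leq r\leq d$. Combining the two vanishing conditions gives the characterization stated in \eqref{eq:degree-one-global-cond}.

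There is essentially no obstacle here beyond the positive-definiteness check for $\bC$; the real content already lives in Theorem \ref{thm:degree-one-global}, which expresses $L$ in a form in which both nonnegative pieces depend on $\bW$ through different but compatible linear functionals (namely the per-line signed sum $\sum_{i\in\cG_r}w_i$ and the per-line magnitude sum $q_r$), so there is no interaction between the two sets of equations and the biconditional is clean.
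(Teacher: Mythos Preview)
Your proposal is correct and follows exactly the approach the paper takes: the corollary is stated immediately after Theorem~\ref{thm:degree-one-global} with the one-line justification ``Since $\bC$ is a positive definite matrix,'' and you have simply fleshed out that assertion by exhibiting the spectrum of $\bC=(1-\tfrac{2}{\pi})\bI_d+\tfrac{2}{\pi}\mathbf{1}\mathbf{1}^t$ and spelling out why each squared term vanishes precisely under the stated per-line conditions.
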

Next, we characterize local optimizers of optimization \eqref{opt:neural-network-PNN} for degree-one PNNs. The sign variable assigned to the weight vector $\bw_j$ is defined as the sign of its non-zero element, i.e., $s(\bw_j)=s(w_j)$ where $w_j$ is the non-zero element of $\bw_j$. Define $R(\bs_1,...,\bs_d)$ as the space of $\bW$ where $\bs_i$ is the sign vector of weights $\bw_j$ connected to input $x_i$ (i.e., $j\in \cG_i$).
\begin{theorem}\label{thm:one-degree-local}
For a degree-one PNN, in regions $R(\bs_1,...,\bs_d)$ where $\bs_i\neq \pm \mathbf{1}$ for $1\leq i\leq d$, every local optimizer is a global optimizer. In other regions, we may have bad local optima.
\end{theorem}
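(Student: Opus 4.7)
The plan is to combine the quadratic decomposition of $L(\bW)$ from Theorem~\ref{thm:degree-one-global} with a first-order analysis inside each open sign region. Fix a region $R(\bs_1,\ldots,\bs_d)$ with $\bs_r \neq \pm \mathbf{1}$ for every $r$; this in particular forces $|\cG_r|\ge 2$ for all $r$. On the open region each $|w_j|$ equals $s_j w_j$ for a fixed sign $s_j\in\{\pm 1\}$, so $q_r = \sum_{i\in\cG_r} s_i w_i$ is linear in $\bW$. Consequently $L(\bW)$ is a non-negative convex quadratic and in particular differentiable on this region, so any local optimizer must satisfy $\nabla L(\bW)=\mathbf{0}$.

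The next step is to read off the first-order conditions. Differentiating the expression from Theorem~\ref{thm:degree-one-global}, for each $j\in\cG_r$,
\[
\frac{\partial L}{\partial w_j}(\bW) = \frac{1}{2}\Bigl(\sum_{i\in\cG_r} w_i - \sum_{i\in\cG_r} w_i^*\Bigr) + \frac{s_j}{2}\bigl[\bC(\bq-\bq^*)\bigr]_r.
\]
Because $\bs_r$ contains both a $+1$ and a $-1$ entry, I can evaluate $\partial L/\partial w_j = 0$ at indices $j,j'\in\cG_r$ with $s_j=+1$ and $s_{j'}=-1$ and then add and subtract the two equations to obtain
\[
\sum_{i\in\cG_r} w_i = \sum_{i\in\cG_r} w_i^* \quad\text{and}\quad \bigl[\bC(\bq-\bq^*)\bigr]_r = 0.
\]
Applying this in every group $r$ produces both families of equalities together with $\bC(\bq-\bq^*)=\mathbf{0}$. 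Since $\bC$ is positive definite, $\bq=\bq^*$. These are exactly the conditions of Corollary~\ref{cor:degree-one-global}, so $\bW$ is a global optimizer.

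For the second assertion I only need to exhibit some region where a bad local optimum can occur. A scalar PNN is a degree-one PNN with $d=1$, and Theorem~\ref{thm:local} together with Figure~\ref{fig:scalar}(b) (take $w_1^*=6$, $w_2^*=-4$) already produces bad local minima in $R(\mathbf{1})$ and $R(-\mathbf{1})$. Embedding this scalar example as a single coordinate group in a higher-dimensional degree-one PNN immediately yields bad local optima in regions where some $\bs_r=\pm\mathbf{1}$.

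The main obstacle I expect is making sure the first-order argument is carried out jointly across all groups: the gradient condition couples the per-group equalities to the \emph{single} global vector $\bC(\bq-\bq^*)$, so one must argue that requiring $[\bC(\bq-\bq^*)]_r=0$ simultaneously for every $r$ is consistent. This is precisely what the full hypothesis $\bs_r\ne\pm\mathbf{1}$ \emph{for every} $r$ gives. Relaxing it to only some groups would break the argument, because in a group with $\bs_r=\pm\mathbf{1}$ the corresponding component of $\bC(\bq-\bq^*)$ is not forced to vanish at a critical point, leaving the second term of $L$ strictly positive and permitting bad local optima, consistent with the second bullet of the theorem.
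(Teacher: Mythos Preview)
Your proof is correct and follows essentially the same route as the paper's: both compute the gradient, use the hypothesis $\bs_r\neq\pm\mathbf{1}$ to pick two neurons in each group with opposite signs, and add/subtract the resulting first-order conditions to decouple $\sum_{i\in\cG_r}(w_i-w_i^*)=0$ from $[\bC(\bq-\bq^*)]_r=0$, then invoke positive definiteness of $\bC$ and Theorem~\ref{thm:degree-one-global}/Corollary~\ref{cor:degree-one-global}. The only cosmetic difference is that the paper rederives $\nabla_{\bw_j}L$ from the expectation via Lemma~\ref{lem:cov-truncated}, whereas you differentiate the already-simplified quadratic from Theorem~\ref{thm:degree-one-global} directly; your shortcut is perfectly valid and arguably cleaner.
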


\begin{proof}
See Section \ref{subsec:proof-degree-one-local}.
\end{proof}

In practice, if the gradient descent algorithm converges to a point in a region $R(\bs_1,...,\bs_d)$ where signs of weight vectors connected to an input are all ones or minus ones, that point may be a bad local optimizer. Thus, one may re-initialize the gradient descent algorithm in such cases. We show this effect through simulations in Section \ref{sec:exp}.

\subsection{General PNNs}\label{sec:quantized}
In this section, we characterize the landscape of the loss function for a general PNN. Recall that $\cL=\{L_1,...,L_r\}$ is the set of lines in a $d$-dimensional space. Vectors over a line $L_i$ can have two orientations. We say a vector has a positive orientation if its component in the largest non-zero index is positive. Otherwise, it has a negative orientation. For example, $\bw_1=(-1,2,0,3,0)$ has a positive orientation because $\bw_1(4)>0$, while the vector $\bw_2=(-1,2,0,0,-3)$ has a negative orientation because $\bw_2(5)<0$. Mathematically, let $\mu(\bw_i)$ be the largest index of the vector $\bw_i$ with a non-zero entry, i.e., $\mu(\bw_i)=\arg\max_{j} (\bw_i(j)\neq 0)$. Then, $s(\bw_i)=1$ if $\mu(\bw_i)>0$, otherwise $s(\bw_i)=-1$.

Let $\bu_i$ be a unit norm vector on the line $L_i$ such that $s(\bu_i)=1$. Let $\bU_{\cL}=(\bu_1,...,\bu_r)$. Let $\bA_{\cL}\in\mathbb{R}^{r\times r}$ be a matrix such that its $(i,j)$-component is the angle between lines $L_i$ and $L_j$, i.e., $\bA_{\cL}(i,j)=\theta_{\bu_i,\bu_{j}}$. Moreover, let $\bK_{\cL}=\bU_{\cL}^t \bU_{\cL}=\cos[\bA_{\cL}]$.

Recall that $\cG_i$ is the set of neurons whose incoming weight vectors lie over the line $L_i$, i.e., $\cG_i\triangleq \{j:\bw_j\in L_i\}$. Moreover, if $j\in \cG_i$, we define $g(j)=i$. In the degree-one PNN explained in Section \ref{sec:degree-one}, each line corresponds to an input because $\cL$ contains lines parallel to standard axes. However, for a general PNN, we may not have such a correspondence between lines and inputs.

With these notations, for $\bw_j\in L_i$, we have
\begin{align}\label{eq:quan-wi}
\bw_j=\|\bw_j\|s(\bw_j)\bu_{g(j)}.
\end{align}
Moreover, for every $\bw_i$ and $\bw_j$, we have
\begin{align}\label{eq:quan-angle-wi-wj}
\theta_{\bw_i,\bw_j}=\frac{\pi}{2}+(a_{g(i),g(j)}-\frac{\pi}{2})s(\bw_i)s(\bw_j).
\end{align}
Define the kernel function $\psi:[-1,1]\to \mathbb{R}$ as
\begin{align}\label{eq:psi-def}
\psi(x)=x+\frac{2}{\pi}\left(\sqrt{1-x^2}-x\cos^{-1}(x)\right).
\end{align}

\begin{figure}[t]
\centering
  \includegraphics[width=0.4\linewidth]{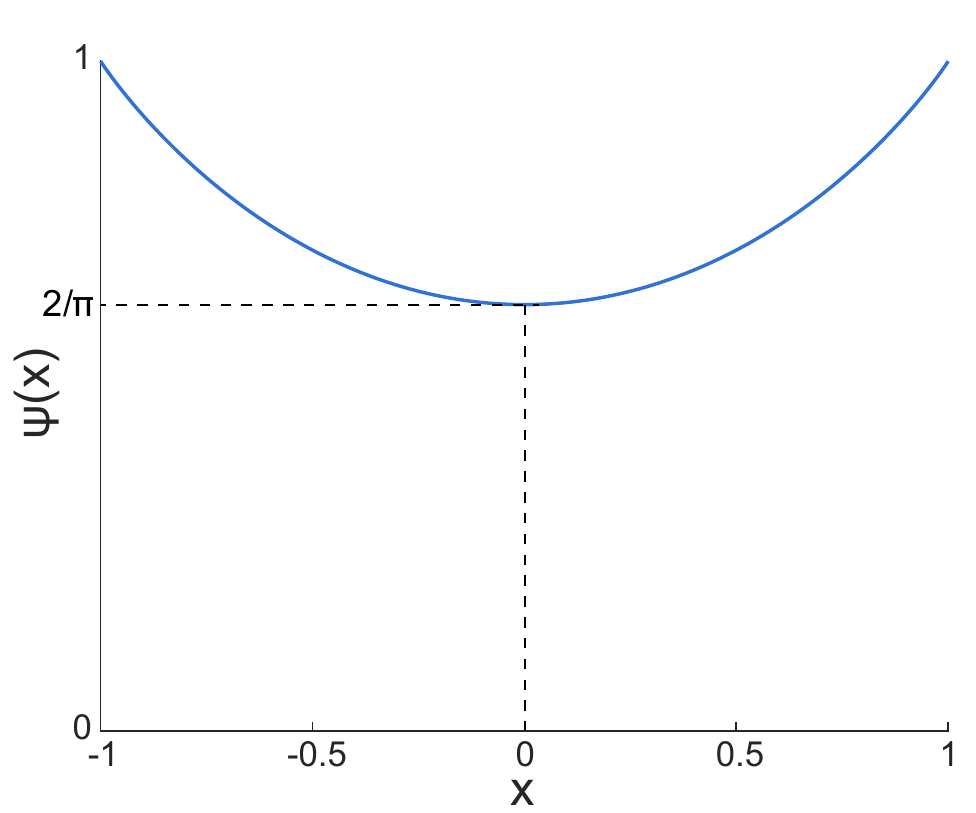}
\caption{An illustration of the kernel function $\psi(x)$ defined as in \eqref{eq:psi-def}.}
\label{fig:psi}
\end{figure}

In the following Theorem, we show that this kernel function, which is depicted in Figure \ref{fig:psi}, plays an important role in characterizing optimizers of optimization \eqref{opt:neural-network-PNN}. In particular, we show that the objective function of the neural network optimization has a term where this kernel function is applied (component-wise) to the inner product matrix among vectors $\bu_1$,...,$\bu_r$.

\begin{theorem}\label{thm:global-quan}
The loss function \eqref{eq:loss-function} for a matched PNN can be written as
\begin{align}\label{eq:loss-quan}
L(\bW)=\frac{1}{4}\|\sum_{i=1}^{k} \bw_i-\bw_i^*\|^2+\frac{1}{4}(\bq-\bq^*)^t \psi[\bK_{\cL}] (\bq-\bq^*),
\end{align}
where $\psi(.)$ is defined as in \eqref{eq:psi-def} and $\bq$ and $\bq^*$ are defined as in \eqref{eq:q}.
\end{theorem}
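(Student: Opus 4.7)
The plan is to expand the squared loss
\begin{align*}
L(\bW) = \EE[h(\bx;\bW)^2] - 2\,\EE[h(\bx;\bW)h(\bx;\bW^*)] + \EE[h(\bx;\bW^*)^2]
\end{align*}
and reduce every term to sums of the bilinear quantity $\EE[\relu(\bw^t\bx)\relu(\bv^t\bx)]$ with $\bx\sim\cN(0,\bI)$. The well-known closed form for Gaussian inputs (for instance, obtainable by rotating so that $\bw,\bv$ span a $2$-D subspace) is
\begin{align*}
\EE[\relu(\bw^t\bx)\relu(\bv^t\bx)] = \frac{\|\bw\|\|\bv\|}{2\pi}\bigl(\sin\theta_{\bw,\bv} + (\pi-\theta_{\bw,\bv})\cos\theta_{\bw,\bv}\bigr).
\end{align*}
This identity is the only analytic input I need; the remainder is algebraic and is driven by the PNN structure encoded by \eqref{eq:quan-wi}--\eqref{eq:quan-angle-wi-wj}.

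The second step is to split the above kernel into a part that depends on $s(\bw_i)s(\bw_j)$ and a part that does not. Writing $\alpha=a_{g(i),g(j)}$ and $t=s(\bw_i)s(\bw_j)$, \eqref{eq:quan-angle-wi-wj} gives $\theta_{\bw_i,\bw_j}=\alpha$ if $t=1$ and $\theta_{\bw_i,\bw_j}=\pi-\alpha$ if $t=-1$. Setting $f(\theta):=\sin\theta+(\pi-\theta)\cos\theta$, a direct computation yields the symmetric/antisymmetric decomposition
\begin{align*}
f(\theta_{\bw_i,\bw_j}) \;=\; \sin\alpha + \bigl(\tfrac{\pi}{2}-\alpha\bigr)\cos\alpha \;+\; t\cdot\tfrac{\pi}{2}\cos\alpha.
\end{align*}
The antisymmetric piece, multiplied by $\|\bw_i\|\|\bw_j\|/(2\pi)$, equals $\tfrac14\bw_i^t\bw_j$ because $\bw_i^t\bw_j = \|\bw_i\|\|\bw_j\|\, s(\bw_i)s(\bw_j)\cos\alpha$ by \eqref{eq:quan-wi}. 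The symmetric piece, using $\cos\alpha = \bK_{\cL}(g(i),g(j))$ and the elementary identity $\tfrac{2}{\pi}\bigl[\sin\alpha+(\tfrac{\pi}{2}-\alpha)\cos\alpha\bigr]=\psi(\cos\alpha)$ with $\psi$ as in \eqref{eq:psi-def}, equals $\tfrac14\|\bw_i\|\|\bw_j\|\,\psi\bigl(\bK_{\cL}(g(i),g(j))\bigr)$. Thus
\begin{align*}
\EE[\relu(\bw_i^t\bx)\relu(\bw_j^t\bx)] = \tfrac{1}{4}\bw_i^t\bw_j + \tfrac{1}{4}\|\bw_i\|\|\bw_j\|\,\psi\bigl(\bK_{\cL}(g(i),g(j))\bigr),
\end{align*}
and exactly the same formula holds with any mixture of $\bw_i,\bw_j$ replaced by starred versions, since the derivation only used that both vectors lie on lines in $\cL$.

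The third step is to sum over $i,j$ and aggregate by lines. Summing the first piece produces $\bigl\|\sum_i\bw_i\bigr\|^2$, $\bigl(\sum_i\bw_i\bigr)^t\bigl(\sum_i\bw_i^*\bigr)$, and $\bigl\|\sum_i\bw_i^*\bigr\|^2$ in the three terms of $L(\bW)$. For the second piece, grouping neurons by their line via $\cG_1,\dots,\cG_r$ and using $q_r=\sum_{i\in\cG_r}\|\bw_i\|$ from \eqref{eq:q} gives
\begin{align*}
\sum_{i,j}\|\bw_i\|\|\bw_j\|\,\psi\bigl(\bK_{\cL}(g(i),g(j))\bigr) = \sum_{r,s} q_r q_s\,\psi\bigl(\bK_{\cL}(r,s)\bigr) = \bq^t\,\psi[\bK_{\cL}]\,\bq,
\end{align*}
with the analogous identities $\bq^t\psi[\bK_{\cL}]\bq^*$ and $(\bq^*)^t\psi[\bK_{\cL}]\bq^*$ for the cross and starred terms. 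Combining everything collapses the three terms into the single quadratic
\begin{align*}
L(\bW) = \tfrac{1}{4}\bigl\|\textstyle\sum_i\bw_i - \textstyle\sum_i\bw_i^*\bigr\|^2 + \tfrac{1}{4}(\bq-\bq^*)^t\,\psi[\bK_{\cL}]\,(\bq-\bq^*),
\end{align*}
which is \eqref{eq:loss-quan}.

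The main obstacle is the algebraic step where the relu correlation is split into an antisymmetric part that reconstitutes the ordinary inner product $\bw_i^t\bw_j$ and a symmetric part that happens to equal $\psi(\cos\alpha)$ times the norm product; verifying that the exact kernel emerging from $\tfrac{2}{\pi}\bigl[\sin\alpha + (\tfrac{\pi}{2}-\alpha)\cos\alpha\bigr]$ coincides with $\psi$ from \eqref{eq:psi-def} (using $\alpha=\cos^{-1}x$ and $\sin\alpha=\sqrt{1-x^2}$) is the only place where a sign or constant could go wrong. Once that identity is in hand, the aggregation by $\cG_r$ and the telescoping of the three expected products into a single squared quadratic is mechanical.
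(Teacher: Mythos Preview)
Your proposal is correct and follows essentially the same route as the paper: expand the squared loss into three relu--relu correlation sums, evaluate each via the Gaussian relu kernel, use the angle identity \eqref{eq:quan-angle-wi-wj} to reduce everything to the line angles $\alpha=a_{g(i),g(j)}$, and then aggregate norms by line to recover $\bq,\bq^*$. The only organizational difference is that the paper splits the double sum by whether $i,j$ lie on the same line or on different lines (invoking Lemma~\ref{lem:cov-truncated} for the matrix $\EE[\mathbf 1\{\cdot\}\bx\bx^t]$), whereas you work uniformly with the scalar kernel and perform a clean symmetric/antisymmetric split in the sign product $t=s(\bw_i)s(\bw_j)$; your version is a bit more streamlined but not a genuinely different argument.
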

\begin{proof}
See Section \ref{subsec:proof:qlobal-quan}.
\end{proof}
For the degree-one PNN where $\bu_i=\be_i$ for $1\leq i\leq d$, the matrix $\bC$ of \eqref{eq:C1} and the matrix $\psi[\bK_{\cL}]$ are the same.

The kernel function $\psi(.)$ has a linear term and a nonlinear term. Note that the inner product matrix $\bK_{\cL}$ is positive semidefinite. Below, we show that applying the kernel function $\psi(.)$ (component-wise) to $\bK_{\cL}$ preserves this property.

\begin{lemma}\label{lem:C-psd}
For every $\cL$, $\psi[\bK_{\cL}]$ is positive semidefinite.
\end{lemma}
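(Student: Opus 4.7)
The plan is to exhibit $\psi[\bK_{\cL}]$ as the Gram matrix of a family of random variables in $L^2$, from which positive semidefiniteness is immediate. The key identity I would aim to prove is
\begin{align*}
\psi(x) \;=\; \EE\bigl[\,|Z_1|\,|Z_2|\,\bigr],
\end{align*}
where $(Z_1,Z_2)$ is a jointly standard Gaussian pair with correlation $x \in [-1,1]$. Once this representation is in hand, the lemma follows in one line: choose $\bx \sim \cN(0,\bI)$ and set $a_i := |\bu_i^t \bx|$; then
\begin{align*}
\psi[\bK_{\cL}](i,j) \;=\; \psi(\bu_i^t \bu_j) \;=\; \EE[a_i a_j],
\end{align*}
so for every $\bv \in \reals^r$ one has $\bv^t \psi[\bK_{\cL}] \bv = \EE[(\sum_i v_i a_i)^2] \geq 0$.

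The main work, and the only real obstacle, is establishing the identity $\psi(x) = \EE[|Z_1 Z_2|]$. I would do this by the ``decompose the absolute value'' trick: write $|z| = \relu(z) + \relu(-z)$ and expand
\begin{align*}
\EE[|Z_1|\,|Z_2|] \;=\; \EE[\relu(Z_1)\relu(Z_2)] + \EE[\relu(-Z_1)\relu(-Z_2)] + \EE[\relu(Z_1)\relu(-Z_2)] + \EE[\relu(-Z_1)\relu(Z_2)].
\end{align*}
By the sign-flip symmetry of the joint Gaussian, the first two terms both equal the classical ``arc-cosine/ReLU'' kernel $K(x) := \EE[\relu(Z_1)\relu(Z_2)] = \tfrac{1}{2\pi}(\sqrt{1-x^2}+x(\pi-\cos^{-1} x))$, and the last two both equal $K(-x)$ (since correlating $Z_1$ with $-Z_2$ flips the correlation). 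Thus $\EE[|Z_1 Z_2|] = 2K(x) + 2K(-x)$, and a direct simplification using $\cos^{-1}(-x) = \pi - \cos^{-1}(x)$ gives $2K(x) + 2K(-x) = x + \tfrac{2}{\pi}(\sqrt{1-x^2} - x\cos^{-1} x) = \psi(x)$, as required.

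An alternative (and essentially equivalent) route, closer to the spirit of Lemma \ref{lem:cov-truncated} mentioned by the authors, is to derive $\psi$ directly from a Gaussian covariance computation: for unit vectors $\bu_i,\bu_j$ with angle $\theta$, one can evaluate $\EE[|\bu_i^t\bx|\,|\bu_j^t\bx|]$ in polar coordinates to obtain $\tfrac{2}{\pi}(\sqrt{1-\cos^2\theta} + \cos\theta \cdot \arcsin(\cos\theta))$, which coincides with $\psi(\cos\theta)$ after using $\arcsin + \arccos = \tfrac{\pi}{2}$. Either way, the representation $\psi[\bK_{\cL}] = \EE[\ba \ba^t]$ with $\ba = (|\bu_1^t\bx|,\ldots,|\bu_r^t\bx|)^t$ is what makes positive semidefiniteness transparent; everything else is bookkeeping.
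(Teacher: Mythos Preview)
Your argument is correct, but it takes a genuinely different route from the paper. The paper's proof observes that $\bK_{\cL}=\cos[\bA_{\cL}]$ is positive semidefinite, computes the Taylor coefficients of $\psi$ at the origin (odd derivatives vanish, even derivatives are positive), and then invokes a Schur-product type result (Theorem~4.1(i) of \cite{hiai2009monotonicity}) to conclude that the entrywise application of $\psi$ preserves positive semidefiniteness. Your approach instead produces an explicit Gram representation $\psi[\bK_{\cL}]=\EE[\ba\ba^t]$ with $a_i=|\bu_i^t\bx|$, via the identity $\psi(x)=\EE[|Z_1||Z_2|]$ for a correlated Gaussian pair. Your route is more self-contained and gives a clean probabilistic interpretation of the kernel that also sheds light on why $\psi$ arises in the PNN loss at all; the paper's route is shorter to state and immediately generalizes to any entrywise kernel with absolutely monotone Taylor expansion applied to any PSD matrix, at the cost of citing an external result.
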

\begin{proof}
See Section \ref{subsec:proof-lem-C-PSD}.
\end{proof}

\begin{corollary}\label{cor:general-global}
If $\psi[\bK_{\cL}]$ is a positive definite matrix, $\bW^*$ is a global optimizer of optimization \eqref{opt:neural-network-PNN} if and only if
\begin{align}\label{eq:quan-global-cond}
\sum_{i=1}^{k} \bw_i&= \sum_{i=1}^{k} \bw_i^*,\\
\bq&= \bq^*.\nonumber
\end{align}
\end{corollary}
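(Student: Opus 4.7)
The plan is to read off the corollary directly from the factorization of $L(\bW)$ supplied by Theorem~\ref{thm:global-quan}. That theorem writes
\[
L(\bW)=\tfrac{1}{4}\bigl\|\textstyle\sum_{i}\bw_i-\sum_{i}\bw_i^*\bigr\|^2+\tfrac{1}{4}(\bq-\bq^*)^t\,\psi[\bK_{\cL}]\,(\bq-\bq^*),
\]
and Lemma~\ref{lem:C-psd} ensures the quadratic form in the second summand is at worst positive semidefinite. Thus both summands are non-negative, so $L(\bW)\ge 0$ for every feasible $\bW$, while $\bW=\bW^*$ is itself feasible and attains $L(\bW^*)=0$. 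Consequently the global minimum value of the matched PNN optimization is exactly $0$, and $\bW$ is a global optimizer if and only if each of the two summands vanishes at $\bW$.

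The first summand vanishes if and only if $\sum_{i=1}^{k}\bw_i=\sum_{i=1}^{k}\bw_i^*$, which is the first condition in \eqref{eq:quan-global-cond}. For the second summand, the hypothesis of the corollary is that $\psi[\bK_{\cL}]$ is positive \emph{definite} (not merely semidefinite); this strict definiteness is precisely what lets us conclude that $(\bq-\bq^*)^t\psi[\bK_{\cL}](\bq-\bq^*)=0$ forces $\bq-\bq^*=0$, giving the second condition in \eqref{eq:quan-global-cond}. Conversely, whenever both conditions hold, both summands are identically zero and $\bW$ attains $L(\bW)=0$, so it is a global minimizer.

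There is essentially no obstacle in this argument; the non-trivial content was already packed into Theorem~\ref{thm:global-quan} and Lemma~\ref{lem:C-psd}. The only point worth flagging in the write-up is the use of positive \emph{definiteness} rather than positive semidefiniteness of $\psi[\bK_{\cL}]$: without it one could have $\bq\ne\bq^*$ lying in the kernel of $\psi[\bK_{\cL}]$, and the ``only if'' direction of the characterization would fail. In particular, the assumption implicitly requires $\bU_{\cL}$ to have full column rank (so that $\bK_{\cL}$ is positive definite and the nonlinear perturbation captured by $\psi$ does not destroy definiteness), a condition that holds, for instance, in the degree-one PNN of Section~\ref{sec:degree-one} since $\bC$ in \eqref{eq:C1} is positive definite.
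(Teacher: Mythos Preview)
Your argument is correct and matches the paper's implicit reasoning: the corollary follows immediately from the decomposition in Theorem~\ref{thm:global-quan} together with the positive-definiteness hypothesis, exactly as you lay out. One small correction to your closing parenthetical: positive definiteness of $\psi[\bK_{\cL}]$ does \emph{not} require $\bU_{\cL}$ to have full column rank---indeed, the example immediately following the corollary takes $r$ lines in $\mathbb{R}^2$ with $r>2$, so $\bK_{\cL}=\bU_{\cL}^t\bU_{\cL}$ is singular, yet $\psi[\bK_{\cL}]$ is still positive definite because the nonlinear kernel $\psi$ can lift a merely PSD matrix to a strictly PD one.
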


\begin{figure}[t]
\centering
  \includegraphics[width=0.9\linewidth]{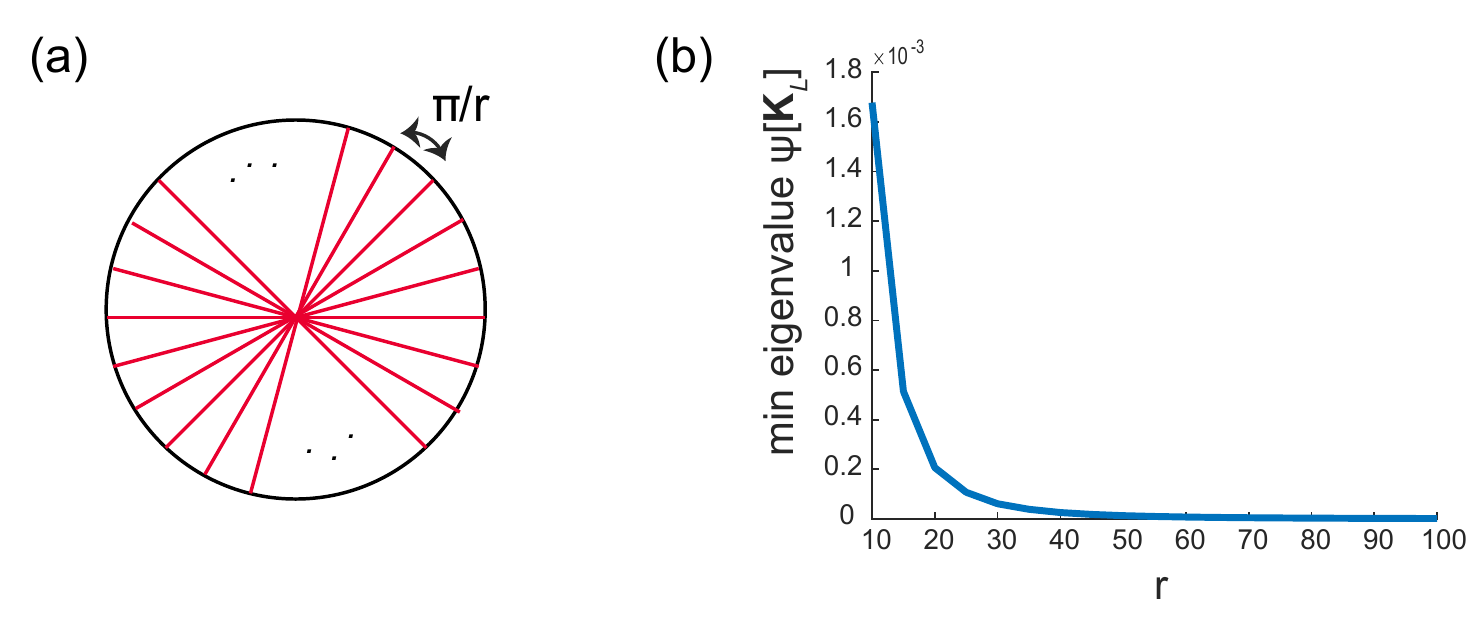}
\caption{(a) An example of $\cL$ in a two-dimensional space such that angles between adjacent lines are equal to one another. (b) The minimum eigenvalue of the matrix $\psi[\bK_{\cL}]$ for different values of $r$.}
\label{fig:min-eig}
\end{figure}

\begin{example}
\textup{Let $\cL=\{L_1,L_2,...,L_r\}$ contain lines in $\mathbb{R}^2$ such that angles between adjacent lines are equal to $\pi/r$ (Figure \ref{fig:min-eig}-a). Thus, we have $\bA_{\cL}(i,j)=\pi|i-j|/r$ for $1\leq i,j\leq r$. Figure \ref{fig:min-eig}-b shows the minimum eigenvalue of the matrix $\psi[\bK_{\cL}]$ for different values of $r$. As the number of lines increases, the minimum eigenvalue of $\psi[\bK_{\cL}]$ decreases. However, for a finite value of $r$, the minimum eigenvalue of $\psi[\bK_{\cL}]$ is positive. Thus, in this case, the condition of corollary \ref{cor:general-global} holds. This highlights why considering a discretized neural network function (i.e., finite $r$) facilities characterizing the landscape of the loss function.}
\end{example}

Next, we characterize local optimizers of optimization \eqref{opt:neural-network-PNN} for a general PNN. Define $R(\bs_1,...,\bs_{r})$ as the space of $\bW$ where $\bs_i$ is the sign vector of weights $\bw_j$ over the line $L_i$ (i.e., $j\in \cG_i$).

\begin{theorem}\label{thm:quan-local}
For a general PNN, in regions $R(\bs_1,...,\bs_r)$ where at least $d$ of $\bs_i$'s are not equal to $\pm \mathbf{1}$, every local optimizer of optimization \eqref{opt:neural-network-PNN} is a global optimizers.
\end{theorem}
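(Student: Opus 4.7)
The plan is to translate the first-order optimality condition inside a sign region into linear equations on the two residual vectors appearing in the decomposition of Theorem \ref{thm:global-quan}, and to show that having at least $d$ mixed-sign lines is enough to force both residuals to vanish, which by Theorem \ref{thm:global-quan} already characterizes global minima.

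First I would parametrize each weight inside $R(\bs_1,\dots,\bs_r)$ by a scalar $c_j=s_j\|\bw_j\|$ so that $\bw_j=c_j\bu_{g(j)}$ with $c_j$ carrying the fixed sign $s_j=s(\bw_j)$. Writing $p_i:=\sum_{j\in\cG_i}c_j$ so that $\sum_{j=1}^k\bw_j=\bU_\cL\bp$, and $q_i=\sum_{j\in\cG_i}|c_j|=\sum_{j\in\cG_i}s_j c_j$, Theorem \ref{thm:global-quan} rewrites the loss as the smooth quadratic
$$L=\tfrac14(\bp-\bp^*)^t\bK_\cL(\bp-\bp^*)+\tfrac14(\bq-\bq^*)^t\psi[\bK_\cL](\bq-\bq^*),$$
where $\bp^*,\bq^*$ are defined analogously from $\bW^*$. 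Setting $\bv:=\bK_\cL(\bp-\bp^*)$ and $\bu:=\psi[\bK_\cL](\bq-\bq^*)$, and using $\partial p_i/\partial c_j=\mathbf{1}\{j\in\cG_i\}$, $\partial q_i/\partial c_j=s_j\,\mathbf{1}\{j\in\cG_i\}$, the first-order condition $\partial L/\partial c_j=0$ at an interior local optimum becomes
$$v_{g(j)}+s_j\,u_{g(j)}=0\qquad\text{for every neuron } j.$$

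Next I would exploit the mixed-sign lines. For any line $i$ with $\bs_i\neq\pm\mathbf{1}$, the set $\cG_i$ contains neurons $j_+,j_-$ with $s_{j_+}=+1$ and $s_{j_-}=-1$; the two equations $v_i+u_i=0$ and $v_i-u_i=0$ together force $v_i=u_i=0$. Let $M$ be the set of such lines; by hypothesis $|M|\ge d$. Now $\bv=\bU_\cL^t\ba$ with $\ba:=\bU_\cL(\bp-\bp^*)\in\mathbb{R}^d$, so $v_i=\bu_i^t\ba=0$ for $i\in M$ says $\ba$ is orthogonal to each $\bu_i$, $i\in M$. Under the mild genericity that these $|M|\ge d$ line directions span $\mathbb{R}^d$, this forces $\ba=0$, hence $\bv\equiv 0$. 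Plugging $v_i=0$ back into $v_i+s_j u_i=0$ on each line $i\notin M$ (where every $j\in\cG_i$ shares the common sign $s_j=\pm 1$) then yields $u_i=0$, so $\bu\equiv 0$ as well. Consequently
$$L=\tfrac14(\bp-\bp^*)^t\bv+\tfrac14(\bq-\bq^*)^t\bu=0,$$
which matches the global minimum value of the matched PNN.

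The main obstacle is the upgrade step from $\bv|_M=0$ to $\bv\equiv 0$: it quietly uses that the $|M|\ge d$ mixed-sign line directions $\{\bu_i\}_{i\in M}$ span $\mathbb{R}^d$, a standard general-position assumption on $\cL$ that is easily ensured but could fail in degenerate configurations where all mixed lines lie in a common proper subspace; refining the hypothesis (or invoking that $\bU_\cL$ has rank $d$ together with a suitable submatrix argument) would be needed there. A secondary technicality is handling local optima on the relative boundary of a region, where some $c_j=0$ and the gradient equation is replaced by one-sided inequalities; such points lie in the closure of lower-dimensional sign regions and can be dispatched by a separate case analysis.
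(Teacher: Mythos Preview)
Your proposal is correct and follows essentially the same route as the paper: reduce the projected first-order conditions on each line to $v_{g(j)}+s_j u_{g(j)}=0$, use a mixed-sign pair on each of $d$ lines to isolate $v_i=0$ there, infer $\sum_i(\bw_i-\bw_i^*)=0$ from the rank of the corresponding $\bu_i$'s, and then conclude $\psi[\bK_\cL](\bq-\bq^*)=0$ on all lines, which by Theorem~\ref{thm:global-quan} gives $L=0$. The only stylistic difference is that the paper recomputes $\nabla_{\bw_j}L$ directly from the Gaussian expectation via Lemma~\ref{lem:cov-truncated} before projecting onto $\bu_{g(j)}$, whereas you differentiate the already-simplified quadratic of Theorem~\ref{thm:global-quan} with respect to the scalar coordinate $c_j$; these are the same derivative and lead to the same two equations. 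Your explicit flag that the step ``$v_i=0$ on $d$ lines $\Rightarrow \ba=0$'' needs the $d$ mixed-sign directions to span $\mathbb{R}^d$ is exactly the assumption the paper invokes more tersely as ``$d$ distinct and thus linearly independent lines.''
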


\begin{proof}
See Section \ref{subsec:proof:quan-local}.
\end{proof}

\begin{example}
\textup{Consider a two-layer PNN with $d$ inputs, $r$ lines and $k$ hidden neurons. Suppose every line corresponds to $t=k/r$ input weight vectors. If we generate weight vectors uniformly at random over their corresponding lines, for every $1\leq i\leq r$, we have
\begin{align}
\PP[\bs_i=\pm \mathbf{1}]=2^{1-t}.
\end{align}
As $t$ increases, this probability decreases exponentially. According to Theorem \ref{thm:quan-local}, to be in the parameter region without bad locals, the event $\bs_i=\pm \mathbf{1}$ should occur for at most $r-d$ of the lines. Thus, if we uniformly pick a parameter region, the probability of selecting a region without bad locals is
\begin{align}
1-\sum_{i=1}^{d-1} {r \choose i} (1-2^{1-t})^i 2^{(1-t)(r-i)}
\end{align}
which goes to one exponentially as $r\to \infty$.}
\end{example}

In practice the number of lines $r$ is much larger than the number of inputs $d$ (i.e., $r\gg d$). Thus, the condition of Theorem \ref{thm:quan-local} which requires $d$ out of $r$ variables $\bs_i$ not to be equal to $\pm \mathbf{1}$ is likely to be satisfied if we initialize the local search algorithm randomly.

\section{Population Risk Landscapes of Mismatched PNNs}\label{sec:mismatched}
In this section, we characterize the population risk landscape of a mismatched PNN optimization where the model that generates the data and the model used in the PNN optimization are different. We assume that the output variable $y$ is generated using a two-layer PNN with $k^*$ neurons whose weights lie on the set of lines $\cL^*$ with neuron-to-line mapping $\cG^*$. That is
\begin{align}\label{eq:output-mismatch}
y=\sum_{i=1}^{k^*} \text{relu}\left(\left(\bw_i^*\right)^t \bx\right),
\end{align}
where $\bw_i^*$ lie on a line in the set $\cL^*$ for $1\leq i\leq k^*$. The neural network optimization \eqref{opt:neural-network-PNN} is over PNNs  with $k$ neurons over the set of lines $\cL$ with the neuron-to-line mapping $\cG$. Note that $\cL$ and $\cG$ can be different than $\cL^*$ and $\cG^*$, respectively.

Let $r=|\cL|$ and $r^*=|\cL^*|$ be the number of lines in $\cL$ and $\cL^*$, respectively. Let $\bu_i^*$ be the unit norm vector on the line $L_i^*\in \cL^*$ such that $s(\bu_i^*)=1$. Similarly, we define $\bu_i$ as the unit norm vector on the line $L_i\in \cL$ such that $s(\bu_i)=1$. Let $\bU_{\cL}=(\bu_1,...,\bu_r)$ and $\bU_{\cL^*}=(\bu_1^*,...,\bu_r^*)$. Suppose the rank of $\bU_{\cL}$ is at least $d$. Define
\begin{align}\label{eq:corr-mats-mismatch}
\bK_{\cL}&=\bU_{\cL}^t \bU_{\cL}\in\mathbb{R}^{r\times r}\\
\bK_{\cL^*}&=\bU_{\cL^*}^t \bU_{\cL}\in\mathbb{R}^{r^*\times r^*}\nonumber\\
\bK_{\cL,\cL^*}&=\bU_{\cL^*}^t \bU_{\cL^*}\in\mathbb{R}^{r\times r^*}.\nonumber
\end{align}
%Let $\cG_i\triangleq \{j:\bw_j\in L_i\}$. Moreover, if $j\in \cG_i$, we have $g(j)=i$. Similarly, let $\cG_i^*\triangleq \{j:\bw_j^*\in L_i^*\}$. Moreover, if $j\in \cG_i^*$, we have $g^*(j)=i$.

\begin{theorem}\label{thm:global-mismatch}
The loss function \eqref{eq:loss-function} for a mismatched PNN can be written as
\begin{align}\label{eq:loss-mismatch}
L(\bW)=\frac{1}{4}\|\sum_{i=1}^{k} \bw_i-\sum_{i=1}^{k^*}\bw_i^*\|^2+\frac{1}{4}\bq^t\psi[\bK_{\cL}]\bq+\frac{1}{4}(\bq^*)^t\psi[\bK_{\cL^*}]\bq^*-\frac{1}{2}\bq^t\psi[\bK_{\cL,\cL^*}]\bq^*,
\end{align}
where $\psi(.)$ is defined as in \eqref{eq:psi-def} and $\bq$ and $\bq^*$ are defined as in \eqref{eq:q} using $\cG$ and $\cG^*$, respectively.
\end{theorem}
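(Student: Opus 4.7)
The plan is to expand the squared loss $L(\bW)=\EE[h(\bx;\bW)^2]-2\EE[h(\bx;\bW)y]+\EE[y^2]$ and compute each of the three expectations by pairing up all ReLU-ReLU cross terms using the standard Gaussian arc-cosine kernel identity. Concretely, for $\bx\sim\cN(0,\bI)$ and any two vectors $\bu,\bv\in\RR^d$ with angle $\theta$ between them, a direct computation (conditioning on the two-dimensional Gaussian projection onto $\mathrm{span}\{\bu,\bv\}$) gives
\begin{align*}
\EE\bigl[\relu(\bu^t\bx)\,\relu(\bv^t\bx)\bigr]=\frac{\|\bu\|\,\|\bv\|}{2\pi}\bigl(\sin\theta+(\pi-\theta)\cos\theta\bigr).
\end{align*}
The first step I would carry out is to algebraically split this kernel into a \emph{linear} part and a \emph{$\psi$} part, namely to verify the identity
\begin{align*}
\EE\bigl[\relu(\bu^t\bx)\,\relu(\bv^t\bx)\bigr]=\tfrac{1}{4}\bu^t\bv+\tfrac{1}{4}\|\bu\|\,\|\bv\|\,\psi(\cos\theta),
\end{align*}
with $\psi$ as in \eqref{eq:psi-def}; this is the same decomposition that drives the matched-case proof of Theorem~\ref{thm:global-quan}, and the identity reduces to showing that $x+\psi(x)=\tfrac{2}{\pi}(\sqrt{1-x^2}+(\pi-\cos^{-1}x)\,x)$, which is a routine algebraic manipulation.

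Next I would sum this identity over all pairs arising in each of the three expectations. The linear $\tfrac{1}{4}\bu^t\bv$ contributions telescope by bilinearity:
\begin{align*}
\tfrac{1}{4}\sum_{i,j}\bw_i^t\bw_j-\tfrac{1}{2}\sum_{i,j}\bw_i^t\bw_j^*+\tfrac{1}{4}\sum_{i,j}(\bw_i^*)^t\bw_j^*=\tfrac{1}{4}\Bigl\|\sum_{i=1}^{k}\bw_i-\sum_{i=1}^{k^*}\bw_i^*\Bigr\|^2,
\end{align*}
which recovers the first term in \eqref{eq:loss-mismatch}. For the $\psi$ contributions I would use the parametrization $\bw_i=\|\bw_i\|\,s(\bw_i)\,\bu_{g(i)}$ (and analogously for $\bw_j^*$) together with \eqref{eq:quan-angle-wi-wj} to rewrite $\cos\theta_{\bw_i,\bw_j}=s(\bw_i)s(\bw_j)\cos a_{g(i),g(j)}$, and similarly $\cos\theta_{\bw_i,\bw_j^*}=s(\bw_i)s(\bw_j^*)\,[\bK_{\cL,\cL^*}]_{g(i),g^*(j)}$ for the cross pairs.

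The key observation that I would then invoke is that $\psi$ is even, so $\psi(s_i s_j\cos a_{g(i),g(j)})=\psi(\cos a_{g(i),g(j)})$ and all sign variables disappear inside the $\psi$. Grouping the magnitudes $\|\bw_i\|$ by line using $q_p=\sum_{i\in\cG_p}\|\bw_i\|$ converts each double sum into a quadratic form in $\bq$ or $\bq^*$ with respect to the appropriate entrywise-$\psi$ Gram matrix: $\sum_{i,j}\|\bw_i\|\|\bw_j\|\psi(\cos a_{g(i),g(j)})=\bq^t\psi[\bK_{\cL}]\bq$, and similarly the same-star and cross sums yield $(\bq^*)^t\psi[\bK_{\cL^*}]\bq^*$ and $\bq^t\psi[\bK_{\cL,\cL^*}]\bq^*$. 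Substituting these back into $\EE[h^2]-2\EE[hy]+\EE[y^2]$ gives exactly \eqref{eq:loss-mismatch}.

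The only real obstacle is the verification that $\psi$ is even and that the $\tfrac{1}{4}(\bu^t\bv)+\tfrac{1}{4}\|\bu\|\|\bv\|\psi(\cos\theta)$ decomposition holds; this is the single nontrivial algebraic identity on which everything else depends, and it is precisely what allows the sign variables $s(\bw_i)$ to drop out of the nonlinear part so that the quadratic form cleanly involves only the magnitudes $q_p$ and the geometry of the lines through $\psi[\bK_{\cL}]$, $\psi[\bK_{\cL^*}]$, and $\psi[\bK_{\cL,\cL^*}]$. Once this identity is in hand the rest of the proof is essentially bookkeeping identical in spirit to the matched-case proof of Theorem~\ref{thm:global-quan}, but with a second line set $\cL^*$ tracked in parallel.
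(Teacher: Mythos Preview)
Your proposal is correct and follows essentially the same approach as the paper, which simply states that the proof is similar to that of Theorem~\ref{thm:global-quan}. Your explicit use of the evenness of $\psi$ to eliminate the sign variables is a slightly cleaner packaging of the same trigonometric manipulation the paper carries out via \eqref{eq:quan-angle-wi-wj} in the matched case, but the underlying computation is identical.
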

\begin{proof}
See Section \ref{subsec:proof:mismatch-global}.
\end{proof}
If $\cL=\cL^*$ and $\cG=\cG^*$, the mismatched PNN loss \eqref{eq:loss-mismatch} simplifies to the matched PNN loss \eqref{eq:loss-quan}.
\begin{corollary}\label{cor:loss-mismatch-min}
Let
\begin{align}\label{eq:A}
\bK=\left( {\begin{array}{cc}
   \bK_{\cL} & \bK_{\cL,\cL^*} \\
   \bK_{\cL,\cL^*}^t & \bK_{\cL^*}
  \end{array} } \right)\in\mathbb{R}^{(r+r^*)\times (r+r^*)}.
\end{align}
Then, the loss function of a mismatched PNN can be lower bounded as
\begin{align}\label{eq:schur}
L(\bW)\geq \frac{1}{4} \|\bq^*\|^2 \lambda_{\text{min}}\left(\psi[\bK]/\psi[\bK_{\cL}]\right)
\end{align}
where $\psi[\bK]/\psi[\bK_{\cL}]:=\psi[\bK_{\cL^*}]-\psi[\bK_{\cL^*}]^t \psi[\bK_{\cL}]^{\dagger} \psi[\bK_{\cL^*}]$ is the generalized Schur complement of the block $\psi[\bK_{\cL}]$ in the matrix $\psi[\bK]$.
\end{corollary}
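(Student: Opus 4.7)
The plan is to lower-bound $L(\bW)$ by first discarding the non-negative term $\tfrac14\bigl\|\sum \bw_i - \sum \bw_i^*\bigr\|^2$ in \eqref{eq:loss-mismatch} and then repackaging the three remaining terms as a single quadratic form in a single vector. Setting $\bz := (\bq^t,\, -(\bq^*)^t)^t \in \reals^{r+r^*}$, a direct block expansion gives
\begin{align*}
\tfrac14 \bq^t\psi[\bK_{\cL}]\bq - \tfrac12 \bq^t\psi[\bK_{\cL,\cL^*}]\bq^* + \tfrac14 (\bq^*)^t\psi[\bK_{\cL^*}]\bq^* \;=\; \tfrac14\, \bz^t \psi[\bK] \bz,
\end{align*}
where $\psi[\bK]$ is exactly the block matrix in \eqref{eq:A}. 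The key observation is that $\bK$ is the Gram matrix of the combined family of unit direction vectors $\{\bu_1,\dots,\bu_r,\bu_1^*,\dots,\bu_{r^*}^*\}$, so $\bK = \bK_{\cL \cup \cL^*}$, and Lemma~\ref{lem:C-psd} immediately yields $\psi[\bK] \succeq 0$.

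The next step is to minimize $\bz^t \psi[\bK] \bz$ over $\bq \in \reals^r$ with $\bq^*$ held fixed; relaxing the natural non-negativity of $\bq$ can only weaken the resulting lower bound. Positive semidefiniteness of the full block matrix automatically enforces $\mathrm{range}(\psi[\bK_{\cL,\cL^*}]) \subseteq \mathrm{range}(\psi[\bK_\cL])$, the standard condition under which completing the square at $\bq_0 := \psi[\bK_\cL]^\dagger \psi[\bK_{\cL,\cL^*}]\bq^*$ is exact. Doing so yields
\begin{align*}
\min_{\bq \in \reals^r} \tfrac14\, \bz^t \psi[\bK] \bz \;=\; \tfrac14 (\bq^*)^t \bigl(\psi[\bK]/\psi[\bK_\cL]\bigr) \bq^*,
\end{align*}
and applying the Rayleigh-quotient bound $(\bq^*)^t M \bq^* \ge \lambda_{\min}(M)\|\bq^*\|^2$ with $M = \psi[\bK]/\psi[\bK_\cL]$---which is itself PSD as the generalized Schur complement of a PSD matrix---then closes out \eqref{eq:schur}.

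The main technical obstacle is justifying the completing-the-square step when $\psi[\bK_\cL]$ is singular, i.e., showing $\psi[\bK_\cL]\psi[\bK_\cL]^\dagger \psi[\bK_{\cL,\cL^*}]\bq^* = \psi[\bK_{\cL,\cL^*}]\bq^*$. This reduces to the standard linear-algebra fact that for any PSD block matrix $\bigl(\begin{array}{cc} \bA & \bB \\ \bB^t & \bC \end{array}\bigr)$ one has $\mathrm{range}(\bB) \subseteq \mathrm{range}(\bA)$, which can be verified by testing the quadratic form along vectors $(\bv, t\bu)$ with $\bv \in \ker(\bA)$ and $t \in \reals$ arbitrary, forcing $\bB^t\bv = 0$. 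Everything else is essentially bookkeeping on the block structure and a single appeal to Lemma~\ref{lem:C-psd}.
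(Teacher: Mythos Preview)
Your proof is correct and follows the natural approach implied by the paper, which states this result as a corollary of Theorem~\ref{thm:global-mismatch} without an explicit proof. The intended argument is exactly what you carry out: drop the non-negative first term of \eqref{eq:loss-mismatch}, recognize the remaining three terms as $\tfrac14\bz^t\psi[\bK]\bz$ with $\bz=(\bq^t,-(\bq^*)^t)^t$, invoke Lemma~\ref{lem:C-psd} on the combined line set $\cL\cup\cL^*$ to get $\psi[\bK]\succeq 0$, and then minimize over $\bq$ to obtain the generalized Schur complement. Your handling of the singular case via the range inclusion $\mathrm{range}(\psi[\bK_{\cL,\cL^*}])\subseteq\mathrm{range}(\psi[\bK_\cL])$ is the right justification and is more careful than the paper itself, which elsewhere simply assumes invertibility when convenient.
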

In the mismatched case, the loss at global optima can be non-zero since the model used to generate the data does not belong to the set of training models.

Next, we characterize local optimizers of optimization \eqref{opt:neural-network-PNN} for a mismatched PNN. Similar to the matched PNN case, we define $R(\bs_1,...,\bs_{r})$ as the space of $\bW$ where $\bs_i$ is the vector of sign variables of weight vectors over the line $L_i$.

\begin{theorem}\label{thm:mismatched-local}
For a mismatched PNN, in regions $R(\bs_1,...,\bs_r)$ where at least $d$ of $\bs_i$'s are not equal to $\pm \mathbf{1}$, every local optimizer of optimization \eqref{opt:neural-network-PNN} is a global optimizer. Moreover, in those points we have
\begin{align}\label{eq:schur}
L(\bW^*)&=\frac{1}{4} (\bq^*)^t \left(\psi[\bK]/\psi[\bK_{\cL}]\right)\bq^* \\
&\leq \frac{1}{4} \|\bq^*\|^2 \|\psi[\bK]/\psi[\bK_{\cL}]\|.\nonumber
\end{align}
\end{theorem}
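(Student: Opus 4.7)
The plan is to restrict to an interior point of a sign region $R(\bs_1,\dots,\bs_r)$ whose mixed-sign index set $\cI := \{i : \bs_i \ne \pm\mathbf{1}\}$ has $|\cI| \ge d$, and to reparameterize the weights by $\alpha_j := \|\bw_j\| > 0$ so that $\bw_j = s_j\alpha_j\bu_{g(j)}$ with $s_j = s(\bw_j)$ fixed in the region. Introducing the signed sums $\tilde q_i := \sum_{j\in\cG_i} s_j\alpha_j$ alongside $q_i = \sum_{j\in\cG_i}\alpha_j$ from \eqref{eq:q}, one has $\sum_j\bw_j = \bU_\cL\tilde{\bq}$ and $\sum_j\bw_j^* = \bU_{\cL^*}\tilde{\bq}^*$. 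Substituting into \eqref{eq:loss-mismatch} then yields the clean decomposition $L(\bW) = L_1(\tilde{\bq}) + L_2(\bq)$, where $L_1(\tilde{\bq}) := \tfrac14\|\bU_\cL\tilde{\bq} - \bU_{\cL^*}\tilde{\bq}^*\|^2$ is convex in $\tilde{\bq}$ and $L_2(\bq) := \tfrac14\bq^t\psi[\bK_\cL]\bq - \tfrac12\bq^t\psi[\bK_{\cL,\cL^*}]\bq^* + \tfrac14(\bq^*)^t\psi[\bK_{\cL^*}]\bq^*$ is convex in $\bq$ by Lemma~\ref{lem:C-psd}.

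By the chain rule $\partial L/\partial\alpha_j = s_j[\nabla_{\tilde{\bq}}L_1]_{g(j)} + [\nabla_\bq L_2]_{g(j)}$, so vanishing of every partial gives, for each line $i$, one linear relation per neuron in $\cG_i$. For $i \in \cI$ both signs occur, so taking two neurons with opposite $s_j$ and adding/subtracting decouples the conditions to
\begin{align*}
[\nabla_{\tilde{\bq}}L_1]_i = \tfrac12\bu_i^t(\bU_\cL\tilde{\bq} - \bU_{\cL^*}\tilde{\bq}^*) = 0, \qquad [\nabla_\bq L_2]_i = \tfrac12\bigl(\psi[\bK_\cL]\bq - \psi[\bK_{\cL,\cL^*}]\bq^*\bigr)_i = 0.
\end{align*}
For $i \notin \cI$ only the combined relation $s_i[\nabla_{\tilde{\bq}}L_1]_i + [\nabla_\bq L_2]_i = 0$ is available.

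Now I exploit $|\cI| \ge d$. The residual $\bv := \bU_\cL\tilde{\bq} - \bU_{\cL^*}\tilde{\bq}^* \in \RR^d$ is orthogonal to $\{\bu_i\}_{i\in\cI}$; in concert with the general-position consequence of $\rank(\bU_\cL)\ge d$---that some $d$ of the unit vectors $\{\bu_i\}_{i\in\cI}$ are linearly independent---this forces $\bv = 0$, so $L_1 = 0$ and moreover $\nabla_{\tilde{\bq}}L_1 = \tfrac12\bU_\cL^t\bv = 0$ in every coordinate. Plugging this back into the combined relation at each $i \notin \cI$ eliminates the first term and leaves $[\nabla_\bq L_2]_i = 0$, so in fact $\nabla_\bq L_2 = 0$ globally, i.e.\ $\psi[\bK_\cL]\bq = \psi[\bK_{\cL,\cL^*}]\bq^*$. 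Thus $\bq$ is an \emph{unconstrained} minimizer of the convex $L_2$, and the interior local optimum attains the global minimum of $L$ over all $\bW$.

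Finally, substituting $\bq = \psi[\bK_\cL]^\dagger\psi[\bK_{\cL,\cL^*}]\bq^*$ into $L_2$ and using $\bA\bA^\dagger\bA = \bA$ for the PSD matrix $\bA = \psi[\bK_\cL]$ collapses the cross-terms and gives $L(\bW^*) = \tfrac14(\bq^*)^t(\psi[\bK_{\cL^*}] - \psi[\bK_{\cL,\cL^*}]^t\psi[\bK_\cL]^\dagger\psi[\bK_{\cL,\cL^*}])\bq^* = \tfrac14(\bq^*)^t(\psi[\bK]/\psi[\bK_\cL])\bq^*$, which is the claimed Schur-complement value. The stated upper bound is then immediate from the Rayleigh inequality $\bx^t\bA\bx \le \|\bA\|\,\|\bx\|^2$ applied to this PSD Schur complement. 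The main obstacle is the spanning step in the previous paragraph: being orthogonal to $|\cI|$ unit vectors pins $\bv$ to zero only when those vectors are linearly independent, so the statement implicitly requires a general-position condition on $\{\bu_i\}_{i\in\cI}$ (the analogue of the corresponding condition used in the proof of Theorem~\ref{thm:quan-local}).
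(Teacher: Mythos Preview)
Your proof is correct and follows essentially the same approach as the paper: use the mixed-sign condition on at least $d$ lines to force $\sum_i\bw_i-\sum_i\bw_i^*=0$ via linear independence of the corresponding $\bu_i$, deduce $\psi[\bK_\cL]\bq=\psi[\bK_{\cL,\cL^*}]\bq^*$ from the remaining first-order conditions, and substitute into the loss to obtain the Schur-complement value. Your reparameterization via $\alpha_j$ and the clean decomposition $L=L_1(\tilde\bq)+L_2(\bq)$ is a tidier presentation than the paper's direct gradient computation, and your explicit acknowledgment of the general-position requirement on $\{\bu_i\}_{i\in\cI}$ is a point the paper leaves implicit.
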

\begin{proof}
See Section \ref{subsec:proof:mismatch-local}.
\end{proof}

When the condition of Theorem \ref{thm:mismatched-local} holds, the spectral norm of the matrix $\|\psi[\bK]/\psi[\bK_{\cL}]\|$ provides an upper-bound on the loss value at global optimizers of the mismatched PNN. In Section \ref{sec:apx}, we study this bound in more detail. Moreover, in Section \ref{sec:badlocals}, we study the case where the condition of Theorem \ref{thm:mismatched-local} does not hold (i.e., the local search method converges to a point in parameter regions where more than $r-d$ of variables $\bs_i$ are equal to $\pm \mathbf{1}$).

To conclude this section, we show that if $\bU_{\cL}$ is a perturbed version of $\bU_{\cL^*}$, the loss in global optima of the mismatched PNN optimization \eqref{opt:neural-network-PNN} is small. This shows a continuity property of the PNN optimization with respect to line perturbations.

%%%%%%%%%%%
\begin{lemma}\label{lemma:kernelschurclosest}
Let $\bK$ is defined as in \eqref{eq:A} where $r=r^*$. Let $\bZ:=\bU- \bU^*$ be the perturbation matrix.
Assume that $\lambda_{\min}\left(\psi\left[\bK_{\cL^*}\right]\right) \geq \delta$. If
\begin{align*}
2\sqrt{r}\|\bZ\|_F + \|\bZ\|_F^2 \leq \frac{\delta}{2},
\end{align*}
then
\begin{align*}
\left\|\psi[\bK]/\psi[\bK_{\cL}]\right\|_2 \leq \left(1+\frac{2r}{\delta}\right)\|\bZ\|_F^2 + 4\sqrt{r}\|\bZ\|_F.
\end{align*}
\end{lemma}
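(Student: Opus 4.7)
The plan is to expand the Schur complement as a perturbation of zero, exploiting that in the unperturbed regime $(\bZ = 0)$ we have $\bK_\cL = \bK_{\cL,\cL^*} = \bK_{\cL^*} =: \bK^*$, hence $\psi[\bK_\cL] = \psi[\bK_{\cL,\cL^*}] = \psi[\bK_{\cL^*}] =: \bM^*$, and the Schur complement reduces to $\bM^* - \bM^*(\bM^*)^{-1}\bM^* = 0$. The task is then to control the perturbation in $\bZ$ and show it inherits the announced bound.

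First, I would expand the kernel matrices in terms of $\bZ$:
\begin{align*}
\bK_\cL - \bK^* &= \bZ^t \bU^* + (\bU^*)^t \bZ + \bZ^t \bZ, \\
\bK_{\cL,\cL^*} - \bK^* &= \bZ^t \bU^*,
\end{align*}
and use that the columns of $\bU^*$ are unit vectors, so $\|\bU^*\|_2 \le \|\bU^*\|_F = \sqrt{r}$, to obtain
\begin{align*}
\|\bK_\cL - \bK^*\|_F \le 2\sqrt{r}\|\bZ\|_F + \|\bZ\|_F^2, \qquad \|\bK_{\cL,\cL^*} - \bK^*\|_F \le \sqrt{r}\|\bZ\|_F.
\end{align*}
Next, I would verify that $\psi$ is $1$-Lipschitz on $[-1,1]$ by computing $\psi'(x) = 1 - (2/\pi)\cos^{-1}(x) \in [-1,1]$; this lifts the kernel perturbation bounds to Frobenius bounds on $\bDelta_1 := \psi[\bK_\cL] - \bM^*$ and $\bDelta_2 := \psi[\bK_{\cL,\cL^*}] - \bM^*$ of the same magnitudes. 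The smallness hypothesis $2\sqrt{r}\|\bZ\|_F + \|\bZ\|_F^2 \le \delta/2$ combined with $\lambda_{\min}(\bM^*) \ge \delta$ and Weyl's inequality then yields $\lambda_{\min}(\psi[\bK_\cL]) \ge \delta/2$. Therefore $\psi[\bK_\cL]$ is invertible, $\psi[\bK_\cL]^\dagger = \psi[\bK_\cL]^{-1}$, and $\|\psi[\bK_\cL]^{-1}\|_2 \le 2/\delta$, so the Schur complement $\bS := \psi[\bK_{\cL^*}] - \psi[\bK_{\cL,\cL^*}]^t \psi[\bK_\cL]^{-1} \psi[\bK_{\cL,\cL^*}]$ is well-defined.

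The heart of the proof is expanding $\bS$ around its unperturbed value of zero. Applying the resolvent identity $\psi[\bK_\cL]^{-1} = (\bM^*)^{-1} - \psi[\bK_\cL]^{-1}\bDelta_1(\bM^*)^{-1}$ and substituting $\psi[\bK_{\cL,\cL^*}] = \bM^* + \bDelta_2$, the linear-in-$\bDelta$ contributions collect into $\bDelta_1 - \bDelta_2 - \bDelta_2^t$, mirroring the kernel-level identity $\bK_\cL - \bK_{\cL,\cL^*} - \bK_{\cL,\cL^*}^t + \bK^* = \bZ^t \bZ$. The triangle inequality then gives $\|\bDelta_1 - \bDelta_2 - \bDelta_2^t\|_F \le \|\bDelta_1\|_F + 2\|\bDelta_2\|_F \le 4\sqrt{r}\|\bZ\|_F + \|\bZ\|_F^2$, which accounts for the $4\sqrt{r}\|\bZ\|_F$ piece of the target bound. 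The remaining quadratic-in-$\bDelta$ terms in the expansion, of the form $\bDelta_2^t(\bM^*)^{-1}\bDelta_2$ and assorted products involving $(\bM^*)^{-1}\bDelta_1\psi[\bK_\cL]^{-1}$ sandwiched between copies of $\bM^* + \bDelta_2$, are bounded in spectral norm using $\|(\bM^*)^{-1}\|_2 \le 1/\delta$, $\|\psi[\bK_\cL]^{-1}\|_2 \le 2/\delta$, together with the trace bound $\|\bM^*\|_2 \le r$ (since $\bM^*$ is PSD with unit diagonal). This accounts for the $(2r/\delta)\|\bZ\|_F^2$ contribution.

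The main obstacle will be the careful bookkeeping in the Schur complement expansion: tracking which terms produce linear-in-$\bZ$ contributions (which cancel algebraically via the identity above), which are genuinely quadratic, and routing the $(\bM^*)^{-1}$ and $\psi[\bK_\cL]^{-1}$ factors so that the residual carries only the advertised powers of $r$ and $1/\delta$. It is essential here that $\bM^*$ and $\psi[\bK_\cL]$ are symmetric while $\psi[\bK_{\cL,\cL^*}]$ is generally not, so one must consistently distinguish $\bDelta_2$ from $\bDelta_2^t$ throughout the computation.
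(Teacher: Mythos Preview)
Your overall plan---use that $\psi$ is $1$-Lipschitz to transfer Frobenius bounds from the Gram matrices to the $\psi$-blocks, apply Weyl to get $\lambda_{\min}(\psi[\bK_{\cL}])\ge\delta/2$, then expand the Schur complement as a perturbation---matches the paper exactly. The difference is in how the Schur complement is expanded.

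You center at $\bM^*:=\psi[\bK_{\cL^*}]$ and invoke the resolvent identity to expand $\psi[\bK_\cL]^{-1}$ around $(\bM^*)^{-1}$, which produces quadratic and cubic terms you must track separately (and, as you note, brings $\|\bM^*\|_2\le r$ into the estimates). The paper instead centers at $\bA:=\psi[\bK_\cL]$, the block actually being inverted. Writing $\psi[\bK_{\cL,\cL^*}]=\bA+\bDelta_1$ and $\psi[\bK_{\cL^*}]=\bA+\bDelta_2$, the Schur complement collapses \emph{exactly} to
\[
\psi[\bK]/\psi[\bK_\cL]=\bDelta_2-\bDelta_1-\bDelta_1^{t}-\bDelta_1^{t}\bA^{-1}\bDelta_1,
\]
a four-term identity with no resolvent expansion, no higher-order remainder, and no appearance of $\|\bM^*\|$. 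The bound $\sigma_1^2/(\delta/2)+2\sigma_1+\sigma_2$ with $\sigma_1=\sqrt{r}\|\bZ\|_F$, $\sigma_2=2\sqrt{r}\|\bZ\|_F+\|\bZ\|_F^2$ then gives precisely $(1+2r/\delta)\|\bZ\|_F^2+4\sqrt{r}\|\bZ\|_F$.

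Your route can be made to work, but as sketched it would yield a larger constant in front of $\|\bZ\|_F^2$: the quadratic terms you list ($\bDelta_2^{t}(\bM^*)^{-1}\bDelta_2$, the sandwiched $(\bM^*)^{-1}\bDelta_1\psi[\bK_\cL]^{-1}$ products) do not obviously sum to $2r/\delta$, and invoking $\|\bM^*\|_2\le r$ would introduce extra factors that the paper's approach avoids entirely. The simplification you are missing is that once you rewrite everything relative to $\bA$ rather than $\bM^*$, the factors of $\bM^*$ cancel algebraically (e.g., $\psi[\bK_\cL]^{-1}\bM^*=\bI-\psi[\bK_\cL]^{-1}\bDelta_1^{\text{yours}}$), and your expansion reduces to the paper's four-term identity.
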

\begin{proof}
See Section \ref{subsec:proof:perturb}.
\end{proof}

\section{PNN Approximations of Unconstrained Neural Networks}\label{sec:apx}
In this section, we study whether an unconstrained two-layer neural network function can be approximated by a PNN. We assume that the unconstrained neural network has $d$ inputs and $k^*$ hidden neurons. This neural network function can also be viewed as a PNN whose lines are determined by input weight vectors to neurons. Thus, in this case $r^*\leq k^*$ where $r^*$ is the number of lines of the original network. If weights are generated randomly, with probability one, $r^*=k^*$ since the probability that two random vectors lie on the same line is zero. Note that lines of the ground-truth PNN (i.e., the unconstrained neural network) are unknowns in the training step. For training, we use a two-layer PNN with $r$ lines, drawn uniformly at random, and $k$ neurons. Since we have relu activation functions at neurons, without loss of generality, we can assume $k=2r$, i.e., for every line we assign two neurons (one for potential weight vectors with positive orientations on that line and the other one for potential weight vectors with negative orientations). Since there is a mismatch between the model generating the data and the model used for training, we will have an approximation error. In this section, we study this approximation error as a function of parameters $d$, $r$ and $r^*$.

\subsection{The PNN Approximation Error Under the Condition of Theorem \ref{thm:mismatched-local}}\label{subsec:apx-random}
Suppose $y$ is generated using an unconstrained two-layer neural network with $k^*$ neurons, i.e., $y=\sum_{i=1}^{k^*}\text{relu}(<\bw_i^*,\bx>)$. In this section, we consider approximating $y$ using a PNN whose lines $\cL$ are drawn uniformly at random. Since these lines will be different than $\cL^*$, the neural network optimization can be formulated as a mismatched PNN optimization, studied in Section \ref{sec:mismatched}. Moreover, in this section, we assume the condition of Theorem \ref{thm:mismatched-local} holds, i.e., the local search algorithm converges to a point in parameter regions where at least $d$ of variables $\bs_i$ are not equal to $\pm \mathbf{1}$. The case that violates this condition is more complicated and is investigated in Section \ref{sec:badlocals}.

Under the condition of Theorem \ref{thm:mismatched-local}, the PNN approximation error depends on both $\|\bq^*\|$ and $\|\psi[\bK]/\psi[\bK_{\cL}]\|$. The former term provides a scaling normalization for the loss function. Thus, we focus on analyzing the later term.

\begin{figure}[t]
\centering
  \includegraphics[width=0.8\linewidth]{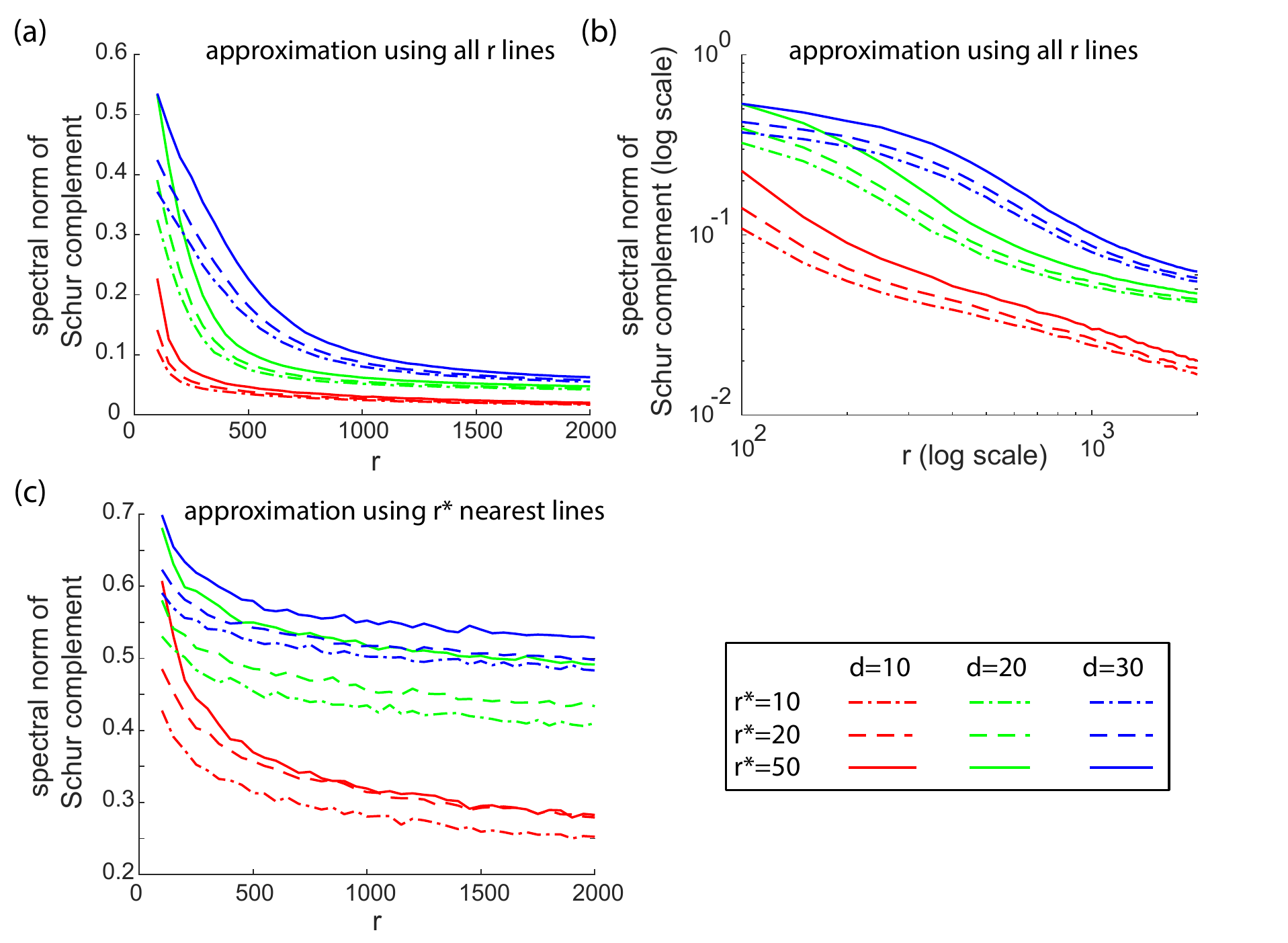}
\caption{(a) The spectral norm of $\psi[\bK]/\psi[\bK_{\cL}]$ for various values of $d$, $r^*$ and $r$. (b) A log-log plot of curves in panel (a). (c) The spectral norm of $\psi[\bK_{nearest}]/\psi[\bK_{\cL_{nearest}}]$ for various values of $d$, $r^*$ and $r$. Experiments have been repeated 100 times. Average results are shown. }
\label{fig:schur-finite-d}
\end{figure}

Since Theorem \ref{thm:mismatched-local} provides an upper-bound for the mismatched PNN optimization loss by $\|\psi[\bK]/\psi[\bK_{\cL}]\|$, intuitively increasing the number of lines in $\cL$ should decrease $\|\psi[\bK]/\psi[\bK_{\cL}]\|$. We prove this in the following theorem.

\begin{theorem}\label{thm:schur-increase-one-line}
Let $\bK$ be defined as in \eqref{eq:A}. We add a distinct line to the set $\cL$, i.e., $\cL_{\text{new}}=\cL\cup L_{r+1}$. Define
\begin{align}\label{eq:A-new}
\bK_{\text{new}}=\left( {\begin{array}{cc}
   \bK_{\cL_{\text{new}}} & \bK_{\cL_{\text{new}},\cL^*} \\
   \bK_{\cL_{\text{new}},\cL^*}^t & \bK_{\cL^*}
  \end{array} } \right)=\left( {\begin{array}{ccc}
   1 & \bz_1^t & \bz_2^t \\
  \bz_1& \bK_{\cL} & \bK_{\cL,\cL^*} \\
  \bz_2& \bK_{\cL,\cL^*}^t & \bK_{\cL^*}
  \end{array} } \right) \in\mathbb{R}^{(r+r^*+1)\times (r+r^*+1)}.
\end{align}
Then, we have
\begin{align}
\|\psi[\bK_{\text{new}}]/\psi[\bK_{\cL_{\text{new}}}]\| \leq \|\psi[\bK]/\psi[\bK_{\cL}]\|.
\end{align}
More specifically,
\begin{align}
\psi[\bK_{\text{new}}]/\psi[\bK_{\cL_{\text{new}}}] = \psi[\bK]/\psi[\bK_{\cL}] - \alpha\bv\bv^t,
\end{align}
where $\alpha = \left(1- \left\langle \psi[\bz_1], \psi[\bK_{\cL}]^{-1}\psi[\bz_1]\right\rangle\right)^{-1} \geq 0$, $\bv = \psi[\bz_2] - \psi[\bK_{\cL,\cL^*}]^t\psi[\bK_{\cL}]^{-1}\psi[\bz_1]$.
\end{theorem}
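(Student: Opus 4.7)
\textbf{Proof proposal for Theorem \ref{thm:schur-increase-one-line}.}
The approach is to compute the new Schur complement $\psi[\bK_{\text{new}}]/\psi[\bK_{\cL_{\text{new}}}]$ in closed form in terms of the old one and to read off both the identity and the spectral-norm inequality from it. I first note $\psi(1)=1$ from \eqref{eq:psi-def}, so
\begin{equation*}
\psi[\bK_{\cL_{\text{new}}}] = \begin{pmatrix} 1 & \psi[\bz_1]^t \\ \psi[\bz_1] & \psi[\bK_{\cL}] \end{pmatrix}, \qquad \bB := \begin{pmatrix} \psi[\bz_2]^t \\ \psi[\bK_{\cL,\cL^*}] \end{pmatrix},
\end{equation*}
and $\psi[\bK_{\text{new}}]/\psi[\bK_{\cL_{\text{new}}}] = \psi[\bK_{\cL^*}] - \bB^t \psi[\bK_{\cL_{\text{new}}}]^{-1}\bB$. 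Inverting the $(r+1)\times(r+1)$ block by the standard block-inversion formula applied to the upper-left scalar corner gives, with $s := 1 - \psi[\bz_1]^t\psi[\bK_{\cL}]^{-1}\psi[\bz_1]$ and $\bp := \psi[\bK_{\cL}]^{-1}\psi[\bz_1]$,
\begin{equation*}
\psi[\bK_{\cL_{\text{new}}}]^{-1} = \begin{pmatrix} 1/s & -\bp^t/s \\ -\bp/s & \psi[\bK_{\cL}]^{-1} + \bp\bp^t/s \end{pmatrix}.
\end{equation*}

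Plugging this into $\bB^t\psi[\bK_{\cL_{\text{new}}}]^{-1}\bB$ and setting $\bc := \psi[\bK_{\cL,\cL^*}]^t\bp = \psi[\bK_{\cL,\cL^*}]^t\psi[\bK_{\cL}]^{-1}\psi[\bz_1]$, the cross terms collapse into
\begin{equation*}
\bB^t\psi[\bK_{\cL_{\text{new}}}]^{-1}\bB = \psi[\bK_{\cL,\cL^*}]^t\psi[\bK_{\cL}]^{-1}\psi[\bK_{\cL,\cL^*}] + \tfrac{1}{s}(\psi[\bz_2]-\bc)(\psi[\bz_2]-\bc)^t,
\end{equation*}
since the four terms $\tfrac{1}{s}[\psi[\bz_2]\psi[\bz_2]^t-\psi[\bz_2]\bc^t-\bc\psi[\bz_2]^t+\bc\bc^t]$ factor as the outer product of $\psi[\bz_2]-\bc$ with itself. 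Subtracting this from $\psi[\bK_{\cL^*}]$ delivers the claimed identity with $\alpha = 1/s$ and $\bv = \psi[\bz_2]-\bc$. Nonnegativity $\alpha\ge 0$ is then just the fact that $s\ge 0$, which is the Schur complement of $\psi[\bK_{\cL}]$ inside the PSD matrix $\psi[\bK_{\cL_{\text{new}}}]$ (PSD by Lemma \ref{lem:C-psd}).

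For the norm inequality, both $\psi[\bK]/\psi[\bK_{\cL}]$ and $\psi[\bK_{\text{new}}]/\psi[\bK_{\cL_{\text{new}}}]$ are Schur complements of PSD matrices and are therefore themselves PSD, so each spectral norm equals its top eigenvalue. The identity just derived shows these two PSD matrices differ by the PSD rank-one matrix $\alpha\bv\bv^t$, so $\bx^t\bigl(\psi[\bK_{\text{new}}]/\psi[\bK_{\cL_{\text{new}}}]\bigr)\bx \le \bx^t\bigl(\psi[\bK]/\psi[\bK_{\cL}]\bigr)\bx$ for every $\bx$, and maximizing over unit vectors gives the desired bound. The main point that requires care is the case where $\psi[\bK_{\cL}]$ is only PSD rather than PD; I would resolve this by replacing $\psi[\bK_{\cL}]^{-1}$ with the pseudo-inverse and using the standard range-containment property of PSD block matrices (namely $\psi[\bz_1]\in\mathrm{range}(\psi[\bK_{\cL}])$, which follows from PSDness of $\psi[\bK_{\cL_{\text{new}}}]$), so that $\bp$, $\bc$, and $s$ remain well defined and the algebra above is unchanged.
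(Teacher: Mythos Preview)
Your proposal is correct and follows essentially the same approach as the paper: both compute the new Schur complement via block inversion of the $(r+1)\times(r+1)$ upper-left block, obtain the rank-one update $\psi[\bK]/\psi[\bK_{\cL}]-\alpha\bv\bv^t$, and deduce the norm inequality from the PSD ordering. The only cosmetic difference is that the paper writes the bottom-right block of the inverse as $(\bD_{11}-\bzeta_1\bzeta_1^t)^{-1}$ and then invokes Sherman--Morrison, whereas you write it directly in the expanded form $\psi[\bK_{\cL}]^{-1}+\bp\bp^t/s$; your extra remark on handling the merely-PSD case via the pseudo-inverse is a nice addition that the paper does not spell out.
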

\begin{proof}
See Section \ref{subsec:proof:add:one:line}.
\end{proof}

Theorem \ref{thm:schur-increase-one-line} indicates that adding lines to $\cL$ decreases $\|\psi[\bK]/\psi[\bK_{\cL}]\|$. However, it does not characterize the rate of this decrease as a function of $r$, $r^*$ and $d$. Next, we evaluate this error decay rate empirically for random PNNs. Figure \ref{fig:schur-finite-d}-a demonstrates the spectral norm of the matrix $\psi[\bK]/\psi[\bK_{\cL}]$ where $\cL$ and $\cL^*$ are both generated uniformly at random. For various values of $r^*$ and $d$, increasing $r$ decreases the PNN approximation error. For moderately small values of $r$, the decay rate of the approximation error is fast. However, for large values of $r$, the decay rate of the approximation error appears to be a polynomial function of $r$ (i.e., the tail is linear in the log-log plot shown in Figure \ref{fig:schur-finite-d}-b). Analyzing $\|\psi[\bK]/\psi[\bK_{\cL}]\|$ as a function of $r$ for fixed values of $d$ and $r$ appears to be challenging. Later in this section, we characterize the asymptotic behaviour of $\|\psi[\bK]/\psi[\bK_{\cL}]\|$ when $d,r\to\infty$.

As explained in Theorem \ref{thm:schur-increase-one-line}, increasing the number of lines in $\cL$ decreases $\|\psi[\bK]/\psi[\bK_{\cL}]\|$. Next, we investigate whether this decrease is due in part to the fact that by increasing $r$, the distance between a subset of $\cL$ with $r^*$ lines and $\cL^*$ decreases. Let $\cL_{nearest}$ be a subset of lines in $\cL$ with $r^*$ lines constructed as follows: for every line $L_i^*$ in $\cL^*$, we select a line $L_j$ in $\cL$ that minimizes $\|\bu_j-\bu_{i}^*\|$ (i.e., $L_j$ has the closest unit vector to $\bu_i$). To simplify notation, we assume that minimizers for different lines in $\cL^*$ are distinct. Using $\cL_{nearest}$ instead of $\cL$, we define $\bK_{nearest}\in \mathbb{R}^{2r^*\times 2r^*}$ as in \eqref{eq:A}.
Figure \ref{fig:schur-finite-d} demonstrates $\|\psi[\bK_{nearest}]/\psi[\bK_{\cL_{nearest}}]\|$ for various values of $r$, $r^*$ and $d$. As it is illustrated in this figure, the PNN approximation error using $r^*$ nearest lines in $\cL$ is significantly larger than the case using all lines.

Next, we analyze the behaviour of $\|\psi[\bK]/\psi[\bK_{\cL}]\|$ when $d,r\to\infty$. There has been some recent interest in characterizing spectrum of inner product kernel random matrices \cite{el2010spectrum,do2012spectrum,cheng2013spectrum,fan2015spectral}. If the kernel is linear, the distribution of eigenvalues of the covariance matrix follows the well-known Marcenko Pastur law. If the kernel is nonlinear, reference \cite{el2010spectrum} shows that in the high dimensional regime where $d,r\to\infty$ and $\gamma=r/d \in(0,\infty)$ is fixed, only the linear part of the kernel function affects the spectrum. Note that the matrix of interest in our problem is the Schur complement matrix $\psi[\bK]/\psi[\bK_{\cL}]$, not $\psi[\bK]$. However, we can use results characterizing spectrum of $\psi[\bK]$ to characterize the spectrum of $\psi[\bK]/\psi[\bK_{\cL}]$.

\begin{figure}[t]
\centering
  \includegraphics[width=0.9\linewidth]{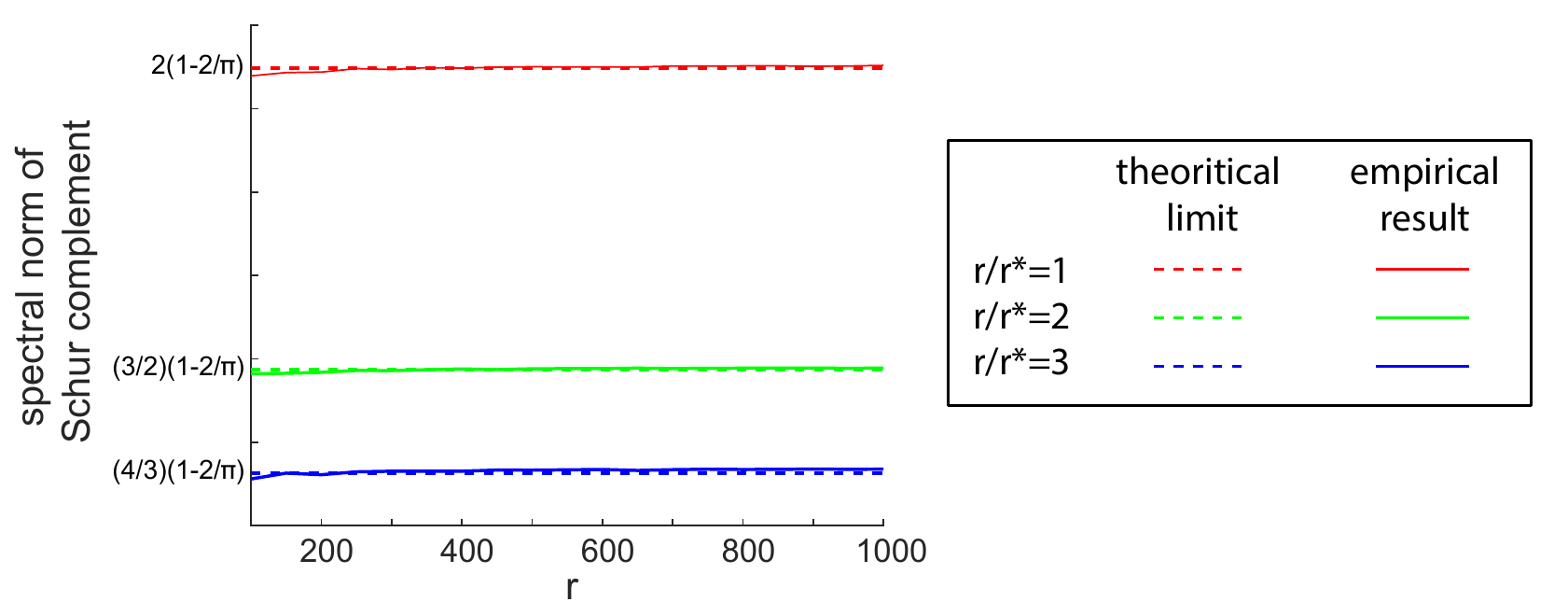}
\caption{The spectral norm of $\psi[\bK]/\psi[\bK_{\cL}]$ when $d=r$. Theoretical limits are described in Theorem \ref{thm:asym}. Experiments have been repeated 100 times. Average results are shown.}
\label{fig:schur-asym-d}
\end{figure}

First, we consider the regime where $r,d\to\infty$ while $\gamma=r/d\in (0,\infty)$ is a fixed number. Theorem 2.1 of reference \cite{el2010spectrum} shows that in this regime and under some mild assumptions on the kernel function (which our kernel function $\psi(.)$ satisfies), $\psi[\bK_{\cL}]$ converges (in probability) to the following matrix:
\begin{align}\label{eq:asym-limit}
\bR_{\cL}=\left(\psi(0)+\frac{\psi''(0)}{2d}\right)\mathbf{1}\mathbf{1}^t+\psi'(0) \bU_{\cL}^t\bU_{\cL}+ \left(\psi(1)-\psi(0)-\psi'(0)\right) \mathbf{I}_{r}.
\end{align}
To obtain this formula, one can write the tailor expansion of the kernel function $\psi(.)$ near 0. It turns out that in the regime where $r,d\to\infty$ while $d/r$ is fixed, it is sufficient for off-diagonal elements of $\psi[\bK_{\cL}]$ to replace $\psi(.)$ with its linear part. However, diagonal elements of $\psi[\bK_{\cL}]$ should be adjusted accordingly (the last term in \eqref{eq:asym-limit}). For the kernel function of our interest, defined as in \eqref{eq:psi-def}, we have $\psi'(0)=0$, $\psi''(0)=2/\pi$, $\psi(0)=2/\pi$ and $\psi(1)=1$. This simplifies \eqref{eq:asym-limit} further to:
\begin{align}\label{eq:asym-limit2}
\bR_{\cL}=\left(\frac{2}{\pi}+\frac{1}{\pi d}\right) \mathbf{1}\mathbf{1}^t+ (1-\frac{2}{\pi}) \mathbf{I}_{r}.
\end{align}
This matrix has $(r-1)$ eigenvalues of $1-2/\pi$ and one eigenvalue of $(2/\pi)r+1-2/\pi+\gamma/\pi$. Using this result, we characterize $\|\psi[\bK]/\psi[\bK_{\cL}]\|$ in the following theorem:
\begin{theorem}\label{thm:asym}
Let $\cL$ and $\cL^*$ have $r$ and $r^*$ lines in $\mathbb{R}^d$ generated uniformly at random, respectively. Let $d,r\to\infty$ while $\gamma=r/d\in (0,\infty)$ is fixed. Moreover, $r^*/r=\cO(1)$. Then,
\begin{align}
 \|\psi[\bK]/\psi[\bK_{\cL}]\|\to \left(1+\frac{r^*}{r}\right)\left(1-\frac{2}{\pi}\right),
\end{align}
where the convergence is in probability.
\end{theorem}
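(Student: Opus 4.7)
The plan is to apply the spectral asymptotics for inner product kernel matrices (Theorem 2.1 of \cite{el2010spectrum}) to the enlarged kernel built from the union $\cL\cup\cL^*$ of $r+r^*$ i.i.d.\ uniform random lines in $\mathbb{R}^d$, compute the Schur complement of the resulting deterministic limit in closed form, and then transfer the convergence through the Schur complement operation. Since $(r+r^*)/d = (1+r^*/r)\gamma$ is bounded, El Karoui's theorem applied to the combined set yields, with $a := 2/\pi + 1/(\pi d)$ and $b := 1 - 2/\pi$,
\begin{align*}
\|\psi[\bK] - \bR\|_{op} \xrightarrow{P} 0, \qquad \bR = a\,\mathbf{1}\mathbf{1}^t + b\,\mathbf{I}_{r+r^*},
\end{align*}
because $\psi(0) = 2/\pi$, $\psi'(0) = 0$, $\psi''(0) = 2/\pi$ and $\psi(1) = 1$ make the linear-in-inner-product term drop out, just as in \eqref{eq:asym-limit2}.

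Next, I partition $\bR$ into blocks of sizes $r$ and $r^*$, obtaining $\bR_{\cL} = a\mathbf{1}_r\mathbf{1}_r^t + b\mathbf{I}_r$, $\bR_{\cL^*} = a\mathbf{1}_{r^*}\mathbf{1}_{r^*}^t + b\mathbf{I}_{r^*}$, and the rank-one cross block $\bR_{\cL,\cL^*} = a\mathbf{1}_r\mathbf{1}_{r^*}^t$. By Sherman--Morrison, $\bR_{\cL}^{-1}\mathbf{1}_r = (b+ar)^{-1}\mathbf{1}_r$, so
\begin{align*}
\bR_{\cL,\cL^*}^t \bR_{\cL}^{-1} \bR_{\cL,\cL^*} = \frac{a^2 r}{b+ar}\,\mathbf{1}_{r^*}\mathbf{1}_{r^*}^t, \qquad \bR/\bR_{\cL} = b\,\mathbf{I}_{r^*} + \frac{ab}{b+ar}\,\mathbf{1}_{r^*}\mathbf{1}_{r^*}^t.
\end{align*}
This matrix has spectrum $\{b\}$ with multiplicity $r^*-1$ and top eigenvalue $b + abr^*/(b+ar)$; since $a \to 2/\pi$ and $ar \to \infty$, the latter tends to $b(1 + r^*/r) = (1-2/\pi)(1+r^*/r)$, which is the claimed value.

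To transfer the convergence, I use the resolvent identity $\psi[\bK_{\cL}]^{-1} - \bR_{\cL}^{-1} = -\bR_{\cL}^{-1}(\psi[\bK_{\cL}] - \bR_{\cL})\psi[\bK_{\cL}]^{-1}$ together with the uniform bound $\|\psi[\bK_{\cL}]^{-1}\|,\|\bR_{\cL}^{-1}\| \leq 1/b + o_P(1)$ that follows from Weyl's inequality and $\lambda_{\min}(\bR_{\cL}) = b$. Writing $\psi[\bK_{\cL,\cL^*}] = \bR_{\cL,\cL^*} + \bE_N$ with $\|\bE_N\|_{op} = o_P(1)$, the difference $\psi[\bK]/\psi[\bK_{\cL}] - \bR/\bR_{\cL}$ splits into (i) cross terms like $\bR_{\cL,\cL^*}^t \psi[\bK_{\cL}]^{-1} \bE_N$, whose norm is bounded using $\|\bE_N\| = o_P(1)$ and the fact that $\bR_{\cL,\cL^*}^t \psi[\bK_{\cL}]^{-1}$ has bounded norm (because its only active direction is $\mathbf{1}_r$, which is nearly an eigenvector of $\psi[\bK_{\cL}]^{-1}$ with eigenvalue $O(1/r)$), and (ii) a scalar quadratic form in $\mathbf{1}_r$ times $\mathbf{1}_{r^*}\mathbf{1}_{r^*}^t$, estimated via $\mathbf{1}_r^t(\psi[\bK_{\cL}]^{-1}-\bR_{\cL}^{-1})\mathbf{1}_r = -(b+ar)^{-2}\mathbf{1}_r^t (\psi[\bK_{\cL}]-\bR_{\cL})\mathbf{1}_r + o_P(1)/r$, which is $o_P(1/r)$ since $|\mathbf{1}_r^t \bE \mathbf{1}_r| \leq r\|\bE\|$.

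The main obstacle is precisely this last transfer: El Karoui supplies only operator-norm control, while $\|\bR_{\cL,\cL^*}\| = a\sqrt{rr^*}$ grows like $r$, so the naive multilinear perturbation bound on the Schur complement blows up by a factor of order $r^2$. The rescue is structural: $\bR_{\cL,\cL^*}$ is rank one and its active direction $\mathbf{1}_r$ is exactly the top eigenvector of $\bR_{\cL}$ (with the large eigenvalue $b+ar$), so every appearance of $\bR_{\cL,\cL^*}$ ultimately contracts against $\bR_{\cL}^{-1}\mathbf{1}_r = (b+ar)^{-1}\mathbf{1}_r$. This alignment turns each potentially divergent operator product into a scalar quadratic form in $\mathbf{1}_r$ that is controlled uniformly in $r$, allowing the perturbation to be pushed through the Schur complement.
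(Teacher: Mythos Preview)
Your proposal is correct and follows essentially the same route as the paper. Both arguments apply El Karoui's theorem to the full $(r+r^*)\times(r+r^*)$ kernel, compute the Schur complement of the rank-one-plus-identity limit $\bR$ in closed form, and then push the $o_P(1)$ operator-norm perturbation through the Schur complement by exploiting the alignment $\bR_{\cL}^{-1}\mathbf{1}_r = (b+ar)^{-1}\mathbf{1}_r$; the paper writes this as $\|\mathbf{1}_{r^*\times r}\bR_{11}^{-1}\|\leq c_1$ and expands $\bD/\bD_{11}=\bZ_1+\bZ_2$ term by term, while you phrase it via the resolvent identity and Sherman--Morrison, but the substance and the key cancellation are identical.
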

\begin{proof}
See Section \ref{subsec:proof:thm:asym}.
\end{proof}

In the setup considered in Theorem \ref{thm:asym}, the dependency of $\|\psi[\bK]/\psi[\bK_{\cL}]\|$ to $\gamma$ is negligible as it is shown in \eqref{eq:gamma-dep}.

\begin{comment}
\begin{figure}[t]
\centering
  \includegraphics[width=0.9\linewidth]{schur-asymp-vary-power}
\caption{The spectral norm of the matrix $\psi[\bK]/\psi[\bK_{\cL}]$ where $r=d^c$ for $c=1,2,3,4$. In panel (a), $r^*$ is fixed while in panel (b), $r^*=r$. Experiments have been repeated 100 times. Average results are shown. }
\label{fig:schur-asym2}
\end{figure}
\end{comment}

Figure \ref{fig:schur-asym-d} shows the spectral norm of $\psi[\bK]/\psi[\bK_{\cL}]$ when $d=r$. As it is illustrated in this figure, empirical results match closely to analytical limits of Theorem \ref{thm:asym}. Note that by increasing the ratio of $r/r^*$, $\|\psi[\bK]/\psi[\bK_{\cL}]\|$ and therefore the PNN approximation error decreases. If $r^*$ is constant, the limit is $1-2/\pi\approx 0.36$.

Theorem \ref{thm:asym} provides a bound on $\|\psi[\bK]/\psi[\bK_{\cL}]\|$ in the asymptotic regime. In the following corollary, we use this result to bound the PNN approximation error measured as the MSE normalized by the $L_2$ norm of the output variables (i.e., $L(\bW=0)$).

\begin{proposition}\label{prop:scale}
Let $\bW^*$ be the global optimizer of the mismatched PNN optimization under the setup of Theorem \ref{thm:asym}. Then, with high probability, we have
\begin{align}
\frac{L(\bW^*)}{L(\bW=\mathbf{0})}\leq \left(1+\frac{r^*}{r}\right)\left(1-\frac{2}{\pi}\right).
\end{align}
\end{proposition}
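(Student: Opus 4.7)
My plan is to bound the numerator and denominator of the ratio $L(\bW^*)/L(\bW=\mathbf{0})$ separately and then combine. For the denominator, I would observe that $h(\bx;\mathbf{0})=0$ and so $L(\bW=\mathbf{0})=\EE[y^2]$, and lower-bound this in terms of $\|\bq^*\|^2$ using only non-negativity of $\relu$. For the numerator, I would invoke the upper bound from Theorem \ref{thm:mismatched-local} together with the asymptotic spectral-norm limit from Theorem \ref{thm:asym}.

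Concretely, the first step is to write $y=\sum_{i=1}^{k^*}\relu((\bw_i^*)^t\bx)$ and expand $\EE[y^2]$ as a sum of diagonal and off-diagonal terms. Each diagonal term evaluates to $\EE[\relu((\bw_i^*)^t\bx)^2]=\|\bw_i^*\|^2/2$ by the standard half-Gaussian identity, while each off-diagonal term $\EE[\relu((\bw_i^*)^t\bx)\,\relu((\bw_j^*)^t\bx)]$ is non-negative since $\relu\ge 0$. Because the ground-truth network is unconstrained and its weights are generic, each neuron occupies its own line, so $q_i^*=\|\bw_i^*\|$ and $\|\bq^*\|^2=\sum_i\|\bw_i^*\|^2$. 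This yields
$$L(\bW=\mathbf{0}) \;=\; \EE[y^2] \;\ge\; \tfrac{1}{2}\|\bq^*\|^2.$$

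The second step is to upper-bound $L(\bW^*)$. Under the hypothesis of Section \ref{subsec:apx-random} (namely the condition of Theorem \ref{thm:mismatched-local}), that theorem gives $L(\bW^*)\le \tfrac14\|\bq^*\|^2\,\bigl\|\psi[\bK]/\psi[\bK_{\cL}]\bigr\|$, and Theorem \ref{thm:asym} then provides, with high probability as $d,r\to\infty$, the limit $\bigl\|\psi[\bK]/\psi[\bK_{\cL}]\bigr\|\to(1+r^*/r)(1-2/\pi)$. Dividing by the denominator bound yields, with high probability,
$$\frac{L(\bW^*)}{L(\bW=\mathbf{0})} \;\le\; \tfrac{1}{2}\bigl(1+\tfrac{r^*}{r}\bigr)\bigl(1-\tfrac{2}{\pi}\bigr)(1+o(1)) \;\le\; \bigl(1+\tfrac{r^*}{r}\bigr)\bigl(1-\tfrac{2}{\pi}\bigr),$$
which is the stated claim.

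I do not expect a substantive obstacle here: every ingredient is already established elsewhere in the paper, and the argument is essentially just a division of two one-line bounds. The only mild point of care is to package the ``convergence in probability'' statement of Theorem \ref{thm:asym} as a high-probability bound for any fixed tolerance $\eps>0$, which is routine; the factor-of-two slack in my denominator estimate comfortably absorbs the resulting $o(1)$ correction, so no sharpening of the lower bound on $L(\bW=\mathbf{0})$ is needed to land inside the constant $(1+r^*/r)(1-2/\pi)$.
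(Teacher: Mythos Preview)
Your proposal is correct, and for the numerator you use exactly the paper's route (Theorem \ref{thm:mismatched-local} combined with Theorem \ref{thm:asym}). For the denominator, however, you take a genuinely different and more elementary path. The paper expresses $L(\bW=\mathbf{0})$ via Theorem \ref{thm:global-mismatch} as $\tfrac14\|\sum_i\bw_i^*\|^2+\tfrac14(\bq^*)^t\psi[\bK_{\cL^*}]\bq^*$, then replaces $\psi[\bK_{\cL^*}]$ by its asymptotic limit $(1-\tfrac{2}{\pi})\bI+(\tfrac{2}{\pi}+\tfrac{1}{\pi d})\mathbf{1}\mathbf{1}^t$, and finally uses $\|\bq^*\|_1\ge\|\bq^*\|_2$ to arrive at $L(\bW=\mathbf{0})\ge\tfrac14\|\bq^*\|^2$. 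Your argument bypasses the kernel machinery entirely: non-negativity of $\relu$ makes every cross term in $\EE[y^2]$ non-negative, and the diagonal terms alone already give $\tfrac12\|\bq^*\|^2$. This yields a bound that is both non-asymptotic and twice as strong, and it leaves you with a comfortable factor-of-two slack to absorb the $o(1)$ from the convergence-in-probability statement. The paper's approach, by contrast, exploits the structural decomposition of the loss that is used throughout and connects more tightly to the asymptotic kernel analysis, but at the price of a weaker constant and an additional reliance on the high-dimensional approximation of $\psi[\bK_{\cL^*}]$.
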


\begin{proof}
See Section \ref{proof:prop:scale}.
\end{proof}

This proposition indicates that in the asymptotic regime with $d$ and $r$ grow with the same rate (i.e., $r$ is a constant factor of $d$), the PNN is able to explain a fraction of the variance of the output variable. In practice, however, $r$ should grow faster than $d$ in order to obtain a small PNN approximation error.

\subsection{The General PNN Approximation Error}\label{sec:badlocals}
In this section, we consider the case where the condition of Theorem \ref{thm:mismatched-local} does not hold, i.e., the local search algorithm converges to a point in a {\it bad} parameter region where more than $r-d$ of $\bs_i$ variables are equal to $\pm \mathbf{1}$. To simplify notation, we assume that the local search method has converged to a region where all $\bs_i$ variables are equal to $\pm \mathbf{1}$. The analysis extends naturally to other cases as well.

Let $\bs=(\bs_1,...,\bs_r)$. Let $\bS$ be the diagonal matrix whose diagonal entries are equal to $\bs$, i.e., $\bS=\text{diag}(\bs)$. Similar to the argument of Theorems \ref{thm:quan-local} and \ref{thm:mismatched-local}, a necessary condition for a point $\bW$ to be a local optima of the PNN optimization is:

\begin{align}\label{eq:cond-local-bad}
\bS \bU_{\cL}^t \left(\sum_{i=1}^{k}\bw_i-\sum_{i=1}^{k^*}\bw_i^*\right)+ \psi[\bK_{\cL}]\bq-\psi[\bK_{\cL,\cL^*}]\bq^*=0.
\end{align}

Under the condition of Theorem \ref{thm:mismatched-local}, we have $\sum_{i=1}^{k}\bw_i-\sum_{i=1}^{k^*}\bw_i^*=\mathbf{0}$, which simplifies this condition.

Using \eqref{eq:cond-local-bad} in \eqref{eq:loss-mismatch}, at local optima in bad parameter regions, we have
\begin{align}\label{eq:bad-loss}
4 L(\bW)=\left(\bq^*\right)^t \psi[\bK]/\psi[\bK_{\cL}] \bq^*+ \bz^t \left(\bI+\bU_{\cL} \bS \psi[\bK_{\cL}]^{-1} \bS \bU_{\cL}^t\right) \bz,
\end{align}
where
\begin{align}
\bz:=\sum_{i=1}^{k}\bw_i-\sum_{i=1}^{k^*}\bw_i^*.
\end{align}
The first term of \eqref{eq:bad-loss} is similar to the PNN loss under the condition of Theorem \ref{thm:mismatched-local}. The second term is the price paid for converging to a point in a bad parameter region. In this section, we analyze this term.

The second term of \eqref{eq:bad-loss} depends on the norm of $\bz$. First, in the following lemma, we characterize $\bz$ in local optima.

\begin{lemma}\label{lem:z-bad-local}
In the local optimum of the mismatched PNN optimization, we have
\begin{align}\label{eq:z}
\bz = -\left(\bU_{\cL}\bS\bS^t\bU_{\cL}^t\right)^{-1}\bU_{\cL}\bS\bigg[&\psi[\bK_{\cL}]\left(\bS\bU_{\cL}^t\bU_{\cL}\bS+\psi[\bK_{\cL}]\right)^\dagger\bS\bU_{\cL}^t\bw_0\\
&+ \left(\psi[\bK_{\cL}]\left(\bS\bU_{\cL}^t\bU_{\cL}\bS+\psi[\bK_{\cL}]\right)^\dagger-\bI\right)\psi[\bK_{\cL,\cL^*}]\bq^*\bigg],\nonumber
\end{align}
where
\begin{align*}
\bw_0 \triangleq \sum_{i=1}^{k^*}\bw_i^*.
\end{align*}
\end{lemma}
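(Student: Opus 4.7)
The plan is to combine the first-order stationarity condition \eqref{eq:cond-local-bad} already derived in the text with the very explicit form that $\bz$ takes in the bad parameter region. Because every $\bs_i$ equals $\pm\mathbf{1}$, all neurons on line $L_i$ share a common sign $\sigma_i\in\{\pm 1\}$, so $\sum_{i=1}^k \bw_i$ telescopes to $\bU_{\cL}\bS\bq$ with $\bS = \diag(\sigma_1,\ldots,\sigma_r)$ and $\bq$ as in \eqref{eq:q}. Hence $\bz = \bU_{\cL}\bS\bq - \bw_0$. Substituting this into \eqref{eq:cond-local-bad} and using $\bS^2 = \bI$ yields the linear system
\begin{align*}
(\bS\bK_{\cL}\bS + \psi[\bK_{\cL}])\,\bq \;=\; \bS\bU_{\cL}^t\bw_0 + \psi[\bK_{\cL,\cL^*}]\bq^* \;=:\; \bb,
\end{align*}
whose solution is $\bq = \bP\bb$ with $\bP := (\bS\bK_{\cL}\bS + \psi[\bK_{\cL}])^{\dagger}$.

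With $\bq$ in hand, I compute $\bU_{\cL}^t\bz = \bK_{\cL}\bS\bq - \bU_{\cL}^t\bw_0 = \bK_{\cL}\bS\bP\bb - \bU_{\cL}^t\bw_0$. The crucial simplification is to observe that $\bb$ lies in the range of $\bS\bK_{\cL}\bS + \psi[\bK_{\cL}]$ (it is the RHS of a system that admits a solution by assumption), so the pseudoinverse identity gives $(\bS\bK_{\cL}\bS + \psi[\bK_{\cL}])\bP\bb = \bb$, i.e.\ $\bS\bK_{\cL}\bS\bP\bb = \bb - \psi[\bK_{\cL}]\bP\bb$. Left-multiplying by $\bS$ produces $\bK_{\cL}\bS\bP\bb = \bS\bb - \bS\psi[\bK_{\cL}]\bP\bb$, and substituting $\bb = \bS\bU_{\cL}^t\bw_0 + \psi[\bK_{\cL,\cL^*}]\bq^*$, together with $\bS\bS = \bI$, collapses the expression to
\begin{align*}
\bU_{\cL}^t\bz \;=\; -\bS\Bigl[\psi[\bK_{\cL}]\bP\,\bS\bU_{\cL}^t\bw_0 + (\psi[\bK_{\cL}]\bP - \bI)\,\psi[\bK_{\cL,\cL^*}]\bq^*\Bigr].
\end{align*}

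To finish, I invert from $\bU_{\cL}^t\bz$ back to $\bz$. The hypothesis at the start of Section \ref{sec:mismatched} that $\bU_{\cL}$ has rank at least $d$ makes $\bU_{\cL}\bU_{\cL}^t$ invertible and forces every $\bz\in\mathbb{R}^d$ to lie in the column span of $\bU_{\cL}$, so $\bz = (\bU_{\cL}\bU_{\cL}^t)^{-1}\bU_{\cL}(\bU_{\cL}^t\bz)$. Since $\bS\bS^t = \bI$, I may rewrite $\bU_{\cL}\bU_{\cL}^t = \bU_{\cL}\bS\bS^t\bU_{\cL}^t$ and obtain exactly the expression claimed in the lemma.

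The only real subtlety in the argument is that the identity $(\bS\bK_{\cL}\bS + \psi[\bK_{\cL}])\bP = \bI$ can fail as a matrix identity when the operator is singular, so it must be applied only at the level of vectors (namely to $\bb$), which is legitimate precisely because $\bb$ belongs to the range. Apart from this pseudoinverse bookkeeping, the derivation is a sequence of elementary substitutions, and both key structural ingredients --- the rank-$d$ assumption on $\bU_{\cL}$ and the collapse of sign vectors to $\pm\mathbf{1}$ --- have already been introduced in the surrounding text.
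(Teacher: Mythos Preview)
Your proposal is correct and follows essentially the same approach as the paper: both express $\sum_i \bw_i = \bU_{\cL}\bS\bq$, solve the linear system $(\bS\bU_{\cL}^t\bU_{\cL}\bS+\psi[\bK_{\cL}])\bq = \bS\bU_{\cL}^t\bw_0 + \psi[\bK_{\cL,\cL^*}]\bq^*$ for $\bq$ via the pseudoinverse, and then recover $\bz$ using the rank-$d$ assumption on $\bU_{\cL}$. The only difference is that the paper reads off $-\bS\bU_{\cL}^t\bz = \psi[\bK_{\cL}]\bq - \psi[\bK_{\cL,\cL^*}]\bq^*$ directly from \eqref{eq:cond-local-bad} and substitutes $\bq=\bP\bb$, whereas you compute $\bU_{\cL}^t\bz = \bK_{\cL}\bS\bq - \bU_{\cL}^t\bw_0$ from the explicit form of $\bz$ and then invoke the range identity $(\bS\bK_{\cL}\bS+\psi[\bK_{\cL}])\bP\bb=\bb$ to simplify; the paper's route is one step shorter and sidesteps your pseudoinverse subtlety entirely.
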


\begin{proof}
See Section \ref{proof:z-bad-local}.
\end{proof}

Replacing \eqref{eq:z} in \eqref{eq:bad-loss} gives us the loss function achieved at the local optimum.
In order to simplify the loss expression, without loss of generality, from now on we replace $\bU\bS$ with $\bU$ (note that there is essentially no difference between $\bU_{\cL}\bS$ and $\bU_{\cL}$ as the columns of $\bU_{\cL}\bS$ are the columns of $\bU_{\cL}$ with {\it adjusted} orientations.). Moreover, to simplify the analysis of this section, we make the following assumptions.

\begin{assumption}\label{ass:bad-local}
\textup{Recall that we assume that all $\bs_i$ for $1\leq i\leq r$ are equal to $\pm \mathbf{1}$. Our analysis extends naturally to other cases. Moreover, we assume that $\bw_0=0$. This assumption has a negligible effect on our estimate of the value of the loss function achieved in the local minimum in many cases. For example, when $\bw^*_i$ are i.i.d. $\mathcal N(0, (1/d)\bI)$ random vectors, $\bw_0$ is a $\mathcal N(0,(r^*/d)\bI)$ random vector and therefore $\|\bw_0\|_2 = \Theta(\sqrt{r^*})$. On the other hand, $\|\bq^*\|_2 = \Theta(r^*)$. Hence,
in the case where $r^*$ is large, the value of the loss function in the local minimum is controlled by the terms involving $\|\bq^*\|_2^2$ in \eqref{eq:bad-loss}. Thus, we can ignore the terms involving $\bw_0$ in this regime. Finally,
we assume that $\psi[\bK_{\cL}]$ (and consequently $\bU_{\cL}^t\bU_{\cL}+\psi[\bK_{\cL}]$) is invertible.}
\end{assumption}

\begin{theorem}\label{thm:bad-local}
Under assumptions \ref{ass:bad-local}, in a local minimum of the mismatched PNN optimization, we have
\begin{align}\label{eq:bad-loss2}
L(\bW) = \frac{1}{4} (\bq^*)^t \left(\widetilde\psi[\bK]/\psi[\bK_{\cL}]\right)\bq^*,
\end{align}
where
\begin{align*}
\widetilde\psi[\bK] =
\begin{bmatrix}
    \psi[\bK_{\cL}]+\bU_{\cL}^t\bU_{\cL} & \psi[\bK_{\cL,\cL^*}] \\
    \psi[\bK_{\cL,\cL^*}]^t & \psi[\bK_{\cL^*}]\\
\end{bmatrix}.\\
\end{align*}
\end{theorem}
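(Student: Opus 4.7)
The plan is to start from the loss expression \eqref{eq:bad-loss} and substitute the formula for $\bz$ provided by Lemma \ref{lem:z-bad-local}, invoking the simplifications allowed by Assumption \ref{ass:bad-local}: $\bw_0 = 0$, the relabeling $\bU_{\cL}\bS \to \bU_{\cL}$, and invertibility of $\psi[\bK_{\cL}]$ and $\bU_{\cL}^t\bU_{\cL} + \psi[\bK_{\cL}]$. Write $\bP := \psi[\bK_{\cL}]$, $\bN := \bU_{\cL}^t\bU_{\cL}$, and $\bM := \bN + \bP$, so that the target identity to be proved is
\begin{align*}
4L(\bW) = (\bq^*)^t\bigl[\psi[\bK_{\cL^*}] - \psi[\bK_{\cL,\cL^*}]^t\bM^{-1}\psi[\bK_{\cL,\cL^*}]\bigr]\bq^*,
\end{align*}
which is precisely $(\bq^*)^t(\widetilde\psi[\bK]/\psi[\bK_{\cL}])\bq^*$ by the definition of $\widetilde\psi[\bK]$ in the theorem statement.

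First I would simplify $\bz$. The bracket inside Lemma \ref{lem:z-bad-local} collapses using the algebraic identity $\bP\bM^{-1} - \bI = -\bN\bM^{-1}$, which follows from $\bM - \bN = \bP$. Assuming $\bU_{\cL}$ has rank $d$ so that $\bU_{\cL}\bU_{\cL}^t$ is invertible, one obtains $(\bU_{\cL}\bU_{\cL}^t)^{-1}\bU_{\cL}\bN = \bU_{\cL}$, yielding the clean form
\begin{align*}
\bz = \bU_{\cL}\bM^{-1}\psi[\bK_{\cL,\cL^*}]\bq^*.
\end{align*}

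Next I would evaluate the quadratic form $\bz^t(\bI + \bU_{\cL}\bP^{-1}\bU_{\cL}^t)\bz$ appearing in \eqref{eq:bad-loss}. Using $\bU_{\cL}^t\bz = \bN\bM^{-1}\psi[\bK_{\cL,\cL^*}]\bq^*$ and the factoring identity $\bN + \bN\bP^{-1}\bN = \bN\bP^{-1}(\bP + \bN) = \bN\bP^{-1}\bM$, the quadratic form simplifies to $(\bq^*)^t\psi[\bK_{\cL,\cL^*}]^t\bM^{-1}\bN\bP^{-1}\psi[\bK_{\cL,\cL^*}]\bq^*$, where one factor of $\bM$ cancels against $\bM^{-1}$.

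Finally I would add this to the first term of \eqref{eq:bad-loss}, namely $(\bq^*)^t(\psi[\bK_{\cL^*}] - \psi[\bK_{\cL,\cL^*}]^t\bP^{-1}\psi[\bK_{\cL,\cL^*}])\bq^*$, and use the identity $\bP^{-1} - \bM^{-1}\bN\bP^{-1} = \bM^{-1}(\bM - \bN)\bP^{-1} = \bM^{-1}$ to collapse the two $\psi[\bK_{\cL,\cL^*}]$ quadratic forms into a single one with kernel $\bM^{-1}$, giving the claimed Schur-complement expression. The only real obstacle is careful bookkeeping of the three matrix identities ($\bP\bM^{-1} = \bI - \bN\bM^{-1}$, $\bN + \bN\bP^{-1}\bN = \bN\bP^{-1}\bM$, and $\bP^{-1} - \bM^{-1}\bN\bP^{-1} = \bM^{-1}$); each is a one-line computation, but together they embody the fact that replacing $\bP$ by $\bP + \bN$ in the upper-left block of $\psi[\bK]$ exactly absorbs the penalty $\bz^t(\bI + \bU_{\cL}\bP^{-1}\bU_{\cL}^t)\bz$ paid for converging to a bad sign region.
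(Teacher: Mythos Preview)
Your proposal is correct and complete. The overall strategy---simplify $\bz$ from Lemma~\ref{lem:z-bad-local} under Assumption~\ref{ass:bad-local}, evaluate the quadratic form $\bz^t(\bI+\bU_{\cL}\bP^{-1}\bU_{\cL}^t)\bz$, and combine with the first term of \eqref{eq:bad-loss}---matches the paper's. The difference lies in the algebraic machinery. The paper applies the Woodbury identity twice: first to rewrite $\bz$ in the form $(\bI+\bU_{\cL}\bP^{-1}\bU_{\cL}^t)^{-1}\bU_{\cL}\bP^{-1}\psi[\bK_{\cL,\cL^*}]\bq^*$, so that one factor of $(\bI+\bU_{\cL}\bP^{-1}\bU_{\cL}^t)^{-1}$ cancels immediately against the middle of the quadratic form; and second to recognize the resulting inner kernel $\bP^{-1}-\bP^{-1}\bU_{\cL}^t(\bI+\bU_{\cL}\bP^{-1}\bU_{\cL}^t)^{-1}\bU_{\cL}\bP^{-1}$ as $(\bP+\bU_{\cL}^t\bU_{\cL})^{-1}=\bM^{-1}$. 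You bypass Woodbury entirely by working with $\bM^{-1}$ from the start: the three one-line identities $\bP\bM^{-1}-\bI=-\bN\bM^{-1}$, $\bN+\bN\bP^{-1}\bN=\bN\bP^{-1}\bM$, and $\bP^{-1}-\bM^{-1}\bN\bP^{-1}=\bM^{-1}$ do all the work. Your route is more elementary and arguably more transparent about \emph{why} the answer is a Schur complement with upper-left block $\bM=\bP+\bN$; the paper's route is more mechanical once one commits to Woodbury, and its intermediate form for $\bz$ makes the cancellation inside the quadratic form occur in one step rather than two.
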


\begin{proof}
See Section \ref{proof:thm:bad:local}.
\end{proof}

The matrix $\widetilde\psi[\bK]$ has an extra term of $\bU_{\cL}^t\bU_{\cL}$ (i.e., the linear kernel) compared to the matrix $\psi[\bK]$. The effect of this term is the price of converging to a local optimum in a bad region. In the following, we analysis this effect in the asymptotic regime where $r,d\to\infty$ while $r/d$ is fixed.

\begin{theorem}\label{thm:asymp2}
Consider the asymptotic case where $r = \gamma d$, $r^* > d+1$, $\gamma > 1$ and $r, r^*, d\to \infty$.
Assume that $k^*=r^*$ underlying weight vectors $\bw^*_{i} \in \reals^d$ are chosen
uniformly at random in $\reals^d$ while the PNN is trained over $r$ lines drawn uniformly at random in $\reals^d$. Under assumption \ref{ass:bad-local}, at local optima, with probability $1-2\exp(-\mu^2 d)$, we have
\begin{align*}
L(\bW) \leq \frac{1}{4}\left( 1-\frac{2}{\pi}+(1+\sqrt{\gamma}+\mu)^2\frac{r^*}{r}\right) \left\|\bq^*\right\|_2^2,
\end{align*}
where $\mu>1$ is a constant.
\end{theorem}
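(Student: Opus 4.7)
The plan is to start from the exact loss formula of Theorem \ref{thm:bad-local},
\[
L(\bW)=\tfrac{1}{4}(\bq^*)^t\bigl(\widetilde{\psi}[\bK]/\psi[\bK_{\cL}]\bigr)\bq^*,
\]
upper bound by the spectral norm as $L(\bW)\leq \tfrac{1}{4}\|\bq^*\|_2^2\,\|\widetilde{\psi}[\bK]/\psi[\bK_{\cL}]\|$, and then compare the ``bad-region'' Schur complement to the ``clean'' Schur complement $\psi[\bK]/\psi[\bK_{\cL}]$ that Theorem \ref{thm:asym} already handles. Applying the Woodbury identity to $(\psi[\bK_{\cL}]+\bU_{\cL}^t\bU_{\cL})^{-1}$ gives the additive decomposition
\[
\widetilde{\psi}[\bK]/\psi[\bK_{\cL}] \;=\; \psi[\bK]/\psi[\bK_{\cL}] \;+\; \bT^t\bigl(\bI_d+\bU_{\cL}\psi[\bK_{\cL}]^{-1}\bU_{\cL}^t\bigr)^{-1}\bT,
\qquad \bT:=\bU_{\cL}\psi[\bK_{\cL}]^{-1}\psi[\bK_{\cL,\cL^*}].
\]
Since $(\bI_d+M)^{-1}\preceq \bI_d$ for any positive semidefinite $M$, the triangle inequality yields $\|\widetilde{\psi}[\bK]/\psi[\bK_{\cL}]\|\leq\|\psi[\bK]/\psi[\bK_{\cL}]\|+\|\bT\|^2$.

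The first summand is handled by Theorem \ref{thm:asym}, which in the regime $r=\gamma d$ gives $\|\psi[\bK]/\psi[\bK_{\cL}]\|\to (1-2/\pi)(1+r^*/r)$ in probability; this contributes the $1-2/\pi$ term in the claim (the small $(r^*/r)(1-2/\pi)$ residual is dominated by and absorbed into the penalty term below, since $\mu>1$ forces $(1+\sqrt{\gamma}+\mu)^2>1>1-2/\pi$). For $\|\bT\|^2$ I would use the clean factorization $\|\bT\|\leq \|\bU_{\cL}\|\,\|\psi[\bK_{\cL}]^{-1}\psi[\bK_{\cL,\cL^*}]\|$ and treat the two factors separately. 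The first factor is controlled by a standard Davidson--Szarek/Vershynin-type largest-singular-value bound for matrices with i.i.d.\ uniform-on-sphere columns, giving $\|\bU_{\cL}\|\leq 1+\sqrt{\gamma}+\mu$ with probability at least $1-2\exp(-\mu^2 d)$. For the second factor I would use the El~Karoui-type asymptotics from the proof of Theorem \ref{thm:asym}: since $\psi'(0)=0$, one has $\psi[\bK_{\cL}]\approx (2/\pi+1/(\pi d))\mathbf{1}\mathbf{1}^t+(1-2/\pi)\bI_r$ and $\psi[\bK_{\cL,\cL^*}]\approx (2/\pi+1/(\pi d))\mathbf{1}_r\mathbf{1}_{r^*}^t$, and an explicit Sherman--Morrison inversion yields $\psi[\bK_{\cL}]^{-1}\psi[\bK_{\cL,\cL^*}]\approx \tfrac{c}{(1-2/\pi)+cr}\mathbf{1}_r\mathbf{1}_{r^*}^t$ with $c=2/\pi+1/(\pi d)$, whose squared spectral norm tends to $r^*/r$. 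Multiplying the factors gives $\|\bT\|^2\leq (1+\sqrt{\gamma}+\mu)^2\,r^*/r$ on the same event, and combining everything yields the claimed bound.

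The main obstacle is translating the \emph{in-probability} limits of Theorem \ref{thm:asym} and of \cite{el2010spectrum} into a non-asymptotic statement with the exact $2\exp(-\mu^2 d)$ tail appearing in the theorem. This calls for a quantitative version of the El~Karoui expansion---controlling $\|\psi[\bK_{\cL}]-\bR_{\cL}\|$ and its cross-kernel analogue uniformly in $r,r^*,d$---which ultimately reduces to concentration of the pairwise inner products $\langle\bu_i,\bu_j\rangle^2$ around $1/d$ for $i\neq j$. Once these quantitative bounds are in hand, a union bound with the Davidson--Szarek event for $\|\bU_{\cL}\|$ delivers the stated inequality; the remaining manipulations (Woodbury identity, Schur complement triangle inequality, Sherman--Morrison) are routine.
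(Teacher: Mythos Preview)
Your proposal is correct but takes a genuinely different route from the paper. You apply Woodbury \emph{before} passing to the asymptotic approximation, obtaining the additive splitting
\[
\widetilde{\psi}[\bK]\big/\widetilde{\bD}_{11} \;=\; \psi[\bK]/\psi[\bK_{\cL}] \;+\; \bT^t\bigl(\bI_d+\bU_{\cL}\psi[\bK_{\cL}]^{-1}\bU_{\cL}^t\bigr)^{-1}\bT,
\]
so that the first summand is handed off to Theorem~\ref{thm:asym} and only the correction $\bT$ requires new work. The paper instead replaces the whole block matrix $\widetilde\psi[\bK]$ by its El~Karoui approximation $\bR$ (with $\bR_{11}=\alpha\bI+\beta\mathbf{1}+\bU_{\cL}^t\bU_{\cL}$) \emph{first}, and then computes $\bR/\bR_{11}$ directly: the key observation is that every $\bu\perp\mathbf{1}$ is an eigenvector of $\bR/\bR_{11}$ with eigenvalue $\alpha=1-2/\pi$ (because $\bR_{12}\bu=0$), so only the Rayleigh quotient along $\mathbf{1}$ remains, and a second Woodbury step reduces that to $\langle\mathbf{1},\bA\mathbf{1}\rangle\leq\|\bU_{\cL}\mathbf{1}\|^2\leq r\,\|\bU_{\cL}\|^2$.

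Your decomposition is conceptually cleaner in that it recycles Theorem~\ref{thm:asym} wholesale, but it pays a small price in the constant: your bound comes out as $(1-2/\pi)(1+r^*/r)+(1+\sqrt{\gamma}+\mu)^2 r^*/r$, which exceeds the stated $(1-2/\pi)+(1+\sqrt{\gamma}+\mu)^2 r^*/r$ by $(1-2/\pi)\,r^*/r$. Your claim that this residual can be ``absorbed'' into the penalty term because $(1+\sqrt{\gamma}+\mu)^2>1-2/\pi$ does not quite close the gap, since absorbing it would force a slightly larger $\mu$ in the bound than in the probability $1-2\exp(-\mu^2 d)$. The paper avoids this overcount precisely because it never splits the two contributions: in the $\mathbf{1}$ direction it obtains $\alpha+(1+\sqrt{\gamma}+\mu)^2 r^*/r$ in one shot. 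Finally, the obstacle you flag---upgrading the in-probability El~Karoui limits to the stated $2\exp(-\mu^2 d)$ tail---is real, and the paper treats it exactly as you propose: it declares the $\bDelta$ terms ``negligible in the asymptotic regime'' and attributes the explicit tail solely to the Davidson--Szarek bound on $\|\bU_{\cL}\|$.
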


\begin{proof}
See Section \ref{subsec:proof:thm:asym2}.
\end{proof}

Comparing asymptotic error bounds of Theorems \ref{thm:asym} and \ref{thm:asymp2}, we observe that the extra PNN approximation error because of the convergence to a local minimum at a bad parameter region is reflected in the constant parameter $\mu$, which is negligible if $r^*$ is significantly smaller than $r$.

\subsection{A Minimax Analysis of the Naive Nearest Line Approximation Approach}\label{subsec:minimax}
In this section, we show that every realizable function by a two-layer neural network (i.e., every $f\in \cF$) can be approximated arbitrarily closely using a function described by a two-layer PNN (i.e., $\hat{f}\in \cF_{\cL,\cG}$). We start by the following lemma on the continuity of the relu function on the weight parameter:
\begin{lemma}\label{lemma:relucontinuity}
For the relu function $\phi(.)$, we have the following property
\begin{align*}
\left|\phi\left(\left\langle\bw_1, \bx\right\rangle\right)-\phi\left(\left\langle\bw_2, \bx\right\rangle\right)\right| \leq \left\|\bw_1-\bw_2\right\|_2\|\bx\|_2.
\end{align*}
\end{lemma}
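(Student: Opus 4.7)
The plan is to prove this as a direct two-step chain: first establish that $\phi=\text{relu}$ is $1$-Lipschitz on $\reals$, and then apply the Cauchy--Schwarz inequality to the linear argument.

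For the first step, I would verify $|\phi(a)-\phi(b)|\leq |a-b|$ for all $a,b\in\reals$ by cases on the signs of $a$ and $b$. If both $a,b\geq 0$, then $|\phi(a)-\phi(b)|=|a-b|$; if both are nonpositive, $|\phi(a)-\phi(b)|=0\leq|a-b|$; and in the mixed case (say $a\geq 0\geq b$) we have $|\phi(a)-\phi(b)|=a\leq a-b=|a-b|$, with the symmetric subcase handled identically. Alternatively one can just cite that $\phi(z)=\max(z,0)$ is the pointwise maximum of two $1$-Lipschitz functions ($z\mapsto z$ and $z\mapsto 0$) and hence itself $1$-Lipschitz, but the case analysis is fully self-contained.

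For the second step, I would set $a=\langle \bw_1,\bx\rangle$ and $b=\langle \bw_2,\bx\rangle$, so that
\begin{align*}
|\phi(\langle \bw_1,\bx\rangle)-\phi(\langle \bw_2,\bx\rangle)|\leq |\langle \bw_1,\bx\rangle-\langle \bw_2,\bx\rangle|=|\langle \bw_1-\bw_2,\bx\rangle|\leq \|\bw_1-\bw_2\|_2\,\|\bx\|_2,
\end{align*}
where the last inequality is Cauchy--Schwarz.

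There is no real obstacle here; the statement is essentially the composition of $1$-Lipschitzness of relu with the operator-norm bound of the linear map $\bw\mapsto \langle \bw,\bx\rangle$. The only thing worth being careful about is not overreaching: the relu is not differentiable at the origin, so I would rely on the case analysis rather than on a derivative-based Lipschitz argument.
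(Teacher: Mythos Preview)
Your proposal is correct and takes essentially the same approach as the paper: the paper performs the same four-case analysis on the signs of $\langle \bw_1,\bx\rangle$ and $\langle \bw_2,\bx\rangle$, applying Cauchy--Schwarz within each case. The only cosmetic difference is that you factor the argument into two steps (first establishing $1$-Lipschitzness of $\phi$ on $\reals$, then bounding $|\langle \bw_1-\bw_2,\bx\rangle|$), whereas the paper folds these together case by case.
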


\begin{proof}
See Section \ref{subsec:proof-lemma-relu-cont}.
\end{proof}

Recall that $\bu_i$ is the unit norm vector over the line $\L_i$. Let $\mathcal U = \left\{\bu_1, \bu_2, \dots, \bu_r\right\}\subseteq \reals^d$. Denote the set $\mathcal U^- = \left\{-\bu_1, -\bu_2, \dots, -\bu_r\right\}$.

\begin{definition}
For $\delta\in [0,\pi/2]$, we call $\mathcal U$ an angular $\delta$-net of $\mathcal W$ if
for every $\bw \in \mathcal W$, there exists $\bu \in \mathcal U\cup\mathcal U^-$ such that $\theta_{\bu, \bw}\le \delta$.
\end{definition}

The following lemma indicates the size required for $\mathcal U$ to be an angular $\delta$-net
of the unit Euclidean sphere $S^{n-1}$.
\begin{lemma}\label{lemma:spherenet}
Let $\delta\in [0,\pi/2]$. For the unit Euclidean sphere $S^{n-1}$, there exists an angular $\delta$-net $\mathcal U$, with
\begin{align*}
|\mathcal U| \le \frac{1}{2}\left(1 + \frac{\sqrt{2}}{\sqrt{1-\cos \delta}}\right)^n.
\end{align*}
\end{lemma}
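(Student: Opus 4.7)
The plan is to reduce the angular covering problem to a standard Euclidean covering problem on the sphere, and then exploit the freedom of using $\mathcal{U}^-$ to save a factor of two compared to the classical bound $(1+2/\epsilon)^n$.

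First, I would convert angular distance to Euclidean distance. For any $\bu, \bw \in S^{n-1}$, the identity $\|\bu-\bw\|_2^2 = 2(1-\cos\theta_{\bu,\bw})$ holds, so setting $\epsilon := \sqrt{2(1-\cos\delta)}$, the condition $\theta_{\bu,\bw}\le \delta$ is equivalent to $\|\bu-\bw\|_2 \le \epsilon$. Hence it suffices to construct $\mathcal{U}\subseteq S^{n-1}$ with $|\mathcal{U}| \le \tfrac{1}{2}(1+2/\epsilon)^n$ such that $\mathcal{U}\cup\mathcal{U}^-$ is a Euclidean $\epsilon$-net of $S^{n-1}$; plugging in the value of $\epsilon$ at the end recovers the stated bound.

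Second, I would build an antipodally symmetric maximal $\epsilon$-separated set $\mathcal{N}\subseteq S^{n-1}$, i.e., one satisfying $\mathcal{N}=-\mathcal{N}$. The construction is greedy: at each step, pick any $\bu\in S^{n-1}$ whose Euclidean distance to the current set exceeds $\epsilon$, and add both $\bu$ and $-\bu$. Since $\|\bu-(-\bu)\|_2 = 2 \ge \epsilon$ for $\delta\le \pi/2$, this step preserves $\epsilon$-separation. By maximality, $\mathcal{N}$ is a Euclidean $\epsilon$-net of $S^{n-1}$.

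Third, I would bound $|\mathcal{N}|$ via the standard volume-packing argument: the open Euclidean balls of radius $\epsilon/2$ around points of $\mathcal{N}$ are pairwise disjoint and are all contained in the Euclidean ball of radius $1+\epsilon/2$ centered at the origin, which gives $|\mathcal{N}|\,(\epsilon/2)^n \le (1+\epsilon/2)^n$, i.e., $|\mathcal{N}| \le (1+2/\epsilon)^n$. Forming $\mathcal{U}$ by selecting exactly one representative from each antipodal pair in $\mathcal{N}$ yields $\mathcal{N}=\mathcal{U}\cup\mathcal{U}^-$ with $|\mathcal{U}|=|\mathcal{N}|/2$, proving the claim. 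There is no real obstacle here; the only delicate point is the antipodal symmetry needed so that the halving is exact, which is handled by the symmetric greedy construction above.
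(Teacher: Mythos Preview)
Your proof is correct, and it reaches the same bound as the paper via a slightly different (and arguably cleaner) route. The paper first builds an $\epsilon$-net $\mathcal U$ of a single \emph{hemisphere} $H^{n-1}$, invoking the volume-packing argument there to get $|\mathcal U|\le \tfrac{1}{2}(1+2/\epsilon)^n$ directly (the factor $1/2$ coming from the hemisphere), and then observes that $\mathcal U\cup\mathcal U^-$ covers the full sphere. You instead work on the full sphere from the start, but enforce antipodal symmetry in the greedy construction so that the resulting maximal $\epsilon$-separated set $\mathcal N$ satisfies $\mathcal N=-\mathcal N$; the standard packing bound then gives $|\mathcal N|\le(1+2/\epsilon)^n$, and choosing one point per antipodal pair yields the factor $1/2$. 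Both approaches exploit the same antipodal symmetry to save the factor of two; your version has the advantage of being self-contained (it uses only the usual full-sphere packing bound rather than a hemisphere variant), while the paper's version makes the geometric source of the $1/2$ more explicit. The only step worth spelling out a touch more in your write-up is that when $\bu$ is $\epsilon$-far from the current symmetric set $\mathcal N$, so is $-\bu$ (immediate since $\|-\bu-\bv\|=\|\bu-(-\bv)\|$ and $-\bv\in\mathcal N$); together with $\|\bu-(-\bu)\|=2>\sqrt{2}\ge\epsilon$ for $\delta\le\pi/2$, this confirms that adding the pair preserves $\epsilon$-separation.
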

\begin{proof}
See Section \ref{subsec:proof:lem:spherenet}.
\end{proof}

The following is a corollary of the previous lemma.
\begin{corollary}\label{corollary:gammanet}
Consider a two-layer neural network with $s$-sparse weights (i.e.,
$\mathcal W$ is the set of $s$-sparse vectors.). In this case, using lemma \ref{lemma:spherenet}, $\mathcal U$ is
an angular $\delta$-net of $\mathcal W$ with
\begin{align*}
|\mathcal U| = \frac{1}{2}{d \choose s}\left(1 + \frac{\sqrt{2}}{\sqrt{1-\cos \delta}}\right)^s.
\end{align*}
Furthermore, if we know the sparsity patterns of $k$ neurons in the network (i.e., if we know the network architecture),
$\tilde{\mathcal U}$ is an angular $\delta$-net of $\mathcal W$ with
\begin{align*}
|\tilde{\mathcal  U}| \le \frac{k}{2}\left(1 + \frac{\sqrt{2}}{\sqrt{1-\cos \delta}}\right)^s.
\end{align*}
\end{corollary}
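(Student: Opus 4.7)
The plan is to build the nets $\mathcal{U}$ and $\tilde{\mathcal{U}}$ by decomposing the $s$-sparse weight set according to the support pattern and then applying Lemma \ref{lemma:spherenet} separately to each resulting low-dimensional sphere.

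First I would fix a subset $S\subseteq\{1,\dots,d\}$ of size $s$ and consider those $s$-sparse unit vectors whose support is contained in $S$. Once the zero coordinates are discarded, these form the unit sphere $S^{s-1}$ inside the coordinate subspace $\reals^{S}\cong\reals^{s}$. Lemma \ref{lemma:spherenet} applied with ambient dimension $n=s$ produces a set $\mathcal{U}_{S}\subseteq \reals^{S}$ of size at most $\tfrac{1}{2}\bigl(1+\sqrt{2}/\sqrt{1-\cos\delta}\bigr)^{s}$ whose elements, together with their negatives, angularly $\delta$-cover this sphere. Zero-padding the coordinates outside $S$ embeds $\mathcal{U}_{S}$ into $\reals^{d}$ without changing inner products, so the angular $\delta$-net property transfers. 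Taking the union $\mathcal{U}=\bigcup_{|S|=s}\mathcal{U}_{S}$ over all $\binom{d}{s}$ size-$s$ supports gives the first bound, and every $\bw\in\mathcal{W}$ is covered because its support lies inside at least one such $S$.

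For the second claim the only change is that fixing the network architecture specifies the sparsity pattern $S_{i}$ of each of the $k$ neurons in advance, so I only need a sphere net for each of the $k$ designated coordinate subspaces $\reals^{S_{i}}$. Setting $\tilde{\mathcal{U}}=\bigcup_{i=1}^{k}\mathcal{U}_{S_{i}}$ replaces the combinatorial factor $\binom{d}{s}$ with $k$, yielding the second size bound.

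There is no genuine obstacle here: the corollary is essentially a bookkeeping consequence of Lemma \ref{lemma:spherenet} combined with a union bound. The one minor step worth writing out carefully is that the angle between two zero-padded vectors equals the angle between the corresponding originals in $\reals^{S}$, so the angular $\delta$-cover guarantee on $S^{s-1}$ transfers cleanly to $s$-sparse unit vectors in $\reals^{d}$; beyond this, the counting is immediate.
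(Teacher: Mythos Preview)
Your proposal is correct and matches the paper's approach: the paper gives no explicit proof, treating the statement as an immediate corollary of Lemma~\ref{lemma:spherenet}, and your decomposition by support pattern with a union over the $\binom{d}{s}$ (respectively $k$) coordinate subspaces is exactly the intended bookkeeping.
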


In order to have a measure of how accurately a function in $\mathcal F$ can be
approximated by a function in $\cF_{\cL}$, we have the following definition:
\begin{definition}
Define $\mathcal R\left(\mathcal F, \cF_{\cL,\cG}\right)$,
the minimax risk of approximating a function in $\mathcal F$ by
a function in $\cF_{\cL,\cG}$, as the following
\begin{align}\label{eq:Riskdef}
\mathcal R\left(\cF_{\cL,\cG}, \mathcal F\right):= \max_{f\in \mathcal F}\min_{\hat f \in \cF_{\cL,\cG}}\quad \EE\left|f(\bx) - \hat f(\bx)\right|,
\end{align}
where the expectation is over $\bx\sim\mathcal N(0,\bI)$.
\end{definition}

The following theorem bounds this
minimax risk where $\mathcal U$ is an angular $\delta$-net of $\mathcal W$.
\begin{theorem}
\label{thm:minimaxrisk}
Assume that for all $\bw \in \mathcal W$, $\|\bw\|_2 \leq M$. Let $\mathcal U$ be an angular $\delta$-net of $\mathcal W$. The minimax risk of approximating a function in $\cF$ with a function in $\cF_{\cL,\cG}$ defined in \eqref{eq:Riskdef} can be written as
\begin{align*}
\mathcal R\left(\cF_{\cL,\cG}, \mathcal F\right) \le kM\sqrt{2d(1-\cos\delta)}.
\end{align*}
\end{theorem}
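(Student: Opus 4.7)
The plan is to construct an explicit approximator $\hat f \in \cF_{\cL,\cG}$ for any given $f \in \cF$ by snapping each weight vector to its nearest line, and then to bound the error neuron-by-neuron via Lemma~\ref{lemma:relucontinuity}. Fix $f \in \cF$ with weights $\bw_1,\dots,\bw_k$. For each $i$, since $\cU$ is an angular $\delta$-net of $\cW$, there exists a unit vector $\bu \in \cU \cup \cU^-$ (i.e., lying on some line in $\cL$) with $\theta_{\bu,\bw_i}\le \delta$. Define the snapped weight
\begin{align*}
\hat\bw_i := \|\bw_i\|_2\,\bu,
\end{align*}
so that $\hat\bw_i$ lies on a line of $\cL$, $\|\hat\bw_i\|_2=\|\bw_i\|_2$, and the angle between $\bw_i$ and $\hat\bw_i$ is at most $\delta$. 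Let $\hat f(\bx):=\sum_{i=1}^k \phi(\langle \hat\bw_i,\bx\rangle)$; this is a PNN realization in $\cF_{\cL,\cG}$ (with the neuron-to-line mapping induced by the nearest-line assignment).

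Next I bound $\|\bw_i-\hat\bw_i\|_2$. By the law of cosines, using $\|\hat\bw_i\|_2=\|\bw_i\|_2\le M$ and $\theta_{\bw_i,\hat\bw_i}\le\delta$,
\begin{align*}
\|\bw_i-\hat\bw_i\|_2^2
= 2\|\bw_i\|_2^2\bigl(1-\cos\theta_{\bw_i,\hat\bw_i}\bigr)
\le 2M^2\bigl(1-\cos\delta\bigr).
\end{align*}
Applying Lemma~\ref{lemma:relucontinuity} to each neuron and then the triangle inequality yields
\begin{align*}
\bigl|f(\bx)-\hat f(\bx)\bigr|
\le \sum_{i=1}^k \bigl|\phi(\langle \bw_i,\bx\rangle)-\phi(\langle \hat\bw_i,\bx\rangle)\bigr|
\le \sum_{i=1}^k \|\bw_i-\hat\bw_i\|_2\,\|\bx\|_2
\le kM\sqrt{2(1-\cos\delta)}\,\|\bx\|_2.
\end{align*}

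Finally, I take expectation over $\bx\sim \cN(0,\bI_d)$. Since $\EE\|\bx\|_2\le \sqrt{\EE\|\bx\|_2^2}=\sqrt{d}$ by Jensen's inequality, we obtain
\begin{align*}
\min_{\tilde f\in \cF_{\cL,\cG}} \EE\bigl|f(\bx)-\tilde f(\bx)\bigr|
\le \EE\bigl|f(\bx)-\hat f(\bx)\bigr|
\le kM\sqrt{2d(1-\cos\delta)},
\end{align*}
and taking the maximum over $f\in\cF$ yields the stated bound on $\cR(\cF_{\cL,\cG},\cF)$. There is no real obstacle here; the only subtlety is the choice of sign/orientation on the nearest line, which is resolved by using $\cU\cup\cU^-$ (guaranteed by the definition of angular $\delta$-net) so that $\hat\bw_i$ can be taken with the same norm and nearly the same direction as $\bw_i$.
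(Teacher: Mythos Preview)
Your proposal is correct and follows essentially the same approach as the paper: snap each weight to the nearest line preserving the norm, bound $\|\bw_i-\hat\bw_i\|_2$ via the law of cosines, apply Lemma~\ref{lemma:relucontinuity} neuron-by-neuron, and take expectations. If anything, your use of Jensen to obtain $\EE\|\bx\|_2\le\sqrt{d}$ is slightly more careful than the paper, which writes this step as an equality.
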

\begin{proof}
See Section \ref{subsec:proof:thm:minimax}.
\end{proof}
The following is a corollary of Theorem \ref{thm:minimaxrisk} and Corollary \ref{corollary:gammanet}.
\begin{corollary}
Let $\mathcal F$ be the set of realizable functions by a two-layer neural network with $s$-sparse weights. There exists a set $\cL$ and a neuron-to-line mapping $\cG$ such that
\begin{align*}
\mathcal R\left(\cF_{\cL,\cG}, \mathcal F\right) \le \delta,
\end{align*}
and
\begin{align*}
|\mathcal L| \le \frac{1}{2}{d \choose s}\left(1+\frac{2kM\sqrt{d}}{\delta}\right)^s.
\end{align*}
Further, if we know the sparsity patterns of $k$ neurons in the network (i.e., the network architecture), then
\begin{align*}
|\mathcal L| \le \frac{k}{2}\left(1+\frac{2kM\sqrt{d}}{\delta}\right)^s.
\end{align*}
\end{corollary}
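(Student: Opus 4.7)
The plan is to obtain this corollary as a direct composition of Theorem \ref{thm:minimaxrisk} and Corollary \ref{corollary:gammanet}. The construction is explicit: for a target risk $\delta$, I would first pick an angle parameter $\alpha \in [0,\pi/2]$, invoke Corollary \ref{corollary:gammanet} to obtain an angular $\alpha$-net $\mathcal U = \{\bu_1,\dots,\bu_{|\mathcal U|}\}$ of $\mathcal W$ (the set of $s$-sparse unit-direction weights), let $\cL = \{L_1,\dots,L_{|\mathcal U|}\}$ with $L_i$ being the line through the origin spanned by $\bu_i$, and define the neuron-to-line mapping $\cG$ by sending each neuron with weight $\bw_j$ to the line in $\cL$ whose unit direction is angularly closest to $\bw_j/\|\bw_j\|_2$ (up to sign). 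By construction $|\cL| = |\mathcal U|$.

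The next step is to choose $\alpha$ so that Theorem \ref{thm:minimaxrisk} yields exactly the risk bound $\delta$. Setting
\begin{equation*}
kM\sqrt{2d(1-\cos\alpha)} \;=\; \delta
\end{equation*}
and solving gives $1-\cos\alpha = \delta^2/(2k^2M^2 d)$, which is admissible whenever $\delta \leq 2kM\sqrt{d}$ (outside this regime the claimed bound is vacuous, so there is nothing to prove). Substituting this choice into Corollary \ref{corollary:gammanet} yields
\begin{equation*}
\frac{\sqrt{2}}{\sqrt{1-\cos\alpha}} \;=\; \frac{\sqrt{2}\cdot kM\sqrt{2d}}{\delta} \;=\; \frac{2kM\sqrt{d}}{\delta},
\end{equation*}
and hence the two parts of Corollary \ref{corollary:gammanet} respectively produce the general $s$-sparse bound $|\cL| \le \tfrac{1}{2}\binom{d}{s}(1 + 2kM\sqrt{d}/\delta)^s$ and the architecture-aware refinement $|\cL| \le \tfrac{k}{2}(1 + 2kM\sqrt{d}/\delta)^s$.

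Honestly there is no real obstacle here; the argument is an algebraic composition of two prior results, and the only bookkeeping is to verify that the chosen $\alpha$ lies in $[0,\pi/2]$ (the regime in which Lemma \ref{lemma:spherenet} and hence Corollary \ref{corollary:gammanet} apply) and to note that the implicit assumption $\|\bw\|_2 \le M$ of Theorem \ref{thm:minimaxrisk} is part of the hypothesis. The only conceptual point worth stating explicitly is that $\cF_{\cL,\cG} \subseteq \cF$ is determined by the choice of $\cL$ and $\cG$, so that the existential claim ``there exists a set $\cL$ and a neuron-to-line mapping $\cG$'' is witnessed by the construction above rather than requiring any probabilistic or nonconstructive step.
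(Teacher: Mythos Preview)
Your proposal is correct and matches the paper's approach: the paper states this result simply as ``a corollary of Theorem \ref{thm:minimaxrisk} and Corollary \ref{corollary:gammanet}'' without spelling out the algebra, and what you have written is precisely the intended composition --- choose the angular parameter so that the risk bound of Theorem \ref{thm:minimaxrisk} equals $\delta$, then read off $|\cL|$ from Corollary \ref{corollary:gammanet}. One tiny bookkeeping remark: the admissibility condition for $\alpha\in[0,\pi/2]$ is $1-\cos\alpha \le 1$, i.e.\ $\delta \le kM\sqrt{2d}$, not $\delta \le 2kM\sqrt d$ as you wrote; this does not affect the argument.
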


\section{Experimental Results}\label{sec:exp}
In our first experiment, we simulate the degree-one PNNs discussed in section 5.2 \footnote{All experiments were implemented in Python 2.7 using the TensorFlow package.}. In the matched case, we are interested in how often we achieve zero loss when we learn using the same network architecture used to generate data (i.e., $\mathcal{L} = \mathcal{L}^*$, $\cG=\cG^*$). We implement networks with $d$ inputs, $k$ hidden neurons, and a single output. Each input is connected to $k/d$ neurons (we assume $k$ is divisible by $d$.). As described previously in Section \ref{sec:degree-one}, relu activation functions are used only at hidden neurons.

\begin{figure}[t]
\centering
  \includegraphics[width=0.8\linewidth]{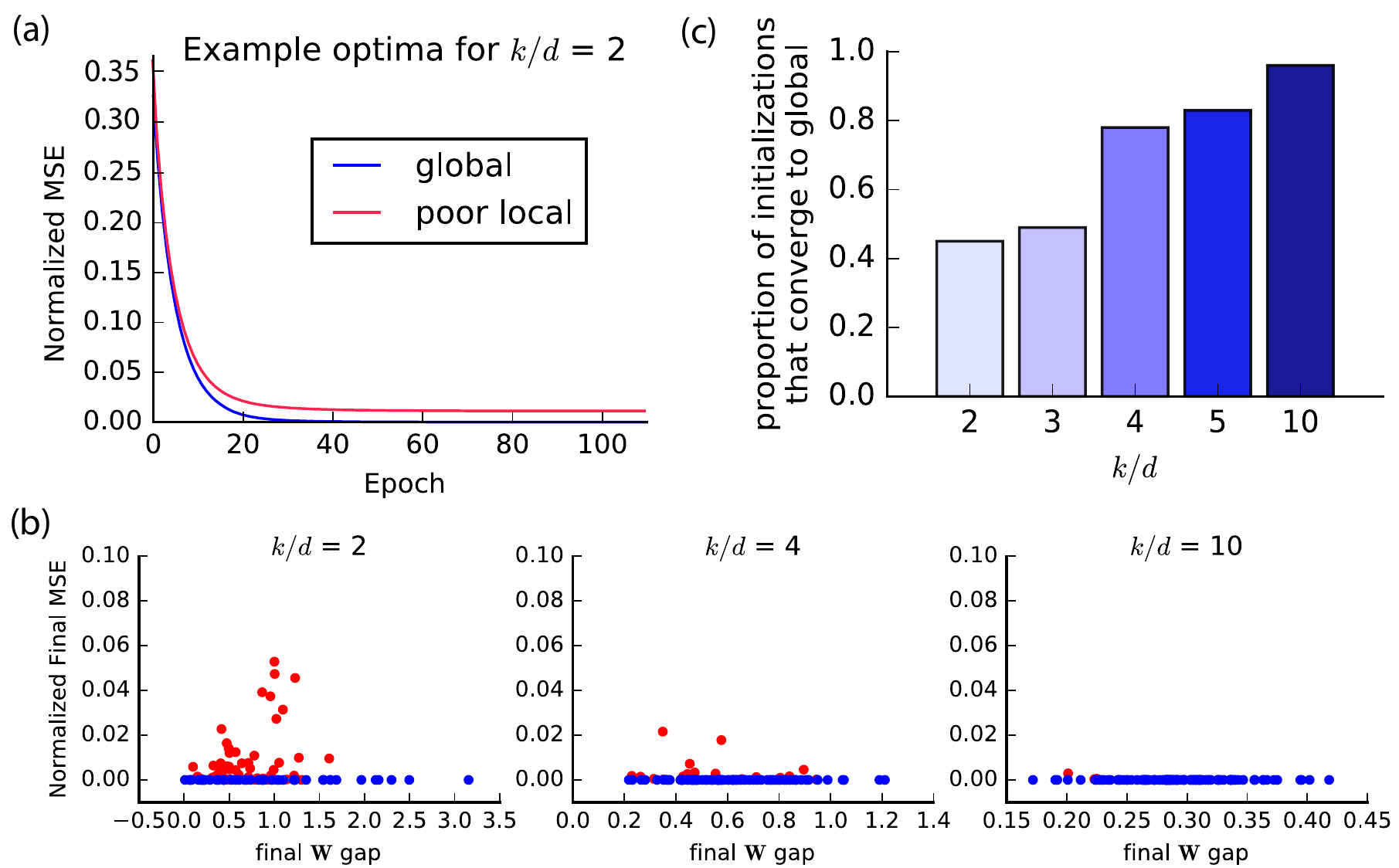}
\caption{(a) The loss during training for two initializations of a degree-one PNN with 5 inputs and 10 hidden neurons.  (b) Plots of the final loss with respect to the gap between the true and estimated weights for different values of $k$. The gap is defined as the Frobenius norm squared of the difference. 100 initializations were used for each value of $k$. (c) A bar plot showing the proportion of global optima found for different values of $k$.}
\label{fig:sim1}
\end{figure}

We use $d = 5$, and $k = 10, 15, 20, 25, 50$. For each value of $k$, we perform 100 trials of the following:
\begin{itemize}
\item [-] Randomly choose a ground truth set of weights.
\item [-] Generate 10000 input-output pairs using the ground truth set of weights.
\item [-] Randomly choose a new set of weights for initialization.
\item [-] Train the network via stochastic gradient descent using batches of size 100, 1000 training epochs, a momentum parameter of 0.9, and a learning rate of 0.01 which decays every epoch at a rate of 0.95 every 390 epochs. Stop training early if the mean loss for the most recent 10 epochs is below $10^{-5}$.
\end{itemize}
As shown in Figure \ref{fig:sim1}-a, we observe that some initializations converge to zero loss while other initializations converge to local optima. Figure \ref{fig:sim1}-b illustrates how frequently an initialized network manages to find the global optimum. We see that as $k$ increases, the probability of finding a global optimum increases. We also observe that for all local optima, $\mathbf{s}_i = \pm \mathbf{1}$ for at least one hidden neuron $i$. In other words, for at least one hidden neuron, all $d$ weights shared the same sign. This is consistent with Theorem \ref{thm:one-degree-local}. Figure \ref{fig:sim1}-c provides a summary statistics of the proportion of global optima found for different values of $k$.

\begin{figure}
\centering
  \includegraphics[height=0.55\linewidth]{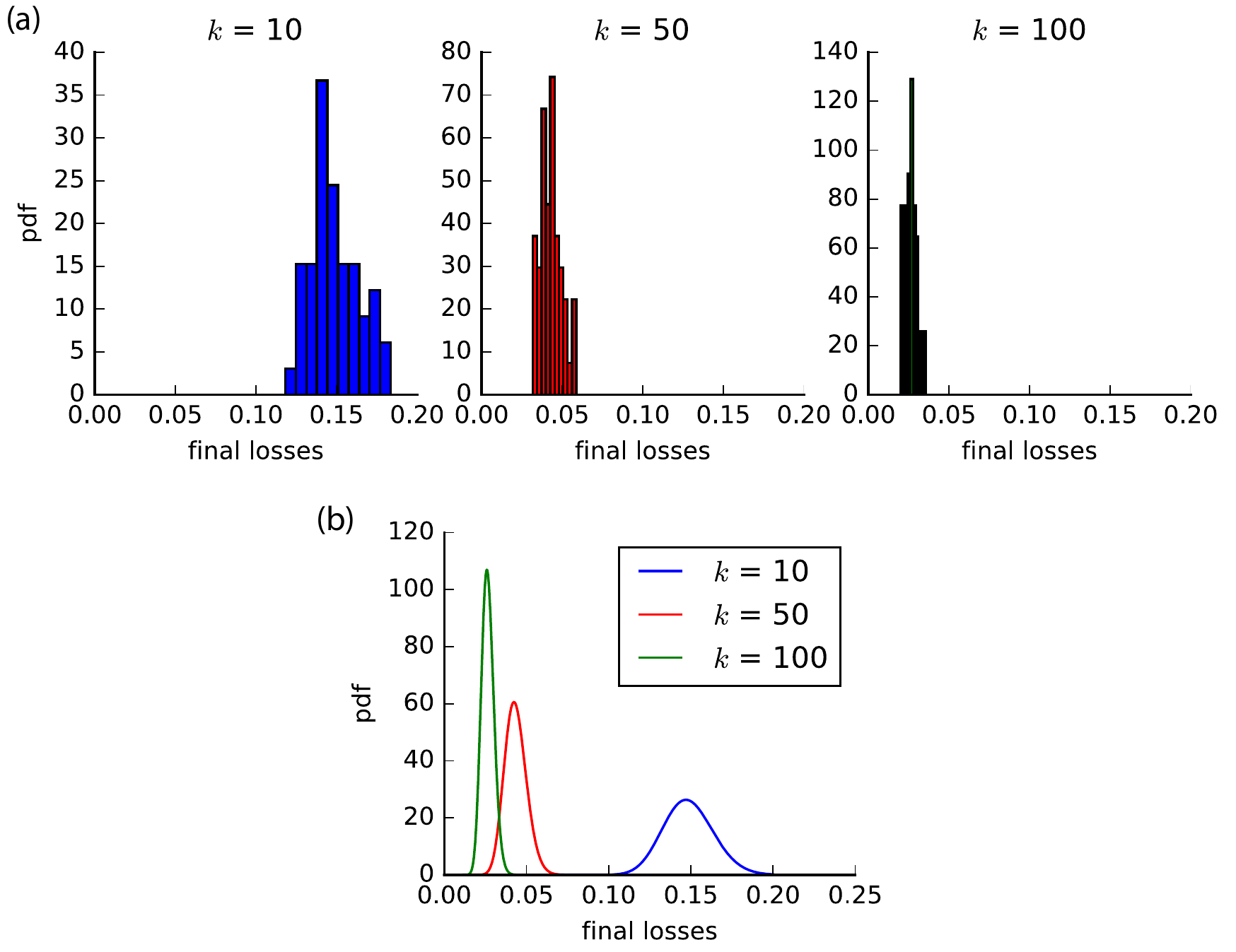}
\caption{(a) Histograms of final losses (i.e., PNN approximation errors) for different values of $k$. (b) Gamma curves fit to the histograms of panel (a).}
\label{fig:sim2}
\end{figure}

Next, we numerically simulate random PNNs in the mismatched case as described in Section \ref{sec:mismatched}. To enforce the PNN architecture, we project gradients along the directions of PNN lines before updating the weights. For example, if we consider $\bw_i^{(0)}$ as the initial set of $d$ weights connecting hidden neuron $i$ to the $d$ inputs, then the final set of weights $\bw_i^{(T)}$ need to lie on the same line as $\bw_i^{(0)}$. To guarantee this, before applying gradient updates to $\bw_i$, we first project them along $\bw_i^{(0)}$.

For PNNs, we use $10 \leq k\leq 100$ hidden neurons. For each value of $k$, we perform 25 trials of the following:
\begin{itemize}
\item [1.] Generate one set of true labels using a fully-connected two-layer network with $d = 15$ inputs and $k^* = 20$ hidden neurons. Generate 10,000 ground training samples and 10,000 test samples using a set of randomly chosen weights.
\item [2.] Initialize $k/2$ random $d$-dimensional unit-norm weight vectors.
\item [3.] Assign each weight vector to two hidden neurons. For the first neuron, scale the vector by a random number sampled uniformly between 0 and 1. For the second neuron, scale the vector by a random number sampled uniformly between -1 and 0.
\item [4.] Train the network via stochastic gradient descent using batches of size 100, 100 training epochs, no momentum, and a learning rate of $10^{-3}$ which decays every epoch at a rate of 0.95 every 390 epochs.
\item [5.] Check to make sure that final weights lie along the same lines as initial weights. Ignore results if this is not the case due to numerical errors.
\item [6.] Repeat steps 2-5 10 times. Return the normalized MSE (i.e., MSE normalized by the $L_2$ norm of $y$) in the test set over different initializations.
\end{itemize}

The results are shown in Figures \ref{fig:PNN-intro} and \ref{fig:sim2}. Figure \ref{fig:sim2}-a shows that as $k$ increases, the PNN approximation gets better in consistent to our theoretical results in Section \ref{sec:apx}. Figure \ref{fig:sim2}-b shows the result of fitting gamma curves to the histograms. We can observe that the curve being compressed towards smaller loss values as $k$ increases.

\section{Conclusion and Future Work}\label{sec:conc}
In this paper, we introduced a family of constrained neural networks, called Porcupine Neural Networks (PNNs), whose population risk landscapes have good theoretical properties, i.e., most local optima of PNN optimizations are global while we have a characterization of parameter regions where bad local optimizers may exist. We also showed that an unconstrained (fully-connected) neural network function can be approximated by a polynomially-large PNN. In particular, we provided approximation bounds at global optima and also bad local optima (under some conditions) of the PNN optimization. These results may provide a tool to explain the success of local search methods in solving the unconstrained neural network optimization because every bad local optimum of the unconstrained problem can be viewed as a local optimum (either good or bad) for a PNN constrained problem where our results provide a bound for the loss value. We leave further explorations of this idea for future work. Moreover, extensions of PNNs to network architectures with more than two layers, to networks with different activation functions, and to other neural network families such as convolutional neural networks are also among interesting directions for future work.

\section{Acknowledgments}
We would like to thank Prof. Andrea Montanari for helpful discussions regarding the PNN approximation bound.

\section{Proofs}
\subsection{Preliminary Lemmas}
\begin{lemma}\label{lem:cov-truncated}
Let $\bx\sim \cN(0,\bI)$. We have
\begin{align}
 \EE\left[\mathbf{1}\{\bw_1^t \bx>0,\bw_2^t \bx>0\}\bx \bx^t\right]=\frac{\pi-\theta_{\bw_1,\bw_2}}{2\pi}\bI+\frac{\sin\left(\theta_{\bw_1,\bw_1}\right)}{2\pi} \bM(\bw_1,\bw_2),
\end{align}
where
\begin{align}
\bM(\bw_1,\bw_2)\triangleq \frac{1}{\sin\left(\theta_{\bw_1,\bw_2}\right)^2} (\bw_1, \bw_2)  \left( {\begin{array}{cc}
   -\cos\left(\theta_{\bw_1,\bw_2}\right) & 1 \\
   1 & -\cos\left(\theta_{\bw_1,\bw_2}\right) \\
  \end{array} } \right)  (\bw_1, \bw_2)^t.
\end{align}
\end{lemma}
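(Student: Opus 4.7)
The idea is to reduce the $d$-dimensional integral to a two-dimensional one in the plane spanned by $\bw_1$ and $\bw_2$, then evaluate the planar integral in polar coordinates adapted to the wedge $\{\bw_1^t\bx>0,\bw_2^t\bx>0\}$.

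\textbf{Step 1 (Reduction to the plane).} Let $\bP$ be the orthogonal projector onto $\mathrm{span}(\bw_1,\bw_2)$ and decompose $\bx=\bx_\parallel+\bx_\perp$ with $\bx_\parallel=\bP\bx$ and $\bx_\perp=(\bI-\bP)\bx$. Since $\bx\sim\cN(0,\bI)$, the two components are independent Gaussians with covariances $\bP$ and $\bI-\bP$, respectively. The indicator depends only on $\bx_\parallel$, so the cross terms $\EE[\mathbf{1}\,\bx_\parallel\bx_\perp^t]$ and $\EE[\mathbf{1}\,\bx_\perp\bx_\parallel^t]$ vanish by independence and $\EE[\bx_\perp]=0$. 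Therefore
\begin{align*}
\EE\bigl[\mathbf{1}\,\bx\bx^t\bigr]
 \;=\; \EE\bigl[\mathbf{1}\,\bx_\parallel\bx_\parallel^t\bigr] \;+\; \PP\bigl(\bw_1^t\bx>0,\bw_2^t\bx>0\bigr)\,(\bI-\bP).
\end{align*}
The probability is the classical two-orthant Gaussian probability $(\pi-\theta_{\bw_1,\bw_2})/(2\pi)$.

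\textbf{Step 2 (Planar integral in a symmetric basis).} Choose an orthonormal basis $(\be_1,\be_2)$ of $\mathrm{span}(\bw_1,\bw_2)$ placed symmetrically with respect to the two weight vectors: with $\alpha=\theta_{\bw_1,\bw_2}/2$, take $\hat\bw_1=\cos\alpha\,\be_1-\sin\alpha\,\be_2$ and $\hat\bw_2=\cos\alpha\,\be_1+\sin\alpha\,\be_2$, where $\hat\bw_i=\bw_i/\|\bw_i\|$. Write $\bx_\parallel=r\cos\phi\,\be_1+r\sin\phi\,\be_2$. Then $\bw_i^t\bx>0$ translates to $\phi\in(-\pi/2+\alpha,\pi/2-\alpha)$, an interval symmetric about $0$ of length $\pi-\theta_{\bw_1,\bw_2}$. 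Integrating against the Gaussian density $\tfrac{1}{2\pi}e^{-r^2/2}r\,dr\,d\phi$ and using $\int_0^\infty r^3 e^{-r^2/2}dr=2$, the radial integral yields a factor $1/\pi$, and the cross term $\int\cos\phi\sin\phi\,d\phi$ vanishes by symmetry. The remaining calculation of $\int\cos^2\phi\,d\phi$ and $\int\sin^2\phi\,d\phi$ gives
\begin{align*}
\EE\bigl[\mathbf{1}\,\bx_\parallel\bx_\parallel^t\bigr]
 \;=\; \frac{\pi-\theta_{\bw_1,\bw_2}}{2\pi}\bigl(\be_1\be_1^t+\be_2\be_2^t\bigr)
 \;+\; \frac{\sin\theta_{\bw_1,\bw_2}}{2\pi}\bigl(\be_1\be_1^t-\be_2\be_2^t\bigr).
\end{align*}

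\textbf{Step 3 (Algebraic identification with $\bM$).} Note $\be_1\be_1^t+\be_2\be_2^t=\bP$, so combined with Step~1 the first term yields $\frac{\pi-\theta}{2\pi}\bI$ as desired. Inverting the relations in Step~2 gives $\be_1=(\hat\bw_1+\hat\bw_2)/(2\cos\alpha)$ and $\be_2=(\hat\bw_2-\hat\bw_1)/(2\sin\alpha)$. Expanding $\be_1\be_1^t-\be_2\be_2^t$ and using the identities $\tfrac{1}{4\cos^2\alpha}-\tfrac{1}{4\sin^2\alpha}=-\cos\theta/\sin^2\theta$ and $\tfrac{1}{4\cos^2\alpha}+\tfrac{1}{4\sin^2\alpha}=1/\sin^2\theta$ (with $\theta=2\alpha$), one obtains
\begin{align*}
\be_1\be_1^t-\be_2\be_2^t
 \;=\; \frac{1}{\sin^2\theta_{\bw_1,\bw_2}}\Bigl[\hat\bw_1\hat\bw_2^t+\hat\bw_2\hat\bw_1^t-\cos\theta_{\bw_1,\bw_2}\bigl(\hat\bw_1\hat\bw_1^t+\hat\bw_2\hat\bw_2^t\bigr)\Bigr],
\end{align*}
which is exactly $\bM(\hat\bw_1,\hat\bw_2)=\bM(\bw_1,\bw_2)$ (the LHS depends on $\bw_1,\bw_2$ only through their directions, matching the scale invariance of the original expectation). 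Adding the $(\bI-\bP)$ contribution from Step~1 completes the proof.

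\textbf{Main obstacle.} The only nontrivial part is the algebraic identification in Step~3: converting the ``symmetric-basis'' expression $\be_1\be_1^t-\be_2\be_2^t$ into the quadratic form $\bM$ of the statement. Choosing the symmetric basis (rather than one aligned with a single $\bw_i$) is what makes both the planar integral trivially diagonal and the final identification clean; aligning the basis with $\bw_1$ instead would work but would produce an asymmetric-looking intermediate expression that is harder to match to $\bM$.
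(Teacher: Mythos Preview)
Your argument is correct and self-contained. The paper does not actually prove this lemma: its entire proof reads ``See e.g., equation (21) in reference \cite{giryes2016deep}.'' Your reduction to the plane $\mathrm{span}(\bw_1,\bw_2)$ via independence of the orthogonal Gaussian component, followed by the polar-coordinate integral in the symmetric basis and the algebraic identification of $\be_1\be_1^t-\be_2\be_2^t$ with $\bM$, is exactly the standard computation behind that cited formula, and your choice of the symmetric basis does make Step~3 clean.

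One small remark on your last line: the claim $\bM(\hat\bw_1,\hat\bw_2)=\bM(\bw_1,\bw_2)$ is not literally true from the paper's definition of $\bM$, which is quadratic in $(\bw_1,\bw_2)$ and hence not scale-invariant. Since the expectation on the left of the lemma manifestly depends on $\bw_1,\bw_2$ only through their directions, the identity as written in the paper is really meant for unit vectors (and indeed the paper only ever applies the lemma together with the subsequent identity $\bM(\bw_1,\bw_2)\bw_1=\tfrac{\|\bw_1\|}{\|\bw_2\|}\bw_2$, which is consistent with the unit-norm reading). Your Step~3 establishes precisely the unit-vector statement, which is what is needed.
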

\begin{proof}
See e.g., equation (21) in reference \cite{giryes2016deep}.
\end{proof}

Note that $\bM(\bw_1,\bw_2)\bw_1=\frac{\|\bw_1\|}{\|\bw_2\|}\bw_2$, $\bM(\bw_1,\bw_2)\bw_2=\frac{\|\bw_2\|}{\|\bw_1\|}\bw_1$, and $\bM(\bw_1,\bw_2)\bv=0$ for every vector $\bv \perp \text{span}(\bw_1,\bw_2)$.

\begin{lemma}\label{lem:scalar}
Let $x\sim \cN(0,1)$. We have
\begin{align}
 \EE\left[\mathbf{1}\{w_1 x>0,w_2 x>0\}x^2  \right]=\frac{1+s(w_i)s(w_j)}{4}.
\end{align}
\end{lemma}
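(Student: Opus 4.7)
The plan is to split into cases based on the signs $s(w_1)$ and $s(w_2)$ and use symmetry of the standard Gaussian.

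First I observe that the event $\{w_i x > 0\}$ depends only on the sign of $w_i$: it equals $\{x > 0\}$ if $s(w_i) = +1$ and $\{x < 0\}$ if $s(w_i) = -1$. Therefore the joint event $\{w_1 x > 0, w_2 x > 0\}$ splits cleanly into two cases:

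\begin{itemize}
\item If $s(w_1) = s(w_2)$, the two conditions coincide, so the joint event is either $\{x>0\}$ or $\{x<0\}$. In either case, by symmetry of the $\cN(0,1)$ distribution, $\EE[\mathbf{1}\{x>0\} x^2] = \EE[\mathbf{1}\{x<0\} x^2] = \tfrac{1}{2}\EE[x^2] = \tfrac{1}{2}$.
\item If $s(w_1) \neq s(w_2)$, the two conditions are mutually exclusive (up to the measure-zero event $\{x=0\}$), so the indicator is zero almost surely and the expectation equals $0$.
\end{itemize}

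Finally, I would combine these two cases into the single closed-form expression $(1 + s(w_1)s(w_2))/4$, which equals $1/2$ when the signs agree and $0$ when they disagree, matching the two computed values. There is no real obstacle here; the argument is a one-line symmetry calculation once the sign-dependence of the indicator is observed.
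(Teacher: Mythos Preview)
Your proof is correct. The paper actually states this lemma without proof in its preliminary lemmas section, treating it as an elementary fact; your case split on the signs of $w_1$ and $w_2$ together with the symmetry of the standard Gaussian is exactly the natural verification.
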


\begin{lemma}\label{lemma:schurclosest}
Consider
\begin{align*}
\bM = \begin{bmatrix}
        \bA          & \bA + \bDelta_1 \\[0.3em]
       \bA^t + \bDelta_1^t & \bA + \bDelta_2
     \end{bmatrix}\succeq 0
\end{align*}
where $\|\bDelta_1\|_2 \leq \sigma_1$, $\|\bDelta_2\|_2 \leq \sigma_2$ and $\lambda_{\min}\left(\bA\right) \geq \delta$.
Then
\begin{align*}
\|\bM/\bA\|_2 \leq \frac{\sigma_1^2}{\delta} + 2\sigma_1 + \sigma_2.
\end{align*}
\end{lemma}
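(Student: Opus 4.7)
The plan is to compute the Schur complement $\bM/\bA$ directly and bound its spectral norm by the triangle inequality together with $\|\bA^{-1}\|_2 \le 1/\delta$.

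First I would observe that since $\bM \succeq 0$, the diagonal block $\bA$ is symmetric positive semidefinite, so $\bA^t = \bA$, and by hypothesis $\lambda_{\min}(\bA)\ge \delta>0$, so $\bA$ is invertible with $\|\bA^{-1}\|_2 \le 1/\delta$. By definition of the Schur complement,
\begin{align*}
\bM/\bA \;=\; (\bA+\bDelta_2) \;-\; (\bA+\bDelta_1)^t\, \bA^{-1}\, (\bA+\bDelta_1).
\end{align*}

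Next I would expand the product using $\bA \bA^{-1} \bA = \bA$:
\begin{align*}
(\bA+\bDelta_1)^t\, \bA^{-1}\, (\bA+\bDelta_1)
&= \bA \;+\; \bDelta_1 \;+\; \bDelta_1^t \;+\; \bDelta_1^t \bA^{-1} \bDelta_1.
\end{align*}
Substituting back, the $\bA$ terms cancel and I obtain the clean expression
\begin{align*}
\bM/\bA \;=\; \bDelta_2 \;-\; \bDelta_1 \;-\; \bDelta_1^t \;-\; \bDelta_1^t\, \bA^{-1}\, \bDelta_1.
\end{align*}

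Finally, applying the triangle inequality and the submultiplicativity of the spectral norm,
\begin{align*}
\|\bM/\bA\|_2
&\le \|\bDelta_2\|_2 + \|\bDelta_1\|_2 + \|\bDelta_1^t\|_2 + \|\bDelta_1^t\|_2\,\|\bA^{-1}\|_2\,\|\bDelta_1\|_2 \\
&\le \sigma_2 + 2\sigma_1 + \frac{\sigma_1^2}{\delta},
\end{align*}
which is the desired bound.

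There is no real obstacle here: the argument is a short algebraic manipulation, and the only ``technical'' input is the positive semidefiniteness of $\bM$ (used only to conclude $\bA=\bA^t$) together with the lower bound on $\lambda_{\min}(\bA)$ (used only to bound $\|\bA^{-1}\|_2$). The lemma is really a quantitative perturbation statement for the Schur complement when the off-diagonal and lower-right blocks are close to the upper-left block, and will be applied later with $\bA = \psi[\bK_{\cL^*}]$ to derive the continuity statement in Lemma \ref{lemma:kernelschurclosest}.
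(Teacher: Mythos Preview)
Your proof is correct and is essentially identical to the paper's: both compute the Schur complement explicitly as $\bDelta_2 - \bDelta_1 - \bDelta_1^t - \bDelta_1^t\bA^{-1}\bDelta_1$ and then apply the triangle inequality with $\|\bA^{-1}\|_2\le 1/\delta$. The only difference is that you spell out why $\bA=\bA^t$ and $\|\bA^{-1}\|_2\le 1/\delta$, which the paper leaves implicit.
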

\begin{proof}
Note that
\begin{align*}
\bM/\bA &= \bA+\bDelta_2 - \left(\bA+\bDelta_1^t\right)\bA^{-1}\left(\bA+\bDelta_1\right)\\
&= \bA + \bDelta_2 - \bA -\bDelta_1^t-\bDelta_1-\bDelta_1^t\bA^{-1}\bDelta_1\\
&= \bDelta_2 -\bDelta_1^t-\bDelta_1-\bDelta_1^t\bA^{-1}\bDelta_1.
\end{align*}
Hence,
\begin{align*}
\|\bM/\bA\|_2 &= \|\bDelta_2 -\bDelta_1^t-\bDelta_1-\bDelta_1^t\bA^{-1}\bDelta_1\|_2\\
&\leq \|\bDelta_1^t\|_2\|\bA^{-1}\|_2\|\bDelta_1\|_2 + 2\|\bDelta_1\|_2 + \|\bDelta_2\|_2\\
&\leq \frac{\sigma_1^2}{\delta} + 2\sigma_1 + \sigma_2.
\end{align*}
\end{proof}

\begin{lemma}\label{lem:inversion-perturbed}
Suppose $\lambda_{\text{min}}(\bA)\geq c>0$ for some $c$. Then, for sufficiently small $\|\bDelta\|$, we have
\begin{align}
 (\bA+\bDelta)^{-1}-\bA^{-1}=\bA^{-1}\tilde{\bDelta}\bA^{-1}
\end{align}
where $\|\tilde{\bDelta}\|\leq 2 \|\bDelta\|$.
\end{lemma}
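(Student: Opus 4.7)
My plan is to produce $\tilde\bDelta$ explicitly and then control its norm via a standard perturbation bound on $(\bA+\bDelta)^{-1}$. First I would use the elementary identity
\begin{align*}
(\bA+\bDelta)^{-1} - \bA^{-1} = (\bA+\bDelta)^{-1}\bigl[\bA - (\bA+\bDelta)\bigr]\bA^{-1} = -(\bA+\bDelta)^{-1}\,\bDelta\,\bA^{-1}.
\end{align*}
To match the right-hand side of the lemma, I then rewrite this as $\bA^{-1}\tilde\bDelta\bA^{-1}$ with the explicit choice
\begin{align*}
\tilde\bDelta \;:=\; -\,\bA\,(\bA+\bDelta)^{-1}\,\bDelta.
\end{align*}
(One can verify $\bA^{-1}\tilde\bDelta\bA^{-1} = -(\bA+\bDelta)^{-1}\bDelta\bA^{-1}$ by cancelling the leftmost $\bA^{-1}\bA$.)

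Next I would bound $\|\bA(\bA+\bDelta)^{-1}\|$. Writing $\bA(\bA+\bDelta)^{-1} = \bI - \bDelta(\bA+\bDelta)^{-1}$, I get
\begin{align*}
\bigl\|\bA(\bA+\bDelta)^{-1}\bigr\| \;\le\; 1 + \|\bDelta\|\,\bigl\|(\bA+\bDelta)^{-1}\bigr\|.
\end{align*}
Since $\lambda_{\min}(\bA)\ge c$ and $\bA$ is symmetric, Weyl's inequality gives $\lambda_{\min}(\bA+\bDelta) \ge c - \|\bDelta\|$, so $\|(\bA+\bDelta)^{-1}\| \le 1/(c - \|\bDelta\|)$, provided $\|\bDelta\| < c$. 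Hence
\begin{align*}
\bigl\|\bA(\bA+\bDelta)^{-1}\bigr\| \;\le\; 1 + \frac{\|\bDelta\|}{c - \|\bDelta\|}.
\end{align*}
Restricting to $\|\bDelta\| \le c/2$ (which is the meaning of ``sufficiently small'' here), the right-hand side is bounded by $2$, so $\|\tilde\bDelta\| \le \|\bA(\bA+\bDelta)^{-1}\|\,\|\bDelta\| \le 2\|\bDelta\|$, which is the claimed bound.

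There is no real obstacle: the entire argument is two lines of matrix algebra plus Weyl's inequality for the perturbed smallest eigenvalue. The only subtlety worth flagging is what ``sufficiently small'' means --- the argument makes explicit that $\|\bDelta\|\le c/2$ suffices, and one could alternatively derive the same conclusion from the Neumann expansion $(\bI+\bA^{-1}\bDelta)^{-1} = \sum_{k\ge 0}(-\bA^{-1}\bDelta)^k$, which converges whenever $\|\bA^{-1}\bDelta\| < 1$ and yields the same geometric-series bound $\|\tilde\bDelta\|\le \|\bDelta\|\cdot c/(c-\|\bDelta\|)$.
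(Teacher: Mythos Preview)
Your proof is correct and essentially the same as the paper's: both produce the same $\tilde\bDelta$ (the paper writes it as $-\bDelta(\bI+\bA^{-1}\bDelta)^{-1}$, which equals your $-\bA(\bA+\bDelta)^{-1}\bDelta$), and both make ``sufficiently small'' precise as $\|\bDelta\|\le c/2$ to obtain the factor~$2$. The only cosmetic difference is that the paper bounds $\|(\bI+\bA^{-1}\bDelta)^{-1}\|$ directly (effectively via the Neumann-series bound you mention at the end), whereas you bound $\|(\bA+\bDelta)^{-1}\|$ via a smallest-singular-value perturbation; note that since $\bDelta$ need not be symmetric, the cleaner justification there is $\sigma_{\min}(\bA+\bDelta)\ge \sigma_{\min}(\bA)-\|\bDelta\|\ge c-\|\bDelta\|$ rather than Weyl's eigenvalue inequality per se.
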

\begin{proof}
From \cite{chang2006inversion}, we have
\begin{align}
(\bA+\bDelta)^{-1}=\bA^{-1}-\bA^{-1}\bDelta(\bI+\bA^{-1}\bDelta)^{-1} \bA^{-1}.
\end{align}
Let
\begin{align}
\tilde{\bDelta}:=-\bDelta (\bI+\bA^{-1}\bDelta)^{-1}.
\end{align}
Thus, we have
\begin{align}\label{eq:perurb1}
\|\tilde{\bDelta}\|&\leq \|\bDelta\| \|(\bI+\bA^{-1}\bDelta)^{-1}\|\\
&=\|\bDelta\|\frac{1}{\lambda_{\text{min}}(\bI+\bA^{-1}\bDelta)}.\nonumber
\end{align}
Moreover, if $\|\bDelta\|\leq c/2$, we have
\begin{align}\label{eq:perurb2}
\lambda_{\text{min}}(\bI+\bA^{-1}\bDelta)&\geq 1-\|\bA^{-1}\bDelta\|\\
&\geq 1-\frac{\|\bDelta\|}{\lambda_{\text{min}}(\bA)}\geq \frac{1}{2}.
\end{align}
Using \eqref{eq:perurb1} and \eqref{eq:perurb2}, for $\|\bDelta\|\leq c/2$, we have $\|\tilde{\bDelta}\|\leq 2 \|\bDelta\|$. This completes the proof.
\end{proof}

\begin{lemma}\label{lem:inverse-kernel}
Let $\bA=\alpha_1 \bI_n+\beta_1 \mathbf{1}_{n}$. Then
\begin{align}
\bA^{-1}=\alpha_2 \bI_n+ \beta_2 \mathbf{1}_{n},
\end{align}
where
\begin{align}
  \alpha_2&=\frac{1}{\alpha_1} \\
  \beta_2&=-\frac{-\beta_1}{\alpha_1^2+\alpha_1\beta_1 n}.\nonumber
\end{align}
\end{lemma}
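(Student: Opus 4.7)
The plan is to prove the formula by direct verification: guess that $\bA^{-1}$ has the form $\alpha_2 \bI_n + \beta_2 \mathbf{1}_n$ (the structure of $\bA$ makes this ansatz natural, since $\bA$ commutes with any matrix in the span of $\bI_n$ and $\mathbf{1}_n$, and this span is closed under multiplication and inversion within its invertible elements), and then solve for $\alpha_2, \beta_2$ by imposing $\bA(\alpha_2 \bI_n + \beta_2 \mathbf{1}_n) = \bI_n$.

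The single nontrivial identity needed is the idempotence-up-to-scale relation $\mathbf{1}_n \mathbf{1}_n = n\, \mathbf{1}_n$, which follows immediately from $\mathbf{1}_n = \mathbf{e}\mathbf{e}^t$ with $\mathbf{e}$ the all-ones vector (so $\mathbf{e}^t \mathbf{e} = n$). Expanding the product gives
\begin{align*}
\bA(\alpha_2 \bI_n + \beta_2 \mathbf{1}_n) = \alpha_1\alpha_2 \bI_n + \bigl(\alpha_1 \beta_2 + \alpha_2 \beta_1 + n\beta_1 \beta_2\bigr)\mathbf{1}_n,
\end{align*}
so matching this to $\bI_n$ requires $\alpha_1 \alpha_2 = 1$ and $\alpha_1 \beta_2 + \alpha_2 \beta_1 + n\beta_1 \beta_2 = 0$. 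The first equation yields $\alpha_2 = 1/\alpha_1$, and substituting into the second and solving for $\beta_2$ gives $\beta_2 = -\beta_1/(\alpha_1^2 + \alpha_1 \beta_1 n)$, matching the claim.

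An equivalent route is to apply the Sherman–Morrison formula: write $\bA = \alpha_1 \bI_n + \beta_1 \mathbf{e}\mathbf{e}^t$, take the base inverse $(\alpha_1 \bI_n)^{-1} = (1/\alpha_1) \bI_n$, and plug into
\begin{align*}
(\alpha_1 \bI_n + \beta_1 \mathbf{e}\mathbf{e}^t)^{-1} = \frac{1}{\alpha_1}\bI_n - \frac{(1/\alpha_1)\beta_1 \mathbf{e}\mathbf{e}^t (1/\alpha_1)}{1 + \beta_1 \mathbf{e}^t (1/\alpha_1)\mathbf{e}},
\end{align*}
which simplifies to the stated expression after using $\mathbf{e}\mathbf{e}^t = \mathbf{1}_n$ and $\mathbf{e}^t\mathbf{e} = n$. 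Implicit in both approaches is the invertibility hypothesis $\alpha_1 \neq 0$ and $\alpha_1 + n\beta_1 \neq 0$, which I would state explicitly at the start. There is no substantive obstacle — the lemma is a routine linear-algebra identity whose only content is the correct bookkeeping of the two scalar coefficients.
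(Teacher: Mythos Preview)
Your proof is correct. The paper actually states this lemma without proof, treating it as a standard linear-algebra identity among the preliminary lemmas; your direct-verification argument (and the Sherman--Morrison alternative) supplies exactly what the paper omits, and your computed $\beta_2 = -\beta_1/(\alpha_1^2+\alpha_1\beta_1 n)$ agrees with how the formula is actually used later in the paper (e.g., in computing $\bR_{11}^{-1}$), so the double negative in the lemma statement is evidently a typo.
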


\subsection{Proof of Theorem \ref{thm:scalar-global}}\label{subsec:proof-thm-global}
In this case, we can re-write $L(\bW)$ as follows:
\begin{align}\label{eq:all-terms}
L(\bW)&=\EE\left[\left(\sum_{i=1}^{k} \mathbf{1}\{w_i x>0\} w_i x-\sum_{i=1}^{k} \mathbf{1}\{w_i^* x>0\} w_i^* x\right)^2\right]\\
&=\EE\left[\left(\sum_{i=1}^{k} \mathbf{1}\{w_i x>0\} w_i x\right)^2\right]+\EE\left[\left(\sum_{i=1}^{k} \mathbf{1}\{w_i^* x>0\} w_i^* x\right)^2\right]\nonumber\\
&-\EE\left[\sum_{i,j}\mathbf{1}\{w_i x>0, w_j^* x>0\} w_i w_j^* x^2\right].\nonumber
\end{align}
The first term of \eqref{eq:all-terms} can be simplified as follows:
\begin{align}\label{eq:term1}
\EE\left[\left(\sum_{i=1}^{k} \mathbf{1}\{w_i x>0\} w_i x\right)^2\right]&=\frac{1}{2}\sum_{i=1}^{k} w_i^2+\frac{1}{4}\sum_{i\neq j} w_i w_j \left(s(w_i)s(w_j)+1\right)\\
&=\frac{1}{4}\left(\sum_{i=1}^{k} w_i^2+\sum_{i\neq j} w_i w_j\right)+ \frac{1}{4}\left(\sum_{i=1}^{k} s(w_i)w_i^2+\sum_{i\neq j} s(w_i)s(w_j)w_iw_j\right)\nonumber\\
&=\frac{1}{4}\left(\sum_{i=1}^{k}w_i\right)^2+\frac{1}{4}\left(\sum_{i=1}^{k}s(w_i)w_i\right)^2,\nonumber
\end{align}
where the first step follows from Lemma \ref{lem:scalar}. The second term of \eqref{eq:all-terms} can be simplified similarly. The third term of \eqref{eq:all-terms} can be re-written as
\begin{align}\label{eq:third-term}
\EE\left[\sum_{i,j}\mathbf{1}\{w_i x>0, w_j^* x>0\} w_i w_j^* x^2\right]&=\frac{1}{4}\sum_{i,j} w_i w_j^* (s(w_i)s(w_j^*)+1)\\
&=\frac{1}{4} \left(\sum_{i,j} w_i w_j^* \right)+ \frac{1}{4} \left(\sum_{i,j} s(w_i) s(w_j^*) w_i w_j^* \right).
\end{align}
Substituting \eqref{eq:term1} and \eqref{eq:third-term} in \eqref{eq:all-terms}, we have
\begin{align}\label{eq:loss-simple}
 L(\bW)=&\frac{1}{4} \left( (\sum_{i=1}^{k} w_i)^2+ (\sum_{i=1}^{k} w_i^*)^2-(\sum_{i,j} w_i w_j^*)\right)^2\\
 +&\frac{1}{4} \left( (\sum_{i=1}^{k} s(w_i) w_i)^2+ (\sum_{i=1}^{k} s(w_i^*)w_i^*)^2-(\sum_{i,j} s(w_i)s(w_j^*)w_i w_j^*) \right)^2\nonumber\\
 =&\frac{1}{4}\left(\sum_{i=1}^{k} w_i-\sum_{i=1}^{k} w_i^*\right)^2+\frac{1}{4}\left(\sum_{i=1}^{k} s(w_i)w_i-\sum_{i=1}^{k} s(w_i^*)w_i^*\right)^2.
\end{align}
Therefore, $L(\bW)=0$ if and only if $\sum_{i=1}^{k} w_i= \sum_{i=1}^{k} w_i^*$ and $\sum_{i=1}^{k} s(w_i) w_i= \sum_{i=1}^{k} s(w_i^*)w_i^*$. This completes the proof.

\subsection{Proof of Theorem \ref{thm:local}}\label{subsec:proof-thm-local}
First, we characterize the gradient of the loss function with respect to $w_j$:
\begin{align}\label{eq:gradient}
\bigtriangledown_{w_j} L(\bW)&=2\EE\left[\left(\sum_{i=1}^{k} \mathbf{1}\{w_i x>0\} w_i x-\sum_{i=1}^{k} \mathbf{1}\{w_i^* x>0\} w_i^* x\right)\left(\mathbf{1}\{w_j x>0\} x\right)\right]\\
&=\frac{1}{2}\sum_{i=1}^{k} w_i w_j \left(1+s(w_i)s(w_j)\right)-\frac{1}{2}\sum_{i=1}^{k} w_i^* w_j \left(1+s(w_i^*)s(w_j)\right)\nonumber\\
&=\frac{1}{2} \left(\sum_{i=1}^{k}w_i-\sum_{i=1}^{k}w_i^*\right)+\frac{s(w_j)}{2} \left(\sum_{i=1}^{k}s(w_i)w_i-\sum_{i=1}^{k}s(w_i^*)w_i^*\right),
\end{align}
where the first step follows from Lemma \ref{lem:scalar}. A necessary condition to have $\bW$ as a local optimizer is $\bigtriangledown_{w_j} L(\bw)=0$ for every $j$.

Consider a region $R(\bs)$ where $\bs\neq \pm \mathbf{1}$. Thus, there are two indices $j_1$ and $j_2$ such that $s(w_{j_1})>0$ and $s(w_{j_2})<0$. To have a local optimizer in this region, we need to have

\begin{align}
&\left(\sum_{i=1}^{k}w_i-\sum_{i=1}^{k}w_i^*\right)+s(w_{j_1}) \left(\sum_{i=1}^{k}s(w_i)w_i-\sum_{i=1}^{k}s(w_i^*)w_i^*\right)=0,\\
&\left(\sum_{i=1}^{k}w_i-\sum_{i=1}^{k}w_i^*\right)+s(w_{j_2}) \left(\sum_{i=1}^{k}s(w_i)w_i-\sum_{i=1}^{k}s(w_i^*)w_i^*\right)=0.\nonumber
\end{align}
Summing these two equations leads to the following conditions:
\begin{align}
&\sum_{i=1}^{k}w_i-\sum_{i=1}^{k}w_i^*=0,\\
& \sum_{i=1}^{k}s(w_i)w_i-\sum_{i=1}^{k}s(w_i^*)w_i^*=0.\nonumber
\end{align}
On the other hand, Theorem \ref{thm:scalar-global} indicates that if $\bW$ satisfies these conditions, its loss value is equal to zero. Thus, such local optimizers are global optimizers. In regions $R(\pm\mathbf{1})$, to have $\bigtriangledown_{w_j} L(\bW)=0$ for every $j$, we only need to have the condition $\sum_{i=1}^{k}w_i-\sum_{i=1}^{k}w_i^*=0$. In this case, if $s(\bW^*)\neq \pm \mathbf{1}$, we will have bad local optimizers. This completes the proof.

\subsection{Proof of Theorem \ref{thm:hessian-scalar}}\label{subsection:proof-hessian-scalar}
For every $1\leq i,j\leq k$, we have
\begin{align}\label{eq:hessian1}
\bigtriangledown_{w_i,w_j}^2 L(\bW)=2\EE[\mathbf{1}\{w_i x>0,w_j x>0\} x^2]=\frac{s(w_i)s(w_j)}{2}
\end{align}
Let $\bH$ be the Hessian matrix where $\bH(i,j)=\bigtriangledown_{w_i,w_j}^2 L(\bW)$. Thus, in the region $R(\bs)$, we have
\begin{align}\label{eq:hessian2}
\bH=\frac{1}{2}\mathbf{1}+\frac{1}{2}\bs \bs^t.
\end{align}
Note that $\bH$ is positive semidefinite and its rank is equal to two except when $\bs=\pm \mathbf{1}$ in which case its rank is equal to one.

\subsection{Proof of Theorem \ref{thm:degree-one-global}}\label{subsec:proof-degree-one-global}
We can re-write $L(\bW)$ as follows:
\begin{align}\label{eq:all-terms2}
L(\bW)&=\EE\left[\left(\sum_{i=1}^{k} \mathbf{1}\{\bw_i^t \bx>0\} \bw_i^t \bx-\sum_{i=1}^{k} \mathbf{1}\{(\bw_i^*)^t \bx>0\} (\bw_i^*)^t \bx\right)^2\right]\\
&=\EE\left[\left(\sum_{i=1}^{k} \mathbf{1}\{\bw_i^t \bx>0\} \bw_i^t \bx\right)^2\right]+\EE\left[\left(\sum_{i=1}^{k} \mathbf{1}\{(\bw_i^*)^t \bx>0\} (\bw_i^*)^t \bx\right)^2\right]\nonumber\\
&-2\EE\left[\sum_{i,j}\mathbf{1}\{\bw_i^t \bx>0, (\bw_j^*)^t \bx>0\} (\bw_i^t \bx)((\bw_j^*)^t\bx)  \right].\nonumber
\end{align}
The first term can be re-written as
\begin{align}\label{eq:one-deg-1}
\EE\left[\left(\sum_{i=1}^{k} \mathbf{1}\{\bw_i^t \bx>0\} \bw_i^t \bx\right)^2\right]=& \frac{1}{2} \sum_{i=1}^{k} w_i^2 +\frac{1}{4}\sum_{\substack{i\neq j\\ g(i)=g(j)}} \left(w_i w_j+|w_i||w_j|\right)\\
&+\frac{1}{2\pi}\sum_{\substack{i, j\\ g(i)\neq g(j)}} |w_i| |w_j|\nonumber
\end{align}
where the first step follows from Lemma \ref{lem:cov-truncated}. A similar equation can be written for the second term of \eqref{eq:all-terms2}. The third term of \eqref{eq:all-terms2} can be re-written as
\begin{align}\label{eq:one-deg2}
-2\EE\left[\sum_{i,j}\mathbf{1}\{\bw_i^t \bx>0, (\bw_j^*)^t \bx>0\} (\bw_i^t \bx)\left((\bw_j^*)^t\bx\right)  \right]=&-\frac{1}{2} \sum_{\substack{i,j\\ g(i)=g(j)}} \left(w_i w_j^*+|w_i| |w_j^*|\right)\\
&-\frac{1}{\pi}\sum_{\substack{i,j\\ g(i)\neq g(j)}} |w_i| |w_j^*|\nonumber
\end{align}
Substituting \eqref{eq:one-deg-1} and \eqref{eq:one-deg2} in \eqref{eq:all-terms2} we have
\begin{align}
4L(\bW)=\sum_{r=1}^{d}\left(\sum_{i\in \cG_r} w_i-w_i^* \right)^2+ \sum_{r=1}^{d}(q_r-q_r^*)^2+\frac{2}{\pi}\sum_{r\neq t} (q_r-q_r^*)(q_t-q_t^*).
\end{align}
This completes the proof.
\subsection{Proof of Theorem \ref{thm:one-degree-local}}\label{subsec:proof-degree-one-local}
First, we characterize the gradient of the loss function with respect to $\bw_j$:
\begin{align}\label{eq:gradient-one-degree}
\bigtriangledown_{\bw_j} L(\bW)&=2\EE\left[\left(\mathbf{1}\{\bw_j^t \bx>0\} \bx\right) \left(\sum_{i=1}^{k} \mathbf{1}\{\bw_i^t \bx>0\} \bw_i^t \bx-\sum_{i=1}^{k} \mathbf{1}\{(\bw_i^*)^t \bx>0\} (\bw_i^*)^t \bx\right)\right]\\
&=\frac{1}{2}\sum_{\substack{i\\ g(i)=g(j)}} \left(1+s(w_i)s(w_j)\right)\bw_i + \frac{1}{2}\sum_{\substack{i\\ g(i)\neq g(j)}} \left(\bw_i+\frac{2\|\bw_i\|}{\pi\|\bw_j\|}\bw_j\right)\nonumber\\
&-\frac{1}{2}\sum_{\substack{i\\ g(i)=g(j)}} \left(1+s(w_i^*)s(w_j)\right)\bw_i^* - \frac{1}{2}\sum_{\substack{i\\ g(i)\neq g(j)}} \left(\bw_i^*+\frac{2\|\bw_i^*\|}{\pi\|\bw_j\|}\bw_j\right)\nonumber\\
&=\frac{1}{2}\left(\sum_{i=1}^{k}\bw_i-\bw_i^*\right)+\frac{s(w_j)}{2}\left((q_{g(j)}-q_{g(j)}^*)+\frac{2}{\pi}\sum_{r\neq g(j)}(q_r-q_r^*) \right)\be_{g(j)}\nonumber
\end{align}
where the first step follows from Lemma \ref{lem:cov-truncated}. A necessary condition to have $\bW$ as a local optimizer of optimization \eqref{opt:neural-network-PNN} is that the projection gradient is zero for every $j$, i.e., $<\bigtriangledown_{\bw_j} L(\bW),\be_{g(j)}>=0$ for every $j$.

Under the condition of Theorem \ref{thm:one-degree-local}, for every $1\leq r\leq d$, there exists $j_1\neq j_2\in \cG_r$ such that $s(\bw_{j_1})s(\bw_{j_2})=-1$. Thus, summing up \eqref{eq:gradient-one-degree} for $j_1$ and $j_2$, we have
\begin{align}\label{eq:proj-first-term-deg-one}
\be_{r}^t \left(\sum_{i=1}^{k}\bw_i-\bw_i^*\right)=0.
\end{align}
Since this is true for every $1\leq r\leq d$, we have $\sum_{i=1}^{k}\bw_i-\bw_i^*=0$. The second term of \eqref{eq:gradient-one-degree} is a vector with a non-zero element at its $g(j)$ component. Having the first term of \eqref{eq:gradient-one-degree} equal to zero, the second term should be zero in local optimizers. This leads to the set of equations
\begin{align}
\bC (\bq-\bq^*)=0
\end{align}
where $\bC$ is defined in \eqref{eq:C1}. On the other hand, using Theorem \ref{thm:degree-one-global}, having these conditions lead to $L(\bW)=0$. In other words, under the conditions of Theorem \ref{thm:one-degree-local}, every local optimizer is a global optimizer for a one-degree PNN. This completes the proof.

\subsection{Proof of Theorem \ref{thm:global-quan}}\label{subsec:proof:qlobal-quan}
First, we decompose $L(\bW)$ to three terms similar to \eqref{eq:all-terms2}. Then the first term can be re-written as follows:

\begin{align}\label{eq:quan-1}
&\EE\left[\left(\sum_{i=1}^{k} \mathbf{1}\{\bw_i^t \bx>0\} \bw_i^t \bx\right)^2\right]\\
&=\frac{1}{2} \sum_{i=1}^{k} \|\bw_i\|^2 +\sum_{l=1}^{r}\sum_{\substack{i\neq j\\ i,j\in \cG_l}} \frac{1+s(\bw_i)s(\bw_j)}{4}\|\bw_i\|\|\bw_j\|\nonumber\\
&+\sum_{l\neq l'}\sum_{\substack{i\in \cG_l\\ j\in \cG_{l'}}} \left(\frac{(\pi-\theta_{\bw_i,\bw_j}\cos(\theta_{\bw_i,\bw_j}))+\sin(\theta_{\bw_i,\bw_j})}{2\pi} \right)\|\bw_i\|\|\bw_j\|\nonumber\\
&=\frac{1}{2} \sum_{i=1}^{k} \|\bw_i\|^2 +\sum_{l=1}^{r}\sum_{\substack{i\neq j\\ i,j\in \cG_l}} \frac{1+s(\bw_i)s(\bw_j)}{4}\|\bw_i\|\|\bw_j\|\nonumber\\
&+\frac{1}{2\pi}\sum_{l\neq l'}\sum_{\substack{i\in \cG_l\\ j\in \cG_{l'}}} \left(s(\bw_i)s(\bw_j)\cos(\bA_{\cL}(l,l'))\left(\frac{\pi}{2}-\left(\bA_{\cL}(l,l')-\frac{\pi}{2}\right)s(\bw_i)s(\bw_j)\right)+\sin(\bA_{\cL}(l,l'))\right)\|\bw_i\|\|\bw_j\|\nonumber\\
&=\frac{1}{4}\left(\sum_{i=1}^{k}\|\bw_i\|^2+\sum_{i\neq j} <\bw_i,\bw_j>  \right)+\frac{1}{4} \left(\sum_{i=1}^{k}\|\bw_i\|^2+\sum_{i\neq j} \bA_{\cL}(g(i),g(j)) \|\bw_i\|\|\bw_j\| \right)\nonumber
\end{align}
where the first step follows from Lemma \ref{lem:cov-truncated}, and in the second step, we use \eqref{eq:quan-angle-wi-wj}. A similar argument can be mentioned for the second term of \eqref{eq:all-terms2}. The third term of \eqref{eq:all-terms2} can be re-written as
\begin{align}\label{eq:quan-3}
-2\EE\left[\sum_{i,j}\mathbf{1}\{\bw_i^t \bx>0, (\bw_j^*)^t \bx>0\} (\bw_i^t \bx)\left((\bw_j^*)^t\bx\right)  \right]=&-\frac{1}{2}\sum_{l=1}^{r}\sum_{\substack{i, j\\ i,j\in \cG_l}} (1+s(\bw_i)s(\bw_j^*))\|\bw_i\|\|\bw_j\|\\
&+\sum_{l\neq l'}\sum_{\substack{i\in \cG_l\\ j\in \cG_{l'}}} <\bw_i,\bw_j^*>+\bA_{\cL}(l,l') \|\bw_i\|\|\bw_j^*\|\nonumber\\
&=-\frac{1}{2}\sum_{i,j} <\bw_i,\bw_j^*>+\bA_{\cL}(g(i),g(j))\|\bw_i\|\|\bw_j^*\|\nonumber
\end{align}
where we use Lemma \ref{lem:cov-truncated} and equation \eqref{eq:all-terms2}. Substituting \eqref{eq:quan-1} and \eqref{eq:quan-3} in \eqref{eq:all-terms2} completes the proof.

\subsection{Proof of Lemma \ref{lem:C-psd}}\label{subsec:proof-lem-C-PSD}
Note that the matrix $\bK=\cos[\bA_{\cL}]$ is a covariance matrix and thus is positive semidefinite. For the function $\psi(.)$ defined as in \eqref{eq:psi-def}, we have
  \begin{align}
    \frac{\partial^j \psi}{\partial x^j}=
    \begin{cases}
      0, & \text{if}\ $j$ \ \text{is odd} \\
      \frac{2/\pi\prod_{i=1}^{j-2}(2i-1)}{2^{j-2}}, & \text{if}\ $j$ \ \text{is even}
    \end{cases}
  \end{align}
Thus, for every $j\geq 1$, we have $\frac{\partial^j \psi}{\partial x^j}\geq 0$. Using Theorem 4.1 (i) of reference \cite{hiai2009monotonicity} completes the proof.

\subsection{Proof of Theorem \ref{thm:quan-local}}\label{subsec:proof:quan-local}
We characterize the gradient of the loss function with respect to $\bw_j$:
\begin{align}\label{eq:gradient-quan}
\bigtriangledown_{\bw_j} L(\bw)&=2\EE\left[\left(\mathbf{1}\{\bw_j^t \bx>0\} \bx\right) \left(\sum_{i=1}^{k} \mathbf{1}\{\bw_i^t \bx>0\} \bw_i^t \bx-\sum_{i=1}^{k} \mathbf{1}\{(\bw_i^*)^t \bx>0\} (\bw_i^*)^t \bx\right)\right]\\
&=\sum_{l=1}^{r}\Bigg(\sum_{i\in \cG_l}\left( \frac{\pi-\theta_{\bw_i,\bw_j}}{2\pi}\bI+\frac{\sin(\theta_{\bw_i,\bw_j})}{2\pi}\bM(\bw_i,\bw_j) \right)\bw_i \nonumber\\
&-\left( \frac{\pi-\theta_{\bw_i^*,\bw_j}}{2\pi}\bI+\frac{\sin(\theta_{\bw_i^*,\bw_j})}{2\pi}\bM(\bw_i^*,\bw_j) \right)\bw_i^*\Bigg)\nonumber\\
&=\frac{1}{4}\sum_{i=1}^{k} (\bw_i-\bw_i^*)+s(\bw_j) \Bigg(\sum_{l=1}^{r}\sum_{i\in \cG_l} \frac{(\pi/2-\bA_{\cL}(l,g(j)))(\|\bw_i\|-\|\bw_i^*\|)}{2\pi}\bu_l\nonumber\\
&+\frac{\sin(\bA_{\cL}(l,g(j)))(\|\bw_i\|-\|\bw_i^*\|)}{2\pi}\bu_{g(i)}\Bigg)\nonumber
\end{align}
where the first step follows from Lemma \ref{lem:cov-truncated}, and in the second step, we use \eqref{eq:quan-angle-wi-wj}.

A necessary condition to have $\bW$ as a local optimizer is that the projected gradient is zero for every $j$, i.e., $\bu_{g(j)}^t \bigtriangledown_{\bw_j} L(\bW)=0$ for every $j$. Under the conditions of Theorem \ref{thm:quan-local}, over $d$ distinct lines, there exists $j_1\neq j_2\in \cG_r$ such that $s(\bw_{j_1})s(\bw_{j_2})=-1$. Thus, summing up \eqref{eq:gradient-quan} for $j_1$ and $j_2$, we have
\begin{align}
\bu_{r}^t \left(\sum_{i=1}^{k}\bw_i-\bw_i^*\right)=0.
\end{align}
Since this is true for $d$ distinct and thus linearly independent lines, we have $\sum_{i}\bw_i-\bw_i^*=0$. Therefore, the inner product of the second term of \eqref{eq:gradient-quan} with $\bu_{g(j)}$ should be zero in local optimizers. This leads to the following equation:
\begin{align}\label{eq:quan-grad-inner-prod}
\sum_{k=1}^{r}\sum_{i\in\cG_l} \psi[\bK_{\cL}]\left(l,g(j)\right) \left(\|\bw_i\|-\|\bw_i^*\|\right)=\sum_{r=1}^{l}\psi[\bK_{\cL}]\left(l,g(j)\right) \left(q_l-q_l^*\right)=0.
\end{align}
Since this should hold for every $j$, a necessary condition for $\bW$ to be a local optimizer is $\psi[\bK_{\cL}](\bq-\bq^*)=0$. On the other hand, using Theorem \ref{thm:global-quan}, such conditions lead to having $L(\bW)=0$. Therefore, such local optimizers are global optimizers. This completes the proof.
\subsection{Proof of Theorem \ref{thm:global-mismatch}}\label{subsec:proof:mismatch-global}
The proof is similar to the one of Theorem \ref{thm:global-quan}.

\subsection{Proof of Theorem \ref{thm:mismatched-local}}\label{subsec:proof:mismatch-local}
A necessary condition for a point to be a local optimizer is that $\bu_{g(j)}^t \bigtriangledown_{\bw_j} L(\bW)=0$ for every $j$. Similarly to the proof of Theorem \ref{thm:quan-local}, under the condition of Theorem \ref{thm:mismatched-local}, we have $\sum_{i=1}^{k}\bw_i-\sum_{i=1}^{k^*}\bw_{i}^*=0$. This leads to the following equation in local optimizers:
\begin{align}
\psi[\bK_{\cL}]\bq=\psi[\bK_{\cL,\cL^*}]\bq^*.
\end{align}
Replacing this equation in the loss function completes the proof.
\subsection{Proof of Lemma \ref{lemma:kernelschurclosest}}\label{subsec:proof:perturb}
To simplify notations, define
\begin{align*}
\bD= \psi[\bK]= \begin{bmatrix}
        \bD_{11}& \bD_{12} \\[0.3em]
       \bD_{12}^t & \bD_{22}
     \end{bmatrix}\succeq 0
\end{align*}
Note that since $\psi(.)$ has Lipschitz constant $L \leq 1$, we have
\begin{align*}
 \left|\left(\bD_{22} - \bD_{11}\right)_{ij}\right| &\leq  \left|\left(\left(\bU^*\right)^t\bU^* - \bU^t\bU\right)_{ij}\right|\\
&=  \left|\left((\bU+\bZ)^t(\bU+\bZ) - \bU^t\bU\right)_{ij}\right| = \left|\left(\bU^t\bZ + \bZ^t\bU +\bZ^t\bZ\right)_{ij}\right|\\
&\leq \left\| \bU_{.,i}\right\|_2\left\| \bZ_{.,j}\right\|_2 + \left\| \bU_{.,j}\right\|_2\left\| \bZ_{.,i}\right\|_2 + \left\| \bZ_{.,i}\right\|_2\left\| \bZ_{.,j}\right\|_2\\
&\leq \left\| \bZ_{.,j}\right\|_2 + \left\| \bZ_{.,i}\right\|_2 + \left\| \bZ_{.,i}\right\|_2\left\| \bZ_{.,j}\right\|_2,
\end{align*}
where the last step follows from the fact that $\|\bU_{.,i}\|=1$. Hence,
\begin{align}
\label{eq:boundDelta2}
\left\|\bD_{22} - \bD_{11}\right\|_2\leq \left\|\bD_{22} - \bD_{11}\right\|_F \leq 2\sqrt{r}\|\bZ\|_F + \|\bZ\|_F^2.
\end{align}
Similarly,
\begin{align*}
 \left|\left(\bD_{12} - \bD_{11}\right)_{ij}\right| &\leq  \left|\left(\bU^t\bU^* - \bU^t\bU\right)_{ij}\right|\\
&=  \left|\left(\bU^t(\bU+\bZ) - \bU^t\bU\right)_{ij}\right| = \left|\left(\bU^t\bZ\right)_{ij}\right|\\
&\leq \left\| \bU_{.,i}\right\|_2\left\| \bZ_{.,j}\right\|_2 \leq \left\| \bZ_{.,j}\right\|_2.
\end{align*}
Thus,
\begin{align}
\label{eq:boundDelta1}
\left\|\bD_{12} - \bD_{11}\right\|_2\leq \left\|\bD_{12} - \bD_{11}\right\|_F \leq \sqrt{r}\|\bZ\|_F.
\end{align}
Further, note that using \eqref{eq:boundDelta2},
\begin{align}
\label{eq:boundlambdamin}
\lambda_{\min}(\bD_{11}) \geq \lambda_{\min}(\bD_{22}) - \left\|\bD_{22} - \bD_{11}\right\|_2 \geq \delta - 2\sqrt{r}\|\bZ\|_F - \|\bZ\|_F^2 \geq \frac{\delta}{2},
\end{align}
under the assumptions of the Lemma. Hence, combining \eqref{eq:boundDelta2}, \eqref{eq:boundDelta1}, \eqref{eq:boundlambdamin}, using Lemma \ref{lemma:schurclosest}, we have
\begin{align*}
\|\bD/\bD_{11}\|_2 &\leq \frac{2\left\|\bD_{12} - \bD_{11}\right\|_2^2}{\delta} + 2\left\|\bD_{12} - \bD_{11}\right\|_2 + \left\|\bD_{22} - \bD_{11}\right\|_2\\
&\leq \left(1+\frac{2r}{\delta}\right)\|\bZ\|_F^2 + 4\sqrt{r}\|\bZ\|_F.
\end{align*}

\subsection{Proof of Theorem \ref{thm:schur-increase-one-line}}\label{subsec:proof:add:one:line}
To simplify notations, we define
\begin{align*}
\bD=\psi[\bK_{\text{new}}]=\psi\left[\left( {\begin{array}{ccc}
   1 & \bz_1 & \bz_2 \\
  \bz_1^t& \bK_{\cL} & \bK_{\cL,\cL^*} \\
  \bz_2^t& \bK_{\cL,\cL^*}^t & \bK_{\cL^*}
  \end{array} } \right)\right]= \left( {\begin{array}{ccc}
   1 & \bzeta_1^t & \bzeta_2^t \\
  \bzeta_1& \bD_{11} & \bD_{12} \\
  \bzeta_2& \bD_{12}^t & \bD_{22}
  \end{array} } \right)
\end{align*}
and
\begin{align*}
&\bR_1 = \bD_{22} - \bD_{12}^t\bD_{11}^{-1}\bD_{12}\,\,,\\
& \bR_2 = \bD \Bigg/ \begin{bmatrix}
        1          & \bzeta_1^t \\[0.3em]
        \bzeta_1 & \bD_{11}
     \end{bmatrix}.
\end{align*}
Note that since $\bD$ is positive semidefinite (Lemma \ref{lem:C-psd}), we have
\begin{align*}
\begin{bmatrix}
1 & \bzeta_1^t \\[0.3em]
\bzeta_1 & \bD_{11}
\end{bmatrix} \succeq 0.
\end{align*}
Hence
\begin{align*}
\begin{bmatrix}
1 & \bzeta_1^t \\[0.3em]
\bzeta_1 & \bD_{11}
\end{bmatrix}\Big / \bD_{11} = 1- \left\langle \bzeta_1, \bD_{11}^{-1}\bzeta_1\right\rangle \geq 0.
\end{align*}
We have
\begin{align*}
\bR_2 &= \bD_{22} - \left[ \bzeta_2  \;\;\;\;\; \bD_{12}^t \right]\,
\begin{bmatrix}
1 & \bzeta_1^t \\[0.3em]
\bzeta_1 & \bD_{11}
\end{bmatrix}^{-1}\,
\begin{bmatrix}
\bzeta_2^t\\[0.3em]
\bD_{12}
\end{bmatrix}\\
&= \bD_{11} - \left[ \bzeta_2  \;\;\;\;\; \bD_{12}^t \right]\,
\begin{bmatrix}
\left(1- \left\langle \bzeta_1, \bD_{11}^{-1}\bzeta_1\right\rangle\right)^{-1} & -\bzeta_1^t\bD_{11}^{-1}\left(1- \left\langle \bzeta_1, \bD_{11}^{-1}\bzeta_1\right\rangle\right)^{-1} \\[0.3em]
-\bD_{11}^{-1}\bzeta_1\left(1- \left\langle \bzeta_1, \bD_{11}^{-1}\bzeta_1\right\rangle\right)^{-1} & \left(\bD_{11}-\bzeta_1\bzeta_1^t\right)^{-1}
\end{bmatrix}\,
\begin{bmatrix}
\bzeta_2^t\\[0.3em]
\bD_{12}
\end{bmatrix}\\
&= \bD_{22} -  \left[ \bzeta_2  \;\;\;\;\; \bD_{12}^t \right]\,
\begin{bmatrix}
\left(1-\left\langle \bzeta_1, \bD_{11}^{-1}\bzeta_1\right\rangle\right)^{-1}\left(\bzeta_1^t - \bzeta_1^t\bD_{11}^{-1}\bD_{12}\right)\\[0.3em]
-\left(1-\left\langle \bzeta_1, \bD_{11}^{-1}\bzeta_1\right\rangle\right)^{-1}\bD_{11}^{-1}\bzeta_1\bzeta_2^t + \left(\bD_{11}-\bzeta_1\bzeta_1^t\right)^{-1}\bD_{12}
\end{bmatrix}\\
&= \bD_{22} + \left(1-\left\langle \bzeta_1, \bD_{11}^{-1}\bzeta_1\right\rangle\right)^{-1}\left[-\bzeta_2\bzeta_2^t + \bzeta_2\bzeta_1^t\bD_{11}^{-1}\bD_{12} + \bD_{12}^t\bD_{11}^{-1}\bzeta_1\bzeta_2^t\right] - \bD_{12}^t\left(\bD_{11}-\bzeta_1\bzeta_1^t\right)^{-1}\bD_{12}.
\end{align*}
Using the Sherman-Morisson formula, we have
\begin{align*}
\left(\bD_{11} - \bzeta_1\bzeta_1^t\right)^{-1} = \bD_{11}^{-1} + \left(1-\left\langle \bzeta_1, \bD_{11}^{-1}\bzeta_1\right\rangle\right)^{-1}\bD_{11}^{-1}\bzeta_1\bzeta_1^t\bD_{11}^{-1}.
\end{align*}
Hence,
\begin{align*}
\bR_2 &= \bD_{22} - \bD_{12}^t\bD_{11}^{-1}\bD_{12} - \left(1-\left\langle \bzeta_1, \bD_{11}^{-1}\bzeta_1\right\rangle\right)^{-1}\left[\bzeta_2\bzeta_2^t - \bzeta_2\bzeta_1^t\bD_{11}^{-1}\bD_{12} - \bD_{12}^t\bD_{11}^{-1}\bzeta_1\bzeta_2^t + \bD_{12}^t\bD_{11}^{-1}\bzeta_1\bzeta_1^t\bD_{11}^{-1}\bD_{12}\right]\\
&= \bR_1 - \left(1-\left\langle \bzeta_1, \bD_{11}^{-1}\bzeta_1\right\rangle\right)^{-1}\left[\left(\bzeta_2 - \bD_{12}^t\bD_{11}^{-1}\bzeta_1\right)\left(\bzeta_2 - \bD_{12}^t\bD_{11}^{-1}\bzeta_1\right)^t\right]\\
&= \bR_1 - \alpha \bv\bv^t
\end{align*}
where $\alpha \geq 0$, $\bv$ are defined in the theorem. Hence, $\bR_1 \succeq \bR_2$ and $\|\bR_1\|_2 \geq \|\bR_2\|_2$. This completes the proof.

\subsection{Proof of Theorem \ref{thm:asym}}\label{subsec:proof:thm:asym}
To simplify notations, define
\begin{align*}
\bD= \psi[\bK]= \begin{bmatrix}
        \bD_{11}& \bD_{12} \\[0.3em]
       \bD_{12}^t & \bD_{22}
     \end{bmatrix}\succeq 0.
\end{align*}
Moreover, let
\begin{align*}
\bR= \begin{bmatrix}
        \bR_{11}& \bR_{12} \\[0.3em]
       \bR_{12}^t & \bR_{22}
     \end{bmatrix},
\end{align*}
where
\begin{align}\label{eq:asym-R}
\bR_{11}&=\alpha\bI_{r_1}+\beta\mathbf{1}_{r_1} \\
\bR_{22}&=\alpha\bI_{r_2}+\beta\mathbf{1}_{r_2}\nonumber \\
\bR_{12}&=\beta\mathbf{1}_{r_1\times r_2},\nonumber
\end{align}
such that
\begin{align*}
\alpha&=1-\frac{2}{\pi}\nonumber\\
\beta&=\frac{2}{\pi}+\frac{1}{\pi d}.\nonumber
\end{align*}
Let
\begin{align*}
\bDelta=\bD-\bR= \begin{bmatrix}
        \bDelta_{11}& \bDelta_{12} \\[0.3em]
       \bDelta_{12}^t & \bDelta_{22}
     \end{bmatrix}.
\end{align*}
Note that to simplify notations, we make the dependency of these matrices to $d$, $r_1$ and $r_2$ implicit. Using Theorem 2.1 of reference \cite{el2010spectrum}, under the assumptions of the theorem, as $d,r_1\to\infty$, we have $\|\bDelta_{11}\|\to 0$, $\|\bDelta_{22}\|\to 0$ and $\|\bDelta_{12}\|\to 0$ in probability. Moreover, we have

\begin{align}\label{eq:asym-pf1}
\bD/\bD_{11}&=\bD_{22}-\bD_{12}^t\bD_{11}^{-1}\bD_{12}\\
&=(\bR_{22}+\bDelta_{22})-(\bR_{12}+\bDelta_{12})^t(\bR_{11}+\bDelta_{11})^{-1}(\bR_{12}+\bDelta_{12}).\nonumber
\end{align}
Since $\lambda_{\text{min}}(\bR_{11})=1-2/\pi$, using Lemma \ref{lem:inversion-perturbed}, we have
\begin{align}
(\bR_{11}+\bDelta_{11})^{-1}=\bR_{11}^{-1}+\bR_{11}^{-1}\tilde{\bDelta_{11}}\bR_{11}^{-1},
\end{align}
where $\|\tilde{\bDelta}\|\to 0$ in probability. Using this equation in \eqref{eq:asym-pf1}, we have
\begin{align}\label{eq:asym-pf2}
\bD/\bD_{11}=\bZ_1+\bZ_2
\end{align}
where
\begin{align}\label{eq:asym-pf3}
\bZ_1=\bR_{22}-\bR_{12}^t\bR_{11}^{-1}\bR_{12}
\end{align}
and
\begin{align}\label{eq:asym-pf4}
\bZ_2&=\bDelta_{22}-\bDelta_{12}^t\bR_{11}^{-1}\bR_{12}-\bDelta_{12}^t\bR_{11}^{-1}\bDelta_{12}\\
&-\bDelta_{12}^t\bR_{11}^{-1}\tbDelta_{11}\bR_{11}^{-1}\bR_{12}-\bDelta_{12}^t\bR_{11}^{-1}\tbDelta_{11}\bR_{11}^{-1}\bDelta_{12}\nonumber\\
&-\bR_{12}^t\bR_{11}^{-1}\bDelta_{12}-\bR_{12}^t\bR_{11}^{-1}\tbDelta_{11}\bR_{11}^{-1}\bR_{12}-\bR_{12}^t\bR_{11}^{-1}\tbDelta_{11}\bR_{11}^{-1}\bDelta_{12}.\nonumber
\end{align}
First, we show that as $d,r_1\to\infty$, $\|\bZ_2\|\to 0$ in probability. Note that using Lemma \ref{lem:inverse-kernel}, we have
\begin{align}\label{eq:exact-inverse-R11}
\bR_{11}^{-1}=\frac{1}{\alpha} \bI_{r_1}-\frac{\beta}{\alpha^2+\alpha\beta r_1} \mathbf{1}_{r_1}.
\end{align}
Therefore, we have
\begin{align}
\mathbf{1}_{r_2\times r_1}\bR_{11}^{-1}=\frac{1}{\alpha+\beta r_1}\mathbf{1}_{r_2\times r_1}.
\end{align}
Thus, we have
\begin{align}\label{eq:asym-pf5}
\| \mathbf{1}_{r_2\times r_1}\bR_{11}^{-1}\|\leq c_1
\end{align}
for sufficiently large $r_1$. Similarly, we have
\begin{align}\label{eq:asym-pf6}
 \|\bR_{11}^{-1}\|\leq c_2,
\end{align}
for sufficiently large $r_1$. Using \eqref{eq:asym-pf5} and \eqref{eq:asym-pf6} in \eqref{eq:asym-pf4}, it is straightforward to show that as $d,r_1\to\infty$, $\|\bZ_2\|\to 0$ in probability.

Next, we characterize $\|\bZ_1\|$. We have
\begin{align}
\bZ_1&=\alpha\bI_{r_2}+\beta \mathbf{1}_{r_2}-\beta^2 \mathbf{1}_{r_2\times r_1} \bR_{11}^{-1}\mathbf{1}_{r_1\times r_2}\\
&=\alpha \bI_{r_2}+\frac{\alpha \beta}{\alpha+\beta r_1}\mathbf{1}_{r_2}.\nonumber
\end{align}
Therefore, we have
\begin{align}\label{eq:gamma-dep}
 \|\bZ_1\|&=\alpha\left(1+\frac{\beta r_2}{\alpha+\beta r_1}\right)\\
 &=\left(1-\frac{2}{\pi}\right)\left(1+\left(1-\frac{\pi-2}{\gamma+\pi-2+2r_1}\right)\frac{r_2}{r_1}\right)\nonumber\\
 &=\left(1-\frac{2}{\pi}\right)\left(1+\frac{r_2}{r_1}\right),\nonumber
 \end{align}
as $r_1\to\infty$. This completes the proof.

\subsection{Proof of Proposition \ref{prop:scale}}\label{proof:prop:scale}
Since $\bq^*$ is a vector in $\mathbb{R}^{r^*}$ whose components are non-negative, we can write
\begin{align}\label{eq:q-proj}
\bq^*=\frac{\|\bq^*\|_{1}}{r^*} \mathbf{1}_{r^*\times 1}+\bq_2^*,
\end{align}
where $\bq_2^*$ is orthogonal to the vector $\mathbf{1}_{r^*\times 1}$. Therefore, we have

\begin{align}\label{eq:w=0-loss}
L(\bW=0)&=\frac{1}{4}\|\sum_{i=1}^{r^*}\bw_i^*\|^2+\frac{1}{4}\left(\bq^*\right)^t \psi[\bK_{\cL^*}] \bq^*\\
&\geq \frac{1}{4} \left(\bq^*\right)^t \left((1-\frac{2}{\pi})\bI_{r^*}+(\frac{2}{\pi}+\frac{1}{\pi d})\mathbf{1}_{r^*\times r^*} \right) \bq^*\nonumber\\
&=\frac{1}{4}(1-\frac{2}{\pi})\|\bq^*\|^2+\frac{1}{2\pi}\|\bq^*\|_{1}^2\nonumber\\
&\geq \frac{1}{4} \|\bq^*\|^2,\nonumber
\end{align}
where the first step follows from Theorem \ref{thm:global-mismatch}, the second step follows from \eqref{eq:asym-R}, the third step follows from \eqref{eq:q-proj} and the fact that $d\to\infty$, and the last step follows from the fact that $\|\bq^*\|_{1}\geq \|\bq^*\|$. Using \eqref{eq:w=0-loss} in Theorem \ref{thm:asym} completes the proof.

\subsection{Proof of Lemma \ref{lem:z-bad-local}}\label{proof:z-bad-local}
To simplify notations, define
\begin{align*}
\bD= \psi[\bK]= \begin{bmatrix}
        \bD_{11}& \bD_{12} \\[0.3em]
       \bD_{12}^t & \bD_{22}
     \end{bmatrix}\succeq 0
\end{align*}
We also use $\bU$ instead of $\bU_{\cL}$.

Let $\bw_j^+ = \sum_{i: \bw_i = \|\bw_i\|\bu_j}\|\bw_i\|$ and
$\bw_j^- = \sum_{i: \bw_i = -\|\bw_i\|\bu_j}\|\bw_i\|$. Thus, we have
\begin{align*}
\bw_j^+ - \bw_j^- = s_jq_j.
\end{align*}
Hence,
\begin{align*}
\sum_{i = 1}^k \bw_i = \sum_{j=1}^{r_1}\left(\bw_j^+-\bw_j^-\right)\bu_j = \sum_{j=1}^{r_1}s_jq_j\bu_j = \bU\bS\bq.
\end{align*}
Therefore, equation \eqref{eq:cond-local-bad} implies that
\begin{align*}
\bS\bU^t\left(\bU\bS\bq - \bw_0\right) + \bD_{11}\bq - \bD_{12}\bq^* = 0.
\end{align*}
Thus,
\begin{align*}
\bq = \left(\bS\bU^t\bU\bS + \bD_{11}\right)^\dagger\left(\bS\bU^t\bw_0+\bD_{12}\bq^*\right)
\end{align*}
and
\begin{align*}
-\bS\bU^t\bz &= \bD_{11}\bq - \bD_{12}\bq^* \\
&= \bD_{11}\left(\bS\bU^t\bU\bS+\bD_{11}\right)^\dagger\bS\bU^t\bw_0 + \left(\bD_{11}\left(\bS\bU^t\bU\bS+\bD_{11}\right)^\dagger-\bI\right)\bD_{12}\bq^*.
\end{align*}
Thus,
\begin{align*}
\bz = -\left(\bU\bS\bS^t\bU^t\right)^{-1}\bU\bS\left[\bD_{11}\left(\bS\bU^t\bU\bS+\bD_{11}\right)^\dagger\bS\bU^t\bw_0 + \left(\bD_{11}\left(\bS\bU^t\bU\bS+\bD_{11}\right)^\dagger-\bI\right)\bD_{12}\bq^*\right].
\end{align*}
\subsection{Proof of Theorem \ref{thm:bad-local}}\label{proof:thm:bad:local}
To simplify notations, define
\begin{align*}
\bD= \psi[\bK]= \begin{bmatrix}
        \bD_{11}& \bD_{12} \\[0.3em]
       \bD_{12}^t & \bD_{22}
     \end{bmatrix}\succeq 0
\end{align*}
We also use $\bU$ instead of $\bU_{\cL}$.

Under assumptions \ref{ass:bad-local}, \eqref{eq:z} simplifies to
\begin{align*}
\bz = -\left(\bU\bU^t\right)^{-1}\bU\bD_{11}\left(\bD_{11}\left(\bU^t\bU+\bD_{11}\right)^{-1}-\bI\right)\bD_{12}\bq^*.
\end{align*}
Using the Woodbury matrix identity,
\begin{align*}
\left(\bD_{11} + \bU^t\bU\right)^{-1} = \bD_{11}^{-1} -\bD_{11}^{-1}\bU^t\left(\bI + \bU\bD_{11}^{-1}\bU^t\right)^{-1}\bU\bD_{11}^{-1}.
\end{align*}
Hence,
\begin{align*}
\bz = \left(\bI + \bU\bD_{11}^{-1}\bU^t\right)^{-1}\bU\bD_{11}^{-1}\bD_{12}\bq^*.
\end{align*}
Therefore,
\begin{align*}
\left\langle \bz, \left(\bI + \bU\bD_{11}^{-1} \bU^t\right)\bz\right\rangle =
\left\langle\bq^*,\bD_{12}^t\bD_{11}^{-1}\bU^t\left(\bI + \bU\bD_{11}^{-1}\bU^t\right)^{-1}\bU\bD_{11}^{-1}\bD_{12}\bq^*\right\rangle.
\end{align*}
Replacing this in \eqref{eq:bad-loss}, we get
\begin{align*}
L(\bW) = \frac{1}{4}\left\langle\bq^*, \left(\bD_{22} - \bD_{12}^t\bD_{11}^{-1/2}\left(\bI - \bD_{11}^{-1/2}\bU^t\left(\bI+\bU\bD_{11}^{-1}\bU^t\right)^{-1}\bU\bD_{11}^{-1}\right)\bD_{11}^{-1/2}\bD_{12}\right)\bq^*\right\rangle.
\end{align*}
Note that we can write
\begin{align*}
\bD_{22} - \bD_{12}^t\bD_{11}^{-1/2}\left(\bI - \bD_{11}^{-1/2}\bU^t\left(\bI+\bU\bD_{11}^{-1}\bU^t\right)^{-1}\bU\bD_{11}^{-1}\right)\bD_{11}^{-1/2}\bD_{12} = \widetilde\bD/\bD_{11},
\end{align*}
where
\begin{align*}
&\widetilde\bD =
\begin{bmatrix}
    \widetilde\bD_{11} & \bD_{12} \\
    \bD_{12}^t & \bD_{22}\\
\end{bmatrix},\\
&\widetilde\bD_{11} = \bD_{11}^{1/2}\left(\bI - \bD_{11}^{-1/2}\bU^t\left(\bI+\bU\bD_{11}^{-1}\bU^t\right)^{-1}\bU\bD_{11}^{-1}\right)^{-1}\bD_{11}^{1/2}.
\end{align*}
Using the Woodbury matrix identity one more time leads to
\begin{align*}
\left(\bI - \bD_{11}^{-1/2}\bU^t\left(\bI+\bU\bD_{11}^{-1}\bU^t\right)^{-1}\bU\bD_{11}^{-1}\right)^{-1} &= \bI - \bD_{11}^{-1/2}\bU^t\left(-\bI-\bU\bD_{11}^{-1}\bU^t+\bU\bD_{11}^t\bU^t\right)\bU\bD_{11}^{-1/2}\\
&= \bI + \bD_{11}^{-1/2}\bU^t\bU\bD_{11}^{-1/2}.
\end{align*}
Thus,
\begin{align*}
\widetilde\bD_{11} = \bD_{11}+ \bU^t\bU, \;\;\;\;
&\widetilde\bD =
\begin{bmatrix}
    \bD_{11}+\bU^t\bU & \bD_{12} \\
    \bD_{12}^t & \bD_{22}\\
\end{bmatrix},\\
\end{align*}
and
\begin{align*}
L(\bW) = \frac{1}{4}\left\langle\bq^*, \left(\widetilde\bD/\bD_{22}\right)\bq^*\right\rangle.
\end{align*}
This completes the proof.

\subsection{Proof of Theorem \ref{thm:asymp2}}\label{subsec:proof:thm:asym2}
To simplify notations, we use $\bU$ instead of the $\bU_{\cL}$. Moreover, we define
\begin{align*}
\psi[\bK]= \begin{bmatrix}
        \bD_{11}& \bD_{12} \\[0.3em]
       \bD_{12}^t & \bD_{22}
     \end{bmatrix},
\end{align*}
and
\begin{align*}
\widetilde\bD_{11} = \bD_{11}+ \bU^t\bU, \;\;\;\;
&\widetilde\bD =
\begin{bmatrix}
    \bD_{11}+\bU^t\bU & \bD_{12} \\
    \bD_{12}^t & \bD_{22}\\
\end{bmatrix}.\\
\end{align*}
Moreover, let
\begin{align*}
\bR= \begin{bmatrix}
        \bR_{11}& \bR_{12} \\[0.3em]
       \bR_{12}^t & \bR_{22}
     \end{bmatrix},
\end{align*}
where
\begin{align}\label{eq:asym-R}
\bR_{11}&=\alpha\bI_{r_1}+\beta\mathbf{1}_{r_1}+\bU^t\bU \\
\bR_{22}&=\alpha\bI_{r_2}+\beta\mathbf{1}_{r_2}\nonumber \\
\bR_{12}&=\beta\mathbf{1}_{r_1\times r_2},\nonumber
\end{align}
such that
\begin{align*}
\alpha&=1-\frac{2}{\pi}\nonumber\\
\beta&=\frac{2}{\pi}+\frac{1}{\pi d}.\nonumber
\end{align*}
Let
\begin{align*}
\bDelta= \bR-\widetilde\bD= \begin{bmatrix}
        \bDelta_{11}& \bDelta_{12} \\[0.3em]
       \bDelta_{12}^t & \bDelta_{22}
     \end{bmatrix}.
\end{align*}
Using the result of Theorem \ref{thm:bad-local}, we have
\begin{align*}
L\left(\bW\right) = \frac{1}{4}\left\langle\bq^*, \left(\widetilde\bD/\bD_{22}\right)\bq^*\right\rangle \leq \frac{1}{4}\left\|\left(\widetilde\bD/\bD_{22}\right)\right\|_2\left\|\bq^*\right\|_2^2.
\end{align*}
Similar to the proof of Theorem \ref{thm:asym}, the $\bDelta$ matrix and the $1/d$ term of $\beta$ have negligible effects in the asymptotic regime. Hence, it is sufficient
to bound $\left\|\bR/\bR_{11}\right\|_2$. We have
\begin{align}\label{eq:setare}
\bR/\bR_{11}=\left(\beta \mathbf{1}_{r_2}+\alpha \bI_{r_2}\right)-\beta^2\left(\beta \mathbf{1}_{r_1}+\alpha \bI_{r_1}+\bU^t\bU\right)^{-1}\mathbf{1}_{r_1\times r_2}.
\end{align}
Note that if $\bu\in\mathbb{R}^{r_2}$ where $\|\bu\|=1$ and $<\bu,\mathbf{1}>=0$, we have
\begin{align}\label{eq:setare0}
\left(\bR/\bR_{11}\right)\bu=\alpha \bu
\end{align}
which leads to $\|(\bR/\bR_{11})\bu\|=\alpha$ and $<\bu,(\bR/\bR_{11}\bu)>=\alpha$. Moreover, we have
\begin{align}\label{eq:setare2}
\lim_{d\to\infty} \frac{1}{r_2} \left<\mathbf{1},(\bR/\bR_{11})\mathbf{1}\right>= \lim_{d\to\infty} \frac{1}{r_2} \left<\mathbf{1}_{r_2},(\bR/\bR_{11})\right>.
\end{align}
Using the Woodbury matrix identity and Lemma \ref{lem:inverse-kernel}, we have
\begin{align}\label{eq:setare3}
&\left(\frac{2}{\pi}\mathbf{1}_{r_1}+\alpha \bI_{r_1}+\bU^t\bU\right)^{-1}=\left(\frac{2}{\pi}+\alpha \bI_{r_1}\right)^{-1}\\
&-\left(\frac{2}{\pi}+\alpha \bI_{r_1}\right)^{-1}\bU^t \left(\bI+\bU \left(\frac{2}{\pi}\mathbf{1}_{r_1}+\alpha \bI_{r_1} \right)^{-1}\bU^t \right)^{-1} \bU \left(\frac{2}{\pi}+\alpha \bI_{r_1}\right)^{-1}\nonumber\\
&=\left(\frac{1}{\alpha}\bI_{r_1}-\frac{2}{\alpha(\pi\alpha+2r_1)}\mathbf{1}_{r_1}\right)\nonumber\\
&-\left(\frac{1}{\alpha}\bI_{r_1}-\frac{2}{\alpha(\pi\alpha+2r_1)}\mathbf{1}_{r_1}\right)\bU^t \left(\bI+\bU \left(\frac{1}{\alpha}\bI_{r_1}-\frac{2}{\alpha(\pi\alpha+2r_1)}\mathbf{1}_{r_1}\right) \bU^t  \right)^{-1}\bU \left(\frac{1}{\alpha}\bI_{r_1}-\frac{2}{\alpha(\pi\alpha+2r_1)}\mathbf{1}_{r_1}\right).\nonumber
\end{align}
Letting
\begin{align}
\bA:= \bU^t \left(\bI+\bU\left(\frac{1}{\alpha}\bI_{r_1}-\frac{2}{\alpha(\pi\alpha+2r_1)}\mathbf{1}_{r_1}\right)\bU^t\right)^{-1}\bU,
\end{align}
we have
\begin{align}
\frac{4}{\pi^2}\mathbf{1}_{r_2\times r_1}\left(\frac{2}{\pi}\mathbf{1}_{r_1}+\alpha \bI_{r_1}+\bU^t\bU\right)^{-1} \mathbf{1}_{r_1\times r_2}
= \frac{4}{\pi^2} \left(\frac{r_1}{2/\pi r_1+\alpha}\mathbf{1}_{r_2} -\frac{1}{(2/\pi r_1+\alpha)^2} \left<\mathbf{1}_{r_1},\bA \right>\mathbf{1}_{r_2}\right).
\end{align}
Therefore, using \eqref{eq:setare}, we have
\begin{align}
\frac{1}{r_2} \left<\mathbf{1},\bR/\bR_{11} \right>=\frac{2r_2}{\pi}+\alpha-\frac{(4/\pi^2)r_1r_2}{2/\pi r_1+\alpha}+\frac{(4/\pi^2)\left<\mathbf{1}_{r_1},\bA \right>r_2}{(2r_1/\pi+\alpha)^2}.
\end{align}
Therefore, we have
\begin{align}
\lim_{d\to\infty} \frac{1}{r_2} \left<\mathbf{1},\bR/\bR_{11} \right>=\alpha+\left<\mathbf{1}_{r_1},\bA \right>\frac{r_2}{r_1^2}.
\end{align}
On the other hand, since the matrix $1/\alpha \bI-2/(\alpha (\pi \alpha+2r_1))\mathbf{1}_{r_1}$ is positive semidefinite, we have
\begin{align}
\left<\mathbf{1}_{r_1},\bA\right>\leq \left<\mathbf{1}_{r_1},\bU^t \bU\right>=\|\bU\mathbf{1}_{r_1}\|^2.
\end{align}
Since columns of $\bU$ are randomly generated (e.g., using a Gaussian distribution), we have $\|\bU\|\leq 1+\sqrt{\gamma}+\mu$ with probability $1-2\exp(-\mu^2 d)$. Thus, $\|\bU\mathbf{1}\|^2\leq r(1+\sqrt{\gamma}+\mu)^2$ with probability $1-2\exp(-\mu^2 d)$. Thus, with high probability,
\begin{align}
\lim_{d\to\infty} \frac{1}{r_2} \left<\mathbf{1},\bR/\bR_{11}\right>\leq 1-\frac{2}{\pi}+(1+\sqrt{\gamma}+\mu)^2 \frac{r_2}{r_1}.
\end{align}
This along with \eqref{eq:setare0} lead to
\begin{align}
\|\bR/\bR_{11}\|\leq 1-\frac{2}{\pi}+(1+\sqrt{\gamma}+\mu)^2\frac{r_2}{r_1}
\end{align}
with probability $1-2\exp(-\mu^2 d)$. Replacing this in \eqref{eq:bad-loss2} completes the proof.

\subsection{Proof of Lemma \ref{lemma:relucontinuity}}\label{subsec:proof-lemma-relu-cont}
We consider four different cases for signs of $\left\langle \bw_1, \bx\right\rangle$, $\left\langle \bw_2, \bx\right\rangle$.
\begin{enumerate}
\item $\left\langle \bw_1, \bx\right\rangle \leq 0$, $\left\langle \bw_2, \bx\right\rangle \leq 0$: In this case,
$\phi\left(\left\langle\bw_1, \bx\right\rangle\right) = \phi\left(\left\langle\bw_2, \bx\right\rangle\right) = 0$. Hence, the lemma
statement is trivial.
\item $\left\langle \bw_1, \bx\right\rangle \geq 0$, $\left\langle \bw_2, \bx\right\rangle \geq 0$: We have
\begin{align*}
\phi\left(\left\langle\bw_1, \bx\right\rangle\right) - \phi\left(\left\langle\bw_2, \bx\right\rangle\right)  = \left\langle\bw_1, \bx\right\rangle - \left\langle\bw_2, \bx\right\rangle = \left\langle\bw_1-\bw_2, \bx\right\rangle \leq \left\|\bw_1-\bw_2\right\|_2\|\bx\|_2.
\end{align*}
\item $\left\langle \bw_1, \bx\right\rangle \geq 0$, $\left\langle \bw_2, \bx\right\rangle \leq 0$: In this case we have
\begin{align*}
\phi\left(\left\langle\bw_1, \bx\right\rangle\right) - \phi\left(\left\langle\bw_2, \bx\right\rangle\right)  = \left\langle\bw_1, \bx\right\rangle = \left\langle\bw_1-\bw_2, \bx\right\rangle + \left\langle\bw_2, \bx\right\rangle \leq \left\langle\bw_1-\bw_2, \bx\right\rangle \leq \left\|\bw_1-\bw_2\right\|_2\|\bx\|_2.
\end{align*}
\item $\left\langle \bw_1, \bx\right\rangle \leq 0$, $\left\langle \bw_2, \bx\right\rangle \geq 0$: After switching the roles of $\bw_1, \bw_2$, the proof is the same as it was in case (3).
\end{enumerate}
Therefore, the lemma statement holds in all four cases for signs of $\left\langle \bw_1, \bx\right\rangle$, $\left\langle \bw_2, \bx\right\rangle$. This completes the proof.

\subsection{Proof of Lemma \ref{lemma:spherenet}}\label{subsec:proof:lem:spherenet}
We use the result of Lemma 5.2 in \cite{vershynin2010introduction}. Let $|\mathcal U|$ be an $\eps$-net
of $H^{n-1}$, an arbitrary unit hemisphere in $n$-dimensions, where
\begin{align*}
\eps = \sqrt{2-2\cos \delta}.
\end{align*}
Using Lemma 5.2 in \cite{vershynin2010introduction},
\begin{align*}
|\mathcal U| \le \frac{1}{2}\left(1 + \frac{\sqrt{2}}{\sqrt{1-\cos \delta}}\right)^n.
\end{align*}
Now we show that $\mathcal U$ is an angular $\delta$-net of $S^{n-1}$. Let $\bv\in \reals^n$ be
an arbitrary vector in $S^{n-1}$. Note that $\mathcal U \cup\mathcal U^-$ is an $\epsilon$-net
for the unit sphere $S^{n-1}$. Hence,
there exists a vector $\bu \in \mathcal U \cup\mathcal U^-$, such that
\begin{align}
\|\bu - \bv\|_2^2 \leq \eps^2 = 2-2\cos\delta.
\end{align}
Thus,
\begin{align*}
\|\bu\|_2^2 + \|\bv\|_2^2 - 2\|\bu\|\|\bv\|\cos \theta_{\bu,\bv} = 2-2\cos\theta_{\bu,\bv} \leq 2 - 2\cos \delta.
\end{align*}
Therefore,
\begin{align*}
\cos \theta_{\bu,\bv} \ge \cos\delta \Rightarrow \theta_{\bu,\bv}\le \delta.
\end{align*}
Hence, for every vector $\bv \in S^{n-1}$, there exists $\bu \in \mathcal U \cup\mathcal U^-$, such that
\begin{align*}
\theta_{\bu,\bv}\le \delta.
\end{align*}
This completes the proof.

\subsection{Proof of Theorem \ref{thm:minimaxrisk}}\label{subsec:proof:thm:minimax}
Let $f^*(\bx) = h(\bx; \bw_1^*, \bw_2^*, \dots, \bw_k^*)$, for a set of weights $\bw_i^* \in \mathcal W$,
be an arbitrary member of $\mathcal F$.
Since $\mathcal U$ is an angular $\delta$-net of $\mathcal W$, for $i = 1,2,\dots,k$,
we can take $\tilde\bu_i\in \mathcal U\cup\mathcal U^-$ such that $\theta_{\tilde\bu_i, \bw_i^*}\le \delta$.
For $i=1,2,\dots, k$, take $\tilde\bw_i \in \mathcal W_\mathcal U$ as
\begin{align*}
\tilde\bw_i = \frac{\|\bw^*_i\|}{\|\tilde\bu_i\|}\tilde\bu_i.
\end{align*}
Note that we have
\begin{align}
\label{eq:normw*-wtilde}
\|\bw_i^* - \tilde\bw_i\|_2^2 &= \|\bw_i^*\|_2^2 + \|\tilde\bw_i\|_2^2 - 2\|\tilde\bw_i\|_2 \|\bw_i^*\|_2\cos\theta_{\tilde\bu_i, \bw_i^*}\nonumber\\
&= 2\|\bw_i^*\|_2^2(1-\cos\theta_{\tilde\bu_i, \bw_i^*}) \le 2\|\bw_i^*\|_2^2(1-\cos\delta).
\end{align}
Taking $\tilde f(\bx) = h(\bx; \tilde\bw_1, \tilde\bw_2, \dots, \tilde\bw_k)\in \mathcal F_\mathcal L$, we have
\begin{align*}
\min_{\hat f \in \mathcal F_\mathcal L} \EE\left|f(\bx) - \hat f(\bx)\right| \leq \EE|f(\bx) - \tilde f(\bx)|
&\le \EE |h(\bx;\bw_1^*,\bw_2^*,\dots, \bw_k^*)-h(\bx;\tilde\bw_1,\tilde\bw_2,\dots, \tilde\bw_k^*)|\\
&\le \EE\left|\sum_{i=1}^k \phi\left(\left\langle\bw_i^*,\bx\right\rangle\right)-\sum_{i=1}^k \phi\left(\left\langle\tilde\bw_i,\bx\right\rangle\right)\right|\\
&\le \EE \sum_{i=1}^k\left|\phi\left(\left\langle\bw_i^*,\bx\right\rangle\right) - \phi\left(\left\langle\tilde\bw_i,\bx\right\rangle\right)\right|.
\end{align*}
Using Lemma \ref{lemma:relucontinuity}, we get
\begin{align*}
\min_{\hat f \in \mathcal F_\mathcal L} \EE\left|f(\bx) - \hat f(\bx)\right| \leq \left(\sum_{i=1}^k\|\bw_i^*-\tilde\bw_i\|_2\right) \EE\|\bx\|_2 = \sqrt{d}\sum_{i=1}^k\|\bw_i^*-\tilde\bw_i\|_2
\end{align*}
Hence, by \eqref{eq:normw*-wtilde}
\begin{align}
\min_{\hat f \in \mathcal F_\mathcal L}\EE\left|f(\bx) - \hat f(\bx)\right| \leq \sqrt{2d(1-\cos \delta)}\sum_{i=1}^k\|\bw_i^*\|_2 \le
kM\sqrt{2d(1-\cos\delta)}.
\end{align}
Thus,
\begin{align*}
\mathcal R\left(\mathcal F_\mathcal L, \mathcal F\right) = \max_{f\in \mathcal F}\min_{\hat f \in \mathcal F_\mathcal V} \EE\left|f(\bx) - \hat f(\bx)\right| \le kM\sqrt{2d(1-\cos\delta)}.
\end{align*}

\begin{comment}
\subsection{Proof of Theorem \ref{thm:single-neuron}}\label{subsec:proof-single-neuron}
First we compute the gradient of $L(\bw_1)$:
\begin{align}\label{eq:grad-single1}
\bigtriangledown_{\bw_1} L(\bw_1)&= 2 \EE\left[\left(\mathbf{1}\{\bw_1^t \bx>0\}\bx\right) \left(\mathbf{1}\{\bw_1^t \bx>0\}\bw_1^t \bx- \mathbf{1}\{(\bw_1^*)^t \bx>0\}(\bw_1^*)^t \bx\right) \right]\\
&= 2\EE\left[\mathbf{1}\{\bw_1^t \bx>0\} \bx \bx^t \right]\bw_1-2 \EE\left[ \mathbf{1}\{\bw_1^t \bx>0, (\bw_1^*)^t\bx>0\} \bx \bx^t \right]\bw_2\nonumber\\
&=\left(1-\frac{\sin(\theta(\bw_1,\bw_1^*))}{\pi}\frac{\|\bw_1^*\|}{\|\bw_1\|}\right)\bw_1-\left(1-\frac{\theta(\bw_1,\bw_1^*)}{\pi} \right)\bw_1^*\nonumber
\end{align}
where we use Lemma \ref{lem:cov-truncated} in the last step. Therefore, to have $\bw_1$ as a local optimizer of optimization \eqref{opt:neural-network}, we need to have
\begin{align}
\left(1-\frac{\sin(\theta(\bw_1,\bw_1^*))}{\pi}\frac{\|\bw_1^*\|}{\|\bw_1\|}\right)\bw_1=\left(1-\frac{\theta(\bw_1,\bw_1^*)}{\pi} \right)\bw_1^*.
\end{align}
This equation holds if and only if $\bw_1=\bw_1^*$. This completes the proof.

\subsection{Proof of Theorem \ref{thm:hessian-single}}\label{subsec:proof-hessian-single}
Using Lemma \ref{lem:cov-truncated}, we have
\begin{align}\label{eq:hessian1}
\bigtriangledown_{\bw_1^2}^2 L(\bw_1)=2\EE[\mathbf{1}\{\bw_1^t \bx>0\} \bx \bx^t]=\bI.
\end{align}
This completes the proof.
\end{comment}

% Generated by IEEEtran.bst, version: 1.14 (2015/08/26)

\end{document}